\documentclass{article}
\usepackage{tikz}

\newcommand{\II}{{\mathcal I}}

\usepackage{amsmath,amssymb,amsthm,bm}
\usepackage{thmtools,thm-restate}

\newcommand{\real}{\mathbb{R}}

\newcommand{\V}[1]{{\mathbf{#1}}}

\newcommand{\abs}[1]{ {\left| #1 \right|}}

\newcommand{\sst}{SST}
\newcommand{\imagenet}{ImageNet}

\newcommand{\bert}{BERT}
\newcommand{\resnet}{ResNet152}

\newcommand{\framework}{{\tt Archipelago}}
\newcommand{\archattribute}{{\tt ArchAttribute}}
\newcommand{\archdetect}{{\tt ArchDetect}}

\usepackage{bbm}
\usepackage{scalerel}
\usepackage{breqn}
\usepackage{booktabs}
\usepackage{makecell} 
\usepackage{lipsum}  
\usepackage{wrapfig}
\usepackage{mathtools}
\usepackage{nth}
\usepackage{soul}
\usepackage{cite}
\usepackage{multirow}
\usepackage{float}%
\usepackage[colorlinks=true, linkcolor=blue, citecolor=blue]{hyperref}
\usepackage{colortbl}
\definecolor{darkblue}{rgb}{0.0,0.0,0.5}
\definecolor{lightblue}{rgb}{0.0,0.0,0.8}
\definecolor{LightSteelBlue3}{rgb}{0.635,0.71, 0.804}
\newcommand{\foo}{\color{LightSteelBlue3}\makebox[0pt]{\textbullet}\hskip-0.5pt\vrule width 1pt\hspace{\labelsep}}
\usepackage[preprint,nonatbib]{neurips_2020}
\usepackage[utf8]{inputenc} %
\usepackage[T1]{fontenc}    %
\usepackage{url}            %
\usepackage{booktabs}       %
\usepackage{amsfonts}       %
\usepackage{nicefrac}       %
\usepackage{microtype}      %
\usepackage{subcaption}
\usepackage{graphicx}

\makeatletter
\renewcommand\subsubsection{\@startsection{subsubsection}{3}{\z@}%
                                     {-3.25ex\@plus -1ex \@minus -.2ex}%
                                     {-1.5ex \@plus -.2ex}%
                                     {\normalfont\normalsize\bfseries}}
\makeatother

\title{How does this interaction affect me? \\ Interpretable attribution for feature interactions}

\author{%
  Michael Tsang, Sirisha Rambhatla, Yan Liu\\
  Department of Computer Science\\
  University of Southern California\\
  \texttt{\{tsangm,sirishar,yanliu.cs\}@usc.edu} \\
}

\begin{document}

\maketitle
\vspace{-0.01in}

\begin{abstract}
\vspace{-0.01in}

Machine learning transparency calls for interpretable
explanations of how inputs relate to predictions. Feature attribution is a way to analyze the impact of features on predictions. Feature \emph{interactions} are the contextual dependence between features that jointly impact predictions. There are a  number of methods that extract feature interactions in prediction models; however, the methods that assign attributions to interactions  are either uninterpretable, model-specific, or non-axiomatic.
We propose an interaction attribution and detection framework called {\framework} which addresses 
these problems and is also scalable in real-world settings.
Our experiments on standard annotation labels indicate  our approach provides significantly more interpretable explanations than comparable methods, which is important for 
analyzing the impact of interactions on predictions.
We also provide accompanying visualizations of our approach that give new insights into deep neural networks.

\end{abstract}
\vspace{-0.05in}
\section{Introduction}
\vspace{-0.02in}

The success of state-of-the-art prediction models such as neural networks is driven by their capability to learn complex feature interactions. 
When such models are used to make predictions for users, we may want to know how they personalize to us. Such model behaviors can be explained via \emph{interaction detection}  and \emph{attribution}, i.e. if features influence each other and how these interactions contribute to predictions, respectively. Interaction explanations are useful for applications such as sentiment analysis~\cite{murdoch2018beyond}, image classification~\cite{tsang2020feature}, and recommendation tasks~\cite{tsang2020feature,guo2017deepfm}.

Relevant methods for attributing predictions to feature interactions are black-box explanation methods based on axioms (or principles), but these methods lack interpretability. One of the core issues is that an interaction's importance is not the same as its attribution. Techniques like Shapley Taylor Interaction Index (STI)~\cite{dhamdhere2019shapley} and Integrated Hessians (IH)~\cite{janizek2020explaining} combine these concepts in order to be axiomatic. Specifically, they base an interaction's attribution on non-additivity, i.e. the degree that features non-additively affect an outcome. While non-additivity can be used for interaction detection, it is not interpretable as an attribution measure as we see in Fig.~\ref{fig:motiv}. In addition, neither STI nor IH is tractable for higher-order feature interactions~\cite{sorokina2008detecting,dhamdhere2019shapley}. Hence, there is a need for interpretable, axiomatic, and scalable methods for interaction attribution and corresponding interaction detection.

To this end, we propose a novel framework called {\framework}, which consists of an interaction attribution method, {\archattribute}, and a corresponding interaction detector, {\archdetect}, to
address  the challenges of being interpretable, axiomatic, and scalable.
{\framework}
is named after its ability to provide explanations by  isolating feature interactions, or feature ``islands''.
The inputs to {\framework} are a black-box model $f$ and data instance $\V{x}^{\star}$, and its outputs are a set of interactions and individual features $\{\II\}$ as well as an attribution score $\phi(\II)$ for each of the feature sets $\II$. 

{\archattribute} satisfies attribution axioms by making relatively mild assumptions:
a) disjointness of interaction sets, which is easily obtainable, and
b) the availability of a generalized additive function which is a good approximator to any function, as is leveraged in earlier works~\cite{tsang2017detecting,tsang2018neural,tsang2018can}. On the other hand, 
{\archdetect} circumvents intractability issues of higher-order interaction detection by removing certain uninterpretable higher-order interactions and leveraging a property of feature interactions that allows pairwise interactions to merge for disjoint arbitrary-order interaction detection. In practice, where any assumptions may not hold in real-world settings, {\framework} still performs well.
In particular, {\framework} effectively detects relevant interactions and  is more interpretable than state-of-the-art methods~\cite{dhamdhere2019shapley,jin2019towards,sundararajan2017axiomatic,janizek2020explaining,tsang2018can,grabisch1999axiomatic} when evaluated on  annotation labels in sentiment analysis and image classification. We visualize  {\framework}  explanations on sentiment analysis, COVID-19 prediction on chest X-rays, and ad-recommendation. 

Our main contributions are summarized below.
\vspace{-0.05in}
\begin{itemize}
    \item \textbf{Interaction Attribution:} We propose {\archattribute}, a feature attribution measure that leverages feature interactions.
    It has  advantages of being model-agnostic, interpretable, and runtime-efficient as compared to other state-of-the-art interaction attribution methods.
    \item \textbf{Principled Attribution:} {\archattribute} obeys standard  attribution axioms~\cite{sundararajan2017axiomatic} that are generalized to work for feature sets, and we also propose a new axiom for interaction attribution to respect the additive structure of a function.
    \item \textbf{Interaction Detection:} We propose a complementary feature interaction detector, {\archdetect}, that 
    is also model-agnostic and $\mathcal{O}(p^2)$-efficient  for pairwise and disjoint arbitrary-order interaction detection ($p$ is number of features).
    
\end{itemize}
\vspace{-0.05in}

Our empirical studies on
        {\archdetect} and {\archattribute}
        demonstrate
        their superior properties as compared to state-of-the-art methods.

\begin{figure}[t]
\centering
\scalebox{0.98}{
\begin{tikzpicture}
\node[anchor=south west,inner sep=0] (image) at (0.0,0) {\centering
\includegraphics[scale=0.36,trim={0 0 16.6cm 0cm},clip]{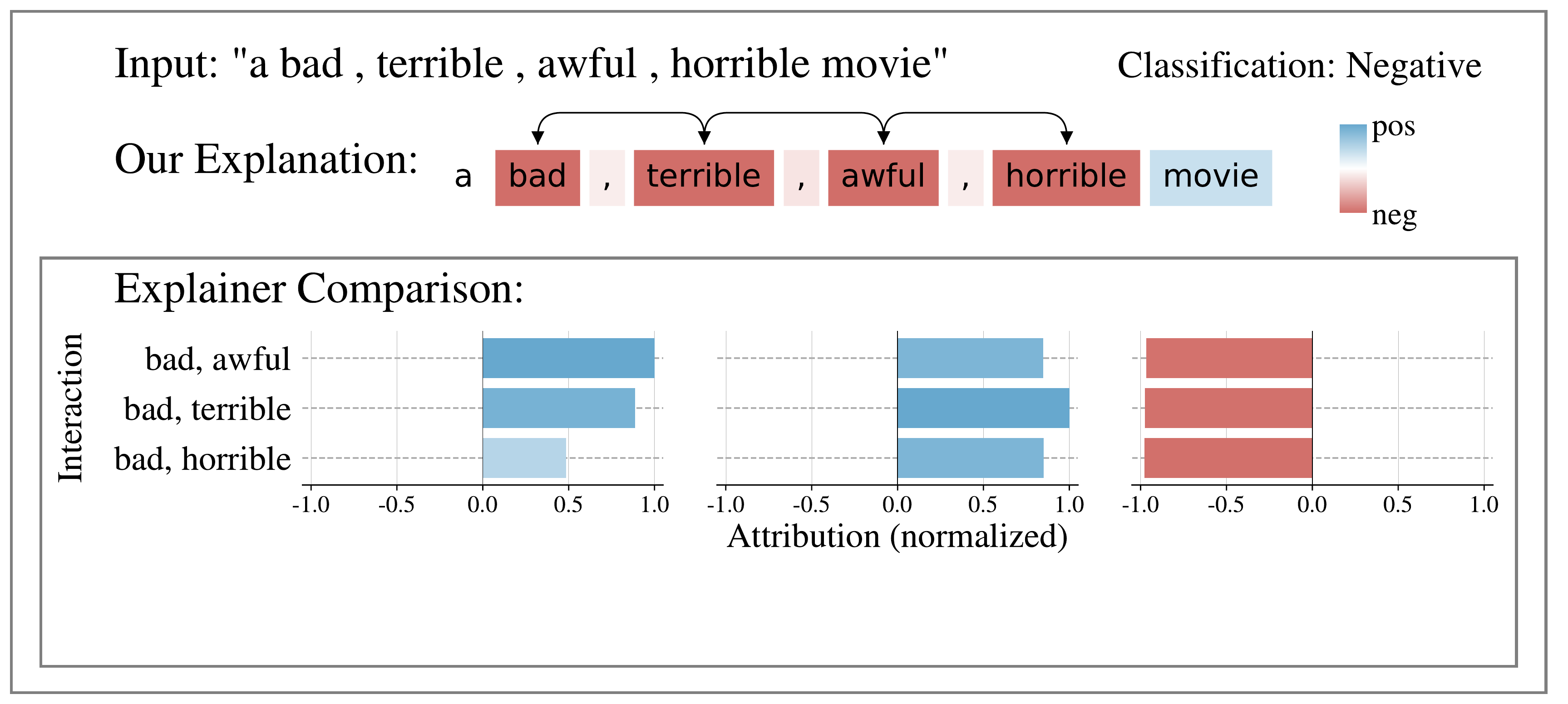}
};
\node[align=center]  at (3.9,0.66) {\footnotesize Integrated Hessians~\cite{janizek2020explaining}};
\node[align=center]  at (7.3,0.66) {\footnotesize\makecell{ Shapley Taylor\\ Interaction Index~\cite{dhamdhere2019shapley}}};
\node[align=center]  at (10.5,0.66) {\footnotesize Our Method};
\end{tikzpicture}}
\vspace{-0.07in}

\caption{Our explanation for the sentiment analysis example of~\cite{janizek2020explaining}. Colors indicate sentiment, and arrows indicate interactions. Compared to other axiomatic interaction explainers, only our work corroborates our intuition by showing negative attribution among top-ranked interactions.\label{fig:motiv}}
\vspace{-0.15in}
\end{figure}

\vspace{-0.06in}
\section{Notations and Background}
\vspace{-0.06in}
\label{sec:definition}
\label{sec:preliminaries}
We first introduce preliminaries that serve as a basis for  our discussions.

\textbf{Notations:} We use boldface lowercase symbols, such as $\V{x}$, to represent vectors. The $i$-th entry of a vector $\V{x}$ is denoted by $x_i$. For a set $\mathcal{S}$, its cardinality is denoted by $\abs{\mathcal{S}}$, and the operation $\setminus \mathcal{S}$ means all except $\mathcal{S}$. 
For $p$ features 
 in a dataset,
 let $\II$ be a subset of feature indices: $\II\subseteq\{1,2,\dots,p\}$. 
 For a vector $\V{x}\in \real^p$, 
  let $\V{x}_{\II}\in \real^{p}$ be defined element-wise in~\eqref{eq:xi}. In our discussions, a \emph{context} means $\V{x}_{\setminus\II}$.
  
  {
\setlength{\abovedisplayskip}{-14pt}
\setlength{\belowdisplayskip}{-8pt}
   \begin{align}
    (\V{x}_{\II})_i = \left\{\begin{array}{lr}
        x_i, & \text{if } i\in \II\\
        0 & \text{otherwise }\\
        \end{array}\right.
        \label{eq:xi}
     \end{align}
    }

\textbf{Problem Setup:} Let $f$ denote a  black-box model with scalar output. For multi-class classification, $f$ is assumed to be a class logit. We use a target vector $\V{x^{\star}}\in \real^p$ to denote the data instance where we wish to explain $f$, and $\V{x}'\in \real^p$ to denote a \emph{neutral baseline}. Here, the baseline is  a  reference vector
for $\V{x^{\star}}$ and conveys an  ``absence of signal'' as per~\cite{sundararajan2017axiomatic}.
These vectors form the space of $\mathcal{X}\subset \real^{p}$, where each element comes from either $x_i^{\star}$ or $x_i'$, i.e. $\mathcal{X} = \{(x_1, \dots, x_p) \mid x_i \in \{x^{\star}_i, x'_i\}, \forall i = 1,\dots, p \}$.   

\textbf{Feature Interaction:} 
The definition of the feature interaction of interest is formalized as follows.

\begin{restatable}[Statistical Non-Additive Interaction]{definition}{interaction}\label{def:interaction}
A function $f$ contains a statistical non-additive interaction of multiple features indexed in set $\II$
if and only if $f$ cannot be decomposed into a sum of $|\II|$ subfunctions $f_i$ , each excluding the $i$-th interaction variable:
    $f(\V{x}) \neq \sum_{i\in\II} f_i(\V{x}_{\setminus \{i\}})$.
\vspace{-0.035in}
\end{restatable}
Def.~\ref{def:interaction} identifies a non-additive effect among all features $\mathcal{I}$ on the output of function $f$~\cite{friedman2008predictive, sorokina2008detecting, tsang2017detecting}.
For example, this means that 
the function $\text{ReLU}(x_1+x_2)$
creates a feature interaction
because it cannot be represented as an 
addition of univariate functions, i.e., 
 $\text{ReLU}(x_1+x_2)\neq f_1(x_2)$ + $f_2(x_1)$ (Fig.~\ref{fig:relu}). 
 We refer to individual feature effects which do not interact with other features as \emph{main effect}. Higher-order feature interactions are captured by $\abs{\II}>2$, i.e. interactions larger than pairs. Additionally, if a higher-order interaction exists,  all of its subsets also exist as interactions~\cite{sorokina2008detecting, tsang2017detecting}.

\vspace{-0.05in}
\section{{\framework} Interaction Attribution}
\vspace{-0.05in}

\label{sec:attribution}

We begin by presenting our feature attribution measure. Our feature attribution analyzes and assigns scores to detected feature interactions.
Our corresponding interaction detector is presented in \S\ref{sec:detection}.

\vspace{-0.055in}
\subsection{{\archattribute}}
\vspace{-0.055in}
\label{sec:proposed}

Let $\II$ be the set of feature indices that correspond to a desired attribution score. 
Our proposed attribution measure, called {\archattribute}, is given by 
{
\setlength{\abovedisplayskip}{8.5pt}
\setlength{\belowdisplayskip}{-7.5pt}
\begin{align}
    \phi(\II) = f(\V{x}^{\star}_\II+\V{x}'_{\setminus \II}) - f(\V{x}').
    \label{eq:att}
\end{align}}

{\archattribute} essentially  isolates the attribution of $\V{x}^{\star}_\II$ from the surrounding baseline context while also satisfying axioms (\S\ref{sec:axioms}).
We call this isolation an
``island effect'', where
the target features $\left\{x^{\star}_i\right\}_{i\in\II}$ do not specifically interact with the baseline features $\left\{x'_j\right\}_{j\in\mathcal{\setminus \II}}$.
For example, consider sentiment analysis on a phrase $\V{x}^{\star}=~$``not very bad'' with a baseline $\V{x}'=~$``\_ \_ \_'' . 
Suppose that we want to examine the attribution of an interaction $\II$ that corresponds to \{very, bad\} in isolation. In this case, the contextual word ``not'' also interacts with $\II$, which becomes apparent when small perturbations to the word ``not'' causes large changes to prediction probabilities. However, as we move further away from the word ``not'' towards the empty-word ``\_'' in the word-embedding space, small perturbations no longer result in large prediction changes, meaning that ``\_'' does not specifically interact with \{very, bad\}. This intuition motivates our use of the baseline context $\V{x}'_{\setminus\II}$ in~\eqref{eq:att}.

\begin{figure*}[t]
    \centering
    \begin{subfigure}[t]{0.33\textwidth}

    	\vskip 0pt
        \centering
          \includegraphics[scale=0.272,trim={0 0cm 1.35cm 0.25cm},clip]{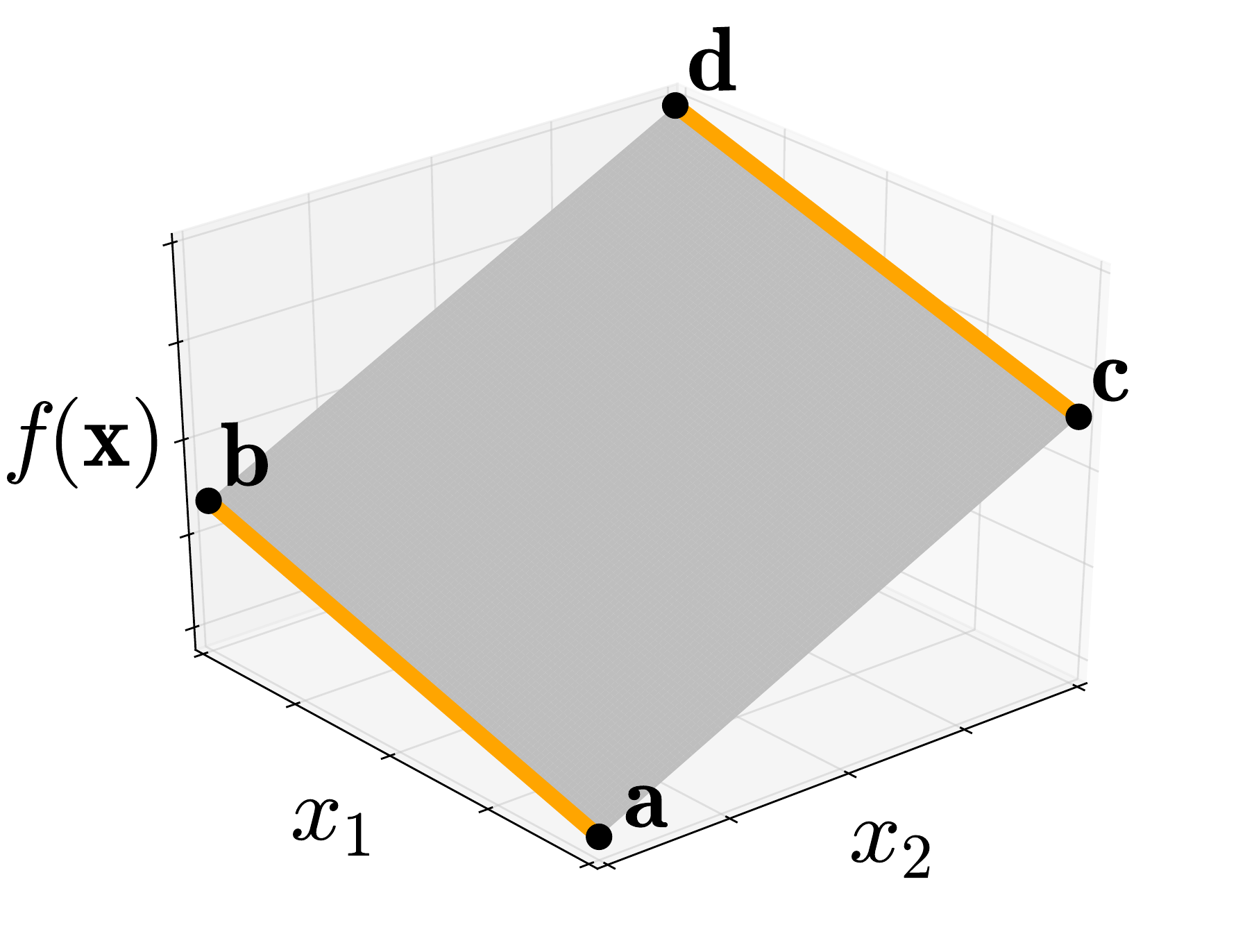}
        \vspace{-0.23in}
        \caption{Additive (linear) function\label{fig:plane}}
    \end{subfigure}%
    \begin{subfigure}[t]{0.33\textwidth}
     	\vskip 0pt
        \centering
        \includegraphics[scale=0.272,trim={0.4cm 0cm 1.35cm 0.25cm},clip]{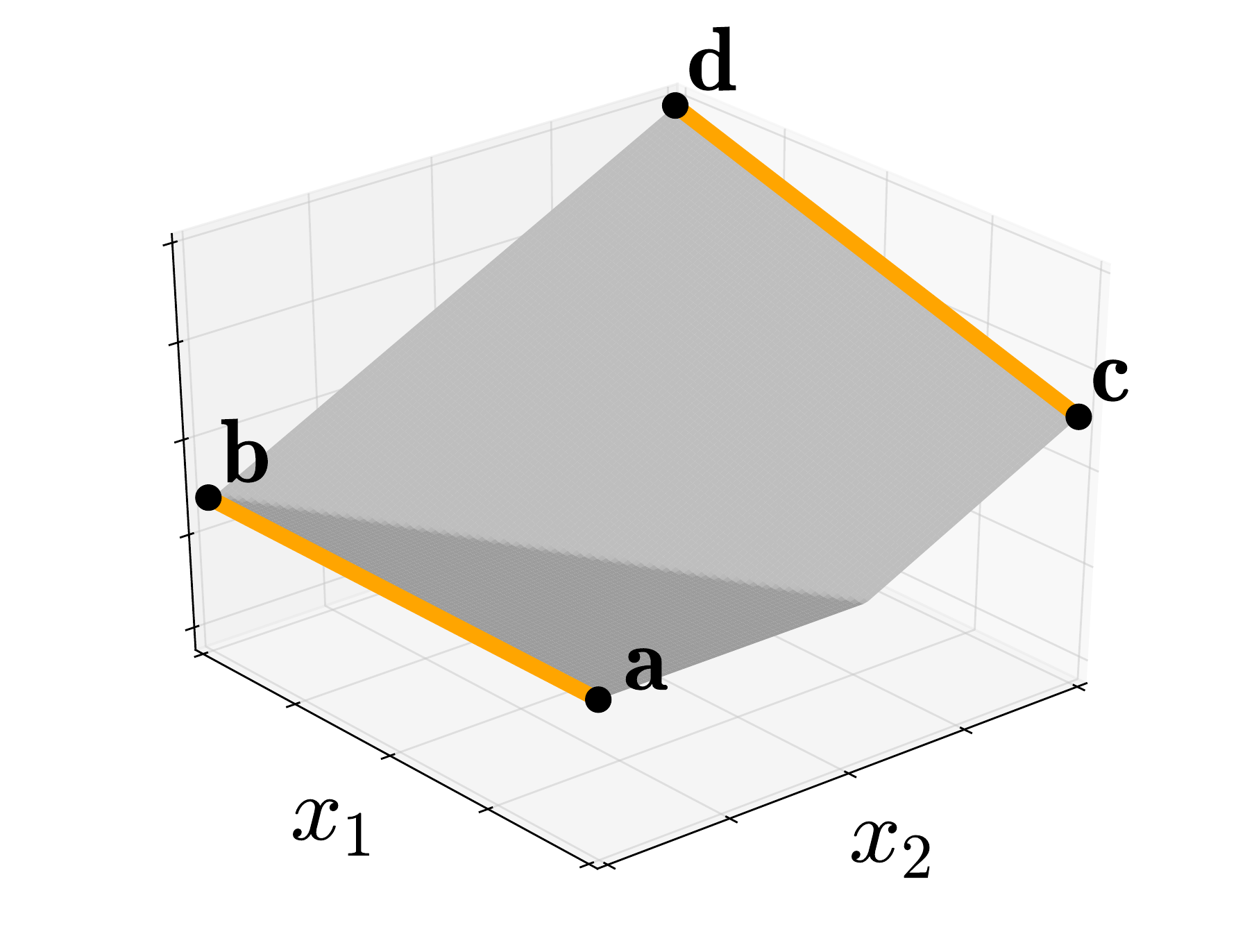}
        \vspace{-0.077in}
        \caption{Non-additive (ReLU) function\label{fig:relu}}
    \end{subfigure}
    \begin{subfigure}[t]{0.33\textwidth}
     	\vskip 0pt
        \centering
        \includegraphics[scale=0.272,trim={1.6cm 0cm 1.35cm 0.25cm},clip]{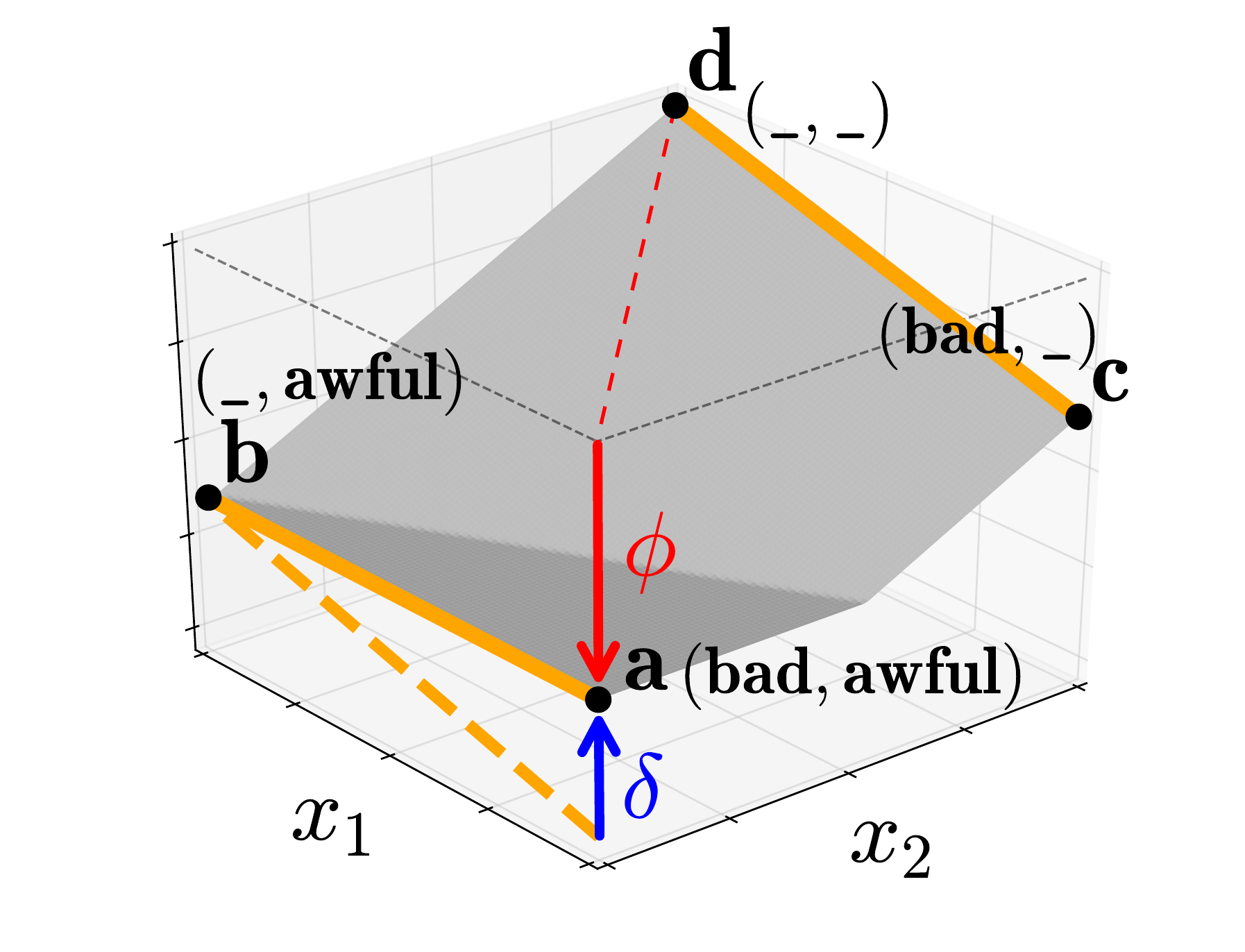}
        \vspace{-0.077in}
        \caption{$\delta$ vs. $\phi$ on a text example (Fig.~\ref{fig:motiv})\label{fig:relu_text}}
    \end{subfigure}
    \vspace{-0.005in}
    \caption{Non-additive interaction for $p=2$ features: The corner points are used to determine if $x_1$ and $x_2$ interact based on their non-additivity on $f$, i.e.
    they interact if
    $ \delta 	\propto (f(\V{a}) - f(\V{b})) - (f(\V{c}) - f(\V{d})) \neq 0$ (\S\ref{sec:discrete}).
    In (c), the attribution of (bad, awful) should be negative via $\phi$~\eqref{eq:att}, but Shapley Taylor Interaction Index uses the positive $\delta$. Note that $\phi$ depends on $\V{a}$ and $\V{d}$ whereas $\delta$ depends on $\V{a}$, $\V{b}$, $\V{c}$, and $\V{d}$. Also, Integrated Hessians is not relevant here since it does not apply to ReLU functions.
    \label{fig:geometric}
    \vspace{-0.105in}
    }
\end{figure*}

\vspace{-0.055in}
\subsection{Axioms}
\vspace{-0.055in}

\label{sec:axioms}

We now show how {\archattribute} obeys standard feature attribution axioms~\cite{sundararajan2017axiomatic}. Since {\archattribute} operates on feature sets, we generalize the notion of standard axioms to feature sets. To this end, we also propose a new axiom, Set Attribution, which allows us to work with feature sets.

Let $\mathcal{S} = \{{\II}_i\}_{i=1}^k$ be all $k$ feature interactions and main effects of $f$ in the space $\mathcal{X}$ (defined in \S\ref{sec:preliminaries}), where  we take the union of  overlapping sets in $\mathcal{S}$. Later in \S\ref{sec:detection}, we explain how to obtain $\mathcal{S}$. 

\textbf{Completeness:}
We consider a generalization of the completeness axiom for which the sum of all attributions equals $f(\V{x}^{\star})-f(\V{x}')$.
The axiom tells us how much feature(s) impact a prediction.

\begin{restatable}[Completeness on $\mathcal{S}$]{lemma}{complete}\label{thm:complete}
The sum of all attributions by {\archattribute} for the disjoint sets in $\mathcal{S}$ equals the difference of $f$ between $\V{x}^{\star}$ and the baseline $\V{x}'$: $f(\V{x}^{\star})-f(\V{x}')$.
\vspace{-0.018in}
\end{restatable}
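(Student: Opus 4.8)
The plan is to exploit the disjointness of the sets in $\mathcal{S}$ together with the fact that, by construction, $\mathcal{S}$ collects every interaction and main effect of $f$, so that no statistical non-additive interaction crosses between two distinct sets $\II_i$ and $\II_j$. This should let me write $f$ in the generalized additive form $f(\V{x}) = \sum_{i=1}^{k} g_i(\V{x}_{\II_i}) + c$, where each $g_i$ depends only on the coordinates indexed by $\II_i$ and the constant $c$ absorbs the contribution of any feature lying in no set (such a feature has neither a main effect nor an interaction, so $f$ is constant in it on $\mathcal{X}$).

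Granting this decomposition, the computation is direct. Evaluating $\phi(\II_i) = f(\V{x}^{\star}_{\II_i} + \V{x}'_{\setminus\II_i}) - f(\V{x}')$ on the additive form, every term $g_j$ with $j \neq i$ is evaluated at the baseline in both $f$-evaluations and cancels, leaving $\phi(\II_i) = g_i(\V{x}^{\star}_{\II_i}) - g_i(\V{x}'_{\II_i})$. Summing over $i$ and using the same decomposition (noting that the inert features do not change $f$ between $\V{x}'$ and $\V{x}^{\star}$) to write $f(\V{x}^{\star}) - f(\V{x}') = \sum_i g_i(\V{x}^{\star}_{\II_i}) - \sum_i g_i(\V{x}'_{\II_i})$ then yields the claim.

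An equivalent, more elementary route avoids naming the $g_i$ explicitly. I would define the hybrid points $\V{z}_0 = \V{x}'$ and $\V{z}_m = \V{x}^{\star}_{\II_1 \cup \cdots \cup \II_m} + \V{x}'_{\setminus(\II_1 \cup \cdots \cup \II_m)}$, so that $f(\V{z}_k) = f(\V{x}^{\star})$ because the features outside every set are inert. Telescoping gives $f(\V{x}^{\star}) - f(\V{x}') = \sum_{m=1}^{k}\bigl(f(\V{z}_m) - f(\V{z}_{m-1})\bigr)$, and the no-cross-interaction property ensures that switching the block $\II_m$ from baseline to target produces the same change in $f$ whether the earlier blocks sit at target (as in $\V{z}_{m-1} \to \V{z}_m$) or at baseline (as in $\V{x}' \to \V{x}^{\star}_{\II_m} + \V{x}'_{\setminus\II_m}$); hence each telescoped term equals $\phi(\II_m)$.

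I expect the main obstacle to be the first step: rigorously passing from ``$\mathcal{S}$ contains all interactions and main effects, taken as disjoint unions'' to the additive separability $f(\V{x}) = \sum_i g_i(\V{x}_{\II_i}) + c$ (equivalently, to the interchange-invariance used in the telescoping argument). This is precisely where Def.~\ref{def:interaction} and the generalized-additive-function assumption must be invoked: one has to argue that the absence of any non-additive interaction linking a coordinate in $\II_i$ to a coordinate outside $\II_i$ forces $f$ to separate across the blocks on the relevant space $\mathcal{X}$. Once that separation is established, the remaining algebra is routine cancellation.
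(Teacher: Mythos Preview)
Your proposal is correct, and your primary approach---write $f$ in the generalized additive form $f(\V{x}) = \sum_{i=1}^{k} g_i(\V{x}_{\II_i}) + b$ over the disjoint blocks of $\mathcal{S}$, cancel the $j\neq i$ terms in $\phi(\II_i)$ to obtain $g_i(\V{x}^{\star}_{\II_i}) - g_i(\V{x}'_{\II_i})$, then sum---is exactly the paper's proof. The telescoping alternative you sketch is also valid and amounts to the same block-separability cancellation viewed sequentially, but the paper does not take that route.
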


The proof is in Appendix~\ref{apd:complete}. We can easily see {\archattribute} satisfying this axiom in the limiting case where $k=1$, $\II_1 = \{i\}_{i=1}^p$ because~\eqref{eq:att} directly becomes $f(\V{x}^{\star}) - f(\V{x}')$.
Existing interaction / group attribution methods: Sampling Contextual Decomposition (SCD)~\cite{jin2019towards}, its variant (CD)~\cite{murdoch2018beyond,singh2018hierarchical}, Sampling Occlusion (SOC)~\cite{jin2019towards}, and Shapley Interaction Index (SI)~\cite{grabisch1999axiomatic} do not satisfy completeness, whereas
 Integrated Hessians (IH)~\cite{janizek2020explaining} and Shapley Taylor Interaction Index (STI)~\cite{dhamdhere2019shapley} do.

\textbf{Set Attribution:}
We propose an axiom for interaction attribution called \textbf{Set Attribution} to work with feature sets as opposed to individual features and follow the additive structure of a  function.

\begin{restatable}[Set Attribution]{axiom}{attribution}\label{thm:attribution}
If $f:\real^p \rightarrow \real$ is a function in the form of 
        $f(\V{x}) = \sum_{i=1}^{k} \varphi_i(\V{x}_{\II_i}) $
where $\{\II_i\}_{i=1}^k$ are disjoint and functions $\{\varphi_i(\cdot)\}_{i=1}^k$ have roots, then an interaction attribution method admits an attribution  for feature set $\II_i$ as $\varphi_i(\V{x}_{\II_i})$ $\forall i=1,\dots,k$.
\vspace{-0.018in}
\end{restatable}

For example, if we consider a function $y=x_1x_2 + x_3$; it makes sense for the attribution of the $x_1x_2$ interaction to be the value of $x_1x_2$ and the attribution for the $x_3$ main effect to be the value of $x_3$.

\begin{restatable}[Set Attribution on $\mathcal{S}$]{lemma}{set}\label{thm:set}
For $\V{x}=\V{x}^{\star}$ and a  baseline $\V{x}'$ such that $\varphi_i(\V{x}'_{\II_i})=0~\forall i = 1,\dots,k$, {\archattribute} satisfies the  Set Attribution axiom and provides attribution $\varphi_i(\V{x}_{\II_i})$ for set $\II_i~\forall i$.
\vspace{-0.018in}
\end{restatable}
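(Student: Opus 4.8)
The plan is to reduce the claim to a direct evaluation of the $\archattribute$ formula~\eqref{eq:att} on the additively structured $f$ supplied by the Set Attribution axiom, exploiting disjointness of the $\{\II_i\}$ together with the root hypothesis $\varphi_i(\V{x}'_{\II_i})=0$. Fix an index $j$ and abbreviate the ``island'' point as $\V{z}=\V{x}^{\star}_{\II_j}+\V{x}'_{\setminus \II_j}$, so that by definition $\phi(\II_j)=f(\V{z})-f(\V{x}')$. Everything then follows by expanding both terms through the additive form of $f$.

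First I would substitute $f(\V{x})=\sum_{i=1}^{k}\varphi_i(\V{x}_{\II_i})$ into $f(\V{z})$ to obtain $f(\V{z})=\sum_{i=1}^{k}\varphi_i(\V{z}_{\II_i})$, and then evaluate each restriction $\V{z}_{\II_i}$. Since $\V{z}$ agrees with $\V{x}^{\star}$ on the coordinates in $\II_j$ and with $\V{x}'$ on all remaining coordinates, the restriction splits into two cases. For $i=j$ we have $\V{z}_{\II_j}=\V{x}^{\star}_{\II_j}$, while for $i\neq j$ disjointness gives $\II_i\subseteq \setminus\II_j$, so $\V{z}$ takes baseline values on all of $\II_i$ and hence $\V{z}_{\II_i}=\V{x}'_{\II_i}$. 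This yields $f(\V{z})=\varphi_j(\V{x}^{\star}_{\II_j})+\sum_{i\neq j}\varphi_i(\V{x}'_{\II_i})$, and the root hypothesis collapses every off-diagonal term to zero, leaving $f(\V{z})=\varphi_j(\V{x}^{\star}_{\II_j})$. An identical application of the hypothesis to $f(\V{x}')=\sum_{i=1}^{k}\varphi_i(\V{x}'_{\II_i})$ shows $f(\V{x}')=0$. Subtracting gives $\phi(\II_j)=\varphi_j(\V{x}^{\star}_{\II_j})$, exactly the attribution demanded by the axiom, and since $j$ was arbitrary the conclusion holds for all $i$.

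The computation is essentially routine once the setup is in place; the only step needing care is the bookkeeping in the middle — verifying that for $i\neq j$ the island vector $\V{z}$ genuinely reduces to the baseline on $\II_i$. This is precisely where disjointness of the interaction sets is indispensable: without it, $\V{z}_{\II_i}$ could mix target and baseline coordinates and $\varphi_i(\V{z}_{\II_i})$ would not vanish. I would therefore make the implication $\II_i\cap\II_j=\emptyset\Rightarrow \II_i\subseteq\setminus\II_j$ explicit, so that the restriction identity $\V{z}_{\II_i}=\V{x}'_{\II_i}$ is unambiguous under the element-wise definition in~\eqref{eq:xi}.
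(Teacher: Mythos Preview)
Your proof is correct. It differs from the paper's only in how the additive structure is accessed: the paper first invokes the Completeness lemma to obtain $\phi(\II_i)=g_i(\V{x}^{\star}_{\II_i})-g_i(\V{x}'_{\II_i})$ for a decomposition $f=\sum_i g_i+b$, then argues that $\varphi_i$ and $g_i$ can differ only by a constant bias $b_i$, and finally uses the root hypothesis $\varphi_i(\V{x}'_{\II_i})=0$ to identify $b_i=-g_i(\V{x}'_{\II_i})$ and conclude. You instead plug the axiom's decomposition $f=\sum_i\varphi_i$ directly into~\eqref{eq:att} and let the root hypothesis kill the off-diagonal and baseline terms in one stroke. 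The underlying mechanism (disjointness $\Rightarrow$ restrictions separate cleanly, roots $\Rightarrow$ cross terms vanish) is identical; your route is simply more self-contained, avoiding both the appeal to Lemma~\ref{thm:complete} and the $g_i\leftrightarrow\varphi_i$ bias bookkeeping, at the cost of re-deriving the cancellation that Lemma~\ref{thm:complete} already packages.
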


The proof is in Appendix~\ref{apd:set_attribution}, which follows from Lemma~\ref{thm:complete}. Neither SCD, CD, SOC, SI, IH, nor STI satisfy Set Attribution (shown in Appendix~\ref{apd:counter}). We can enable
Integrated Gradients (IG)~\cite{sundararajan2017axiomatic} to satisfy our axiom by summing its attributions within each  feature set of $\mathcal{S}$.
{\archattribute} differs from IG by its ``island effect'' (\S\ref{sec:proposed}) and model-agnostic properties.

\textbf{Other Axioms:}
{\archattribute} also satisfies the remaining axioms:
Sensitivity, Implementation Invariance, Linearity, and Symmetry-Preserving, which we show via Lemmas~\ref{thm:sensitivit_a}-\ref{thm:symmetry} in Appendix~\ref{apd:other}.

\textbf{Discussion:}
Several axioms required disjoint interaction and main effect sets in $\mathcal{S}$. Though interactions are not necessarily disjoint by definition (Def.~\ref{def:interaction}), it is reasonable to merge overlapping interactions to obtain compact visualizations, as shown in Fig.~\ref{fig:motiv} and later experiments (\S\ref{sec:exp_proposed}). The disjoint sets also allow {\archattribute} to yield identifiable non-additive  attributions in the sense that it can identify the attribution given a feature set in $\mathcal{S}$. This contrasts with  Model-Agnostic Hierarchical Explanations (MAHE)~\cite{tsang2018can}, which yields unidentifiable attributions~\cite{wood2017generalized}.

\vspace{-0.05in}
\section{{\framework} Interaction Detection}
\vspace{-0.05in}
\label{sec:detection}

Our axiomatic analysis of {\archattribute} relied on $\mathcal{S}$, which
contains
interaction sets of $f$ on the space $\mathcal{X}$
(defined in \S\ref{sec:preliminaries}).
To develop an interaction detection method that works in tandem with {\archattribute},
we draw inspiration from the discrete interpretation of mixed partial derivatives.

\vspace{-0.05in}
\subsection{Discrete Interpretation of Mixed Partial Derivatives}
\label{sec:discrete}
\vspace{-0.05in}

Consider the plots in Fig.~\ref{fig:geometric}, which consist of points $\V{a}$, $\V{b}$, $\V{c}$, and $\V{d}$ that each contain two features. From a top-down view of each plot, the points form the corners of a rectangle, whose side lengths are 
  $h_1 = \abs{a_1-b_1} = \abs{c_1-d_1}$ and 
  $h_2 = \abs{a_2-c_2} =  \abs{b_2-d_2}$.
  When $h_1$ and $h_2$ are small, the mixed partial derivative w.r.t variables $x_1$ and $x_2$ is computed as follows. First, $\tfrac{\partial f (\V{a}) }{\partial x_1}\approx \tfrac{1}{h_1}\left(f(\V{a}) - f(\V{b})\right) $  and $\tfrac{\partial f (\V{c}) }{\partial x_1}\approx \tfrac{1}{h_1}\left(f(\V{c}) - f(\V{d})\right)$. Similarly, the mixed partial derivative is approximated as:
\begin{align}
    \tfrac{\partial^2 f}{\partial x_1x_2} &\approx \tfrac{1}{h_2} \left(\tfrac{\partial f (\V{a}) }{\partial x_1} - \tfrac{\partial f (\V{c}) }{\partial x_1} \right)
    \approx \tfrac{1}{h_1 h_2} \left( (f(\V{a}) - f(\V{b})) - (f(\V{c}) - f(\V{d}))\right).
    \label{eq:mixed}
\end{align}
When $h_1$ and $h_2$ become large,~\eqref{eq:mixed} tells us if a plane can fit through all four points $\V{a}$,$\V{b}$,$\V{c}$, $\V{d}$ (Fig.~\ref{fig:plane}), which occurs when~\eqref{eq:mixed} is zero.  In this domain where $x_1$ and $x_2$ only take two possible values each, a plane in the linear form $f(\V{x})=w_1x_1 + w_2x_2 + b$ is functionally equivalent to all functions of the form $f(\V{x}) =  f_1(x_1) + f_2(x_2) + b$, so any deviation from the plane, e.g. Fig.~\ref{fig:relu}, becomes non-additive. Consequently, a \emph{non-zero} value of~\eqref{eq:mixed} identifies a non-additive interaction by the definition of statistical interaction (Def.~\ref{def:interaction}). What's more, the magnitude of~\eqref{eq:mixed} tells us the degree of deviation from the plane, or the degree of non-additivity. (Additional details in Appendix~\ref{apd:mixed})

\vspace{-0.05in}
\subsection{{\archdetect}}
\vspace{-0.05in}

\label{sec:proposeddetection}

Leveraging these insights about mixed partial derivatives, we now discuss the two components of our proposed interaction detection technique -- {\archdetect}.

\vspace{-0.05in}
\subsubsection{Handling Context:}\label{sec:contexts}
 As defined in \S\ref{sec:axioms} and \S\ref{sec:detection}, our problem is how to identify interactions of $p$ features in $\mathcal{X}$ for our target data instance $\V{x}^{\star}$ and baseline $\V{x}'$. If $p=2$, then we can almost directly use~\eqref{eq:mixed}, where $\V{a}=(x^{\star}_1, x^{\star}_2)$, $\V{b}=(x'_1, x^{\star}_2)$, $\V{c}=(x^{\star}_1, x'_2)$, and $\V{d}=(x'_1, x'_2)$. However if $p>2$,
all possible combinations of features in $\mathcal{X}$ would need to be examined to thoroughly identify just one pairwise interaction. To see this, we first rewrite~\eqref{eq:mixed} to accommodate $p$ features, and square the result to measure interaction strength and be consistent with previous interaction detectors~\cite{friedman2008predictive,gevrey2006two}. The interaction strength between features $i$ and $j$ for a context $\V{x}_{\setminus \{i,j\}}$ is then
defined as
{
\setlength{\abovedisplayskip}{-3.1pt}
\setlength{\belowdisplayskip}{-1.4pt}
\begin{dmath}
    \omega_{i,j}(\V{x})= \left(\tfrac{1}{h_i h_j} \left(f(\V{x}^{\star}_{\{i,j\}} +  \V{x}_{\setminus{\{i,j\}}}) - f(\V{x}'_{\{i\}} + \V{x}^{\star}_{\{j\}} +  \V{x}_{\setminus{\{i,j\}}}) - f(\V{x}^{\star}_{\{i\}} + \V{x}'_{\{j\}} +  \V{x}_{\setminus{\{i,j\}}}) + f(\V{x}'_{\{i,j\}} +   \V{x}_{\setminus{\{i,j\}}})\right)\right)^2,\label{eq:pairwise}
\end{dmath}
}
where $h_i= \abs{x^{\star}_i - x'_i}$ and $h_j= \abs{x^{\star}_j - x'_j}$. The thorough way to identify the $\{i,j\}$ feature interaction is given by     $\bar{\omega}_{i,j}=\mathbb{E}_{\V{x}\in \mathcal{X}}\left[\omega_{i,j}(\V{x})\right]$,
where each element of $\V{x}_{\setminus\{i,j\}}$ is 
Bernoulli ($0.5$). This expectation 
is intractable because $\mathcal{X}$ has an exponential search space,
so we propose the first component of {\archdetect} for efficient pairwise interaction detection:
{
\setlength{\abovedisplayskip}{-2.2pt}
\setlength{\belowdisplayskip}{-0.9pt}
\begin{dmath}
    \bar{\omega}_{i,j} =  \frac{1}{2} \left(\omega_{i,j}(\V{x^{\star}}) + \omega_{i,j}(\V{x}')\right).
    \label{eq:efficient}
\end{dmath}
}
Here, we estimate the expectation by leveraging the physical meaning of the interactions and {\archattribute}'s axioms via the different contexts of $\V{x}$ in \eqref{eq:efficient} as follows:
\begin{itemize}
    \item \textbf{Context of} $\V{x}^{\star}$\textbf{:} An important interaction is one due to multiple $\V{x}^{\star}$ features. As a concrete example, consider an image representation of a cat which acts as our target data instance. The following higher-order interaction,
$if~x_{ear}= x^{\star}_{ear}~and~x_{nose}= x^{\star}_{nose}~and~  x_{fur}= x^{\star}_{fur} ~then~f(\V{x})= {high~cat~probability}$, is responsible for classifying  ``cat''.
We can detect any pairwise subset $\{i,j\}$ of this interaction by setting the context as $\V{x}^{\star}_{\setminus \{i,j\}}$ using $\omega_{i,j}(\V{x}^{\star})$.

\item \textbf{Context of} $\V{x}'$\textbf{:} Next, we consider $\V{x}'_{\setminus\{i,j\}}$ to detect interactions via $\omega_{i,j}(\V{x}')$, which  helps us establish  {\archattribute's} completeness (Lemma \ref{thm:complete}). This also separates out  effects of any higher-order baseline interactions from $f(\V{x'})$ in~\eqref{eq:separation} (Appendix~\ref{apd:complete}) and recombine their effects in~\eqref{eq:sum_separate}. From an interpretability standpoint, the $\V{x}'_{\setminus \{i,j\}}$ context ranks pairwise interactions w.r.t. a standard baseline. This context is also used by {\archattribute}~\eqref{eq:att}.

\item \textbf{Other Contexts:} The first two contexts accounted for any-order interactions created by either target or baseline features and a few interactions created by a mix of baseline and target features. The remaining interactions
specifically require
a mix of  $>3$ target and baseline features. This case is unlikely and is excluded, as we discuss next.
\end{itemize}

The following assumption formalizes our intuition for the \emph{Other Contexts} setting where there is a mix of higher-order ($>3$) target and baseline feature interactions.

\begin{restatable}[Higher-Order Mixed-Interaction]{assumption}{island}
\vspace{0.035in}
\label{assum:island}
For any feature set $\II$ where $\abs{\II}>3$ and any pair of non-empty disjoint sets $\mathcal{A}$ and $\mathcal{B}$ where $\mathcal{A}\cup\mathcal{B} = \II$, the  instances $\V{x}\in \mathcal{X}$ such that $x_i = x^{\star}_i~\forall i\in\mathcal{A}$ and $x_j = x'_j~\forall j\in \mathcal{B}$ 
do not  cause a higher-order interaction of all features   $\{x_k \}_{k\in\mathcal{\II}}$ via $f$.
\vspace{-0.025in}
\end{restatable}

Assumption~\ref{assum:island} has a similar intuition as {\archattribute} in~\S\ref{sec:proposed} that target features do not specifically interact with baseline features. To understand this assumption, 
consider the original sentiment analysis example in Fig.~\ref{fig:motiv} simplified as $\V{x}^{\star}=\text{``bad terrible awful horrible movie''}$ where $\V{x}' =~$``\_ \_ \_ \_ \_''. It is reasonable to assume that there is no special interaction created by token sets such as
\{bad, terrible, \_ , horrible\} 
or \{\_ , \_ , \_ , horrible\} due to the meaningless nature of the ``\_'' token.

 \textbf{Efficiency:} In~\eqref{eq:efficient}, {\archdetect} attains interaction detection over all pairs $\{i,j\}$ in $\mathcal{O}(p^2)$  calls of $f$. Note that in~\eqref{eq:pairwise}, most function calls are reusable during pairwise interaction detection. 

\vspace{-0.13in}

\subsubsection{Detecting Disjoint Interaction Sets:}\label{sec:disjoint}
In this section, the aim here is to recover arbitrary size and disjoint non-additive feature sets $\mathcal{S} = \{{\II}_i\}$ (not just pairs). 
 {\archdetect} looks at the union of overlapping pairwise interactions to obtain disjoint feature sets. Merging these pairwise interactions captures any existing higher-order interactions automatically since the existence of a higher-order interaction automatically means all its subset interactions exist~(\S\ref{sec:definition}).
In addition, {\archdetect} merges these overlapped pairwise interactions with all individual feature effects to account for all features. The time complexity of this merging process is also  $\mathcal{O}(p^2)$.

\section{Experiments}
\vspace{-0.1in}
\label{sec:experiments}

\subsection{Setup} 
\vspace{-0.05in}

\label{sec:setup}
We  conduct experiments first on {\archdetect} in \S\ref{sec:exp_detector} then on {\archattribute} in~\S\ref{sec:exp_proposed}. We then visualize their combined form as {\framework} in~\S\ref{sec:exp_proposed}. 
Throughout our experiments, we commonly study {\bert}~\cite{devlin2019bert, Wolf2019HuggingFacesTS} on text-based sentiment analysis and {\resnet}~\cite{he2016deep} on image classification. {\bert} was fine-tuned on the SST dataset~\cite{socher2013recursive}, and {\resnet} was pretrained on {\imagenet}~\cite{imagenet_cvpr09}.

For sentiment analysis, we set the baseline vector $\V{x}'$ to be the tokens ``\_'',  in place of each word-token from $\V{x}^{\star}$. For image classification, we set $\V{x}'$ to be an all-zero image, and use the Quickshift superpixel segmenter~\cite{vedaldi2008quick} as per the need for input dimensionality reduction~\cite{tsang2020feature} (details in Appendix~\ref{apd:reduction}). We set $h_1=h_2=1$ for both domains. Several methods we compare to are common across experiments, in particular IG, IH, (disjoint) MAHE, SI, STI, and Difference, defined as $\phi_{d}(\II) =  f(\V{x}^{\star}) - f(\V{x}'_{\II}+ \V{x}^{\star}_{\setminus \II})$. 

\vspace{-0.05in}
\subsection{{\archdetect}}
\label{sec:exp_detector}
\vspace{-0.05in}

\begin{figure}[t]
    \begin{minipage}{.6\textwidth}
    \captionof{table}{Comparison of interaction detectors (b) on synthetic ground truth in (a).} 
    \vspace{-0.03in}
    \label{table:detection_eval}
        \begin{subtable}{1\columnwidth}
        
         \caption{Functions with Ground Truth Interactions\label{table:synthetic}}

        \vspace{-0.065in}

        \resizebox{0.95\columnwidth}{!}{%
        
        \begin{tabular}{lc}
        
        \toprule   
        $F_1(\V{x})=$ &  $\sum_{i=1}^{10}\sum_{j=1}^{10} x_ix_j + \sum_{i=11}^{20}\sum_{j=21}^{30} x_ix_j+  \sum^{40}_{k=1}{x}_k$\\
        $F_2(\V{x})=$ &  $\bigwedge(\V{x}; \{x_i^{\star}\}_{i=1}^{20})$  + $\bigwedge(\V{x} ;\{x_i^{\star}\}_{i=11}^{30}) + \sum_{j=1}^{40}x_j$\\
        $F_3(\V{x})=$ & $\bigwedge(\V{x}; \{x_i'\}_{i=1}^{20})$  + $\bigwedge(\V{x} ;\{x_i^{\star}\}_{i=11}^{30}) + \sum_{j=1}^{40}x_j$\\
        $F_4(\V{x})=$ &  $\bigwedge(\V{x}; \{x_1^{\star}, x^{\star}_2\} \cup  \{x_3'\})$  + $\bigwedge(\V{x} ;\{x_i^{\star}\}_{i=11}^{30}) + \sum_{j=1}^{40}x_j$\\

        \bottomrule
        \end{tabular}

           }

        \end{subtable}
        \vspace{0.115in}

        \begin{subtable}{1\columnwidth}
        \caption{Pairwise Interaction Ranking AUC. The baseline methods fail to detect interactions suited for the desired contexts in \S\ref{sec:contexts}. \label{table:detection}}

        \vspace{-0.065in}

        \resizebox{0.95\columnwidth}{!}{%

            \begin{tabular}{lcccc}
            \toprule   
            Method & $F_1$ & $F_2$ & $F_3$ & $F_4$\\
            \midrule
            Two-way ANOVA & $1.0$ & $0.51$& $0.51$ &$0.55$  \\
            Integrated Hessians & $1.0$ & N/A & N/A & N/A\\
            Neural Interaction Detection& $0.94$ & $0.52$ & $0.48$ & $0.56$\\
            Shapley Interaction Index& $1.0$ & $0.50$ & $0.50$ & $0.51$\\
            Shapley Taylor Interaction Index&  $1.0$ & $0.50$ & $0.53$ & $0.51$\\
            {\archdetect}~(this work) & $1.0$ & $1.0$ & $1.0$ & $1.0$\\
            \bottomrule
        \end{tabular}

           }

        \end{subtable}

    \end{minipage}%
    \hfill
    \begin{minipage}{0.38\textwidth}

   \begin{subfigure}[b]{\textwidth}
        \centering
       \includegraphics[scale=0.425]{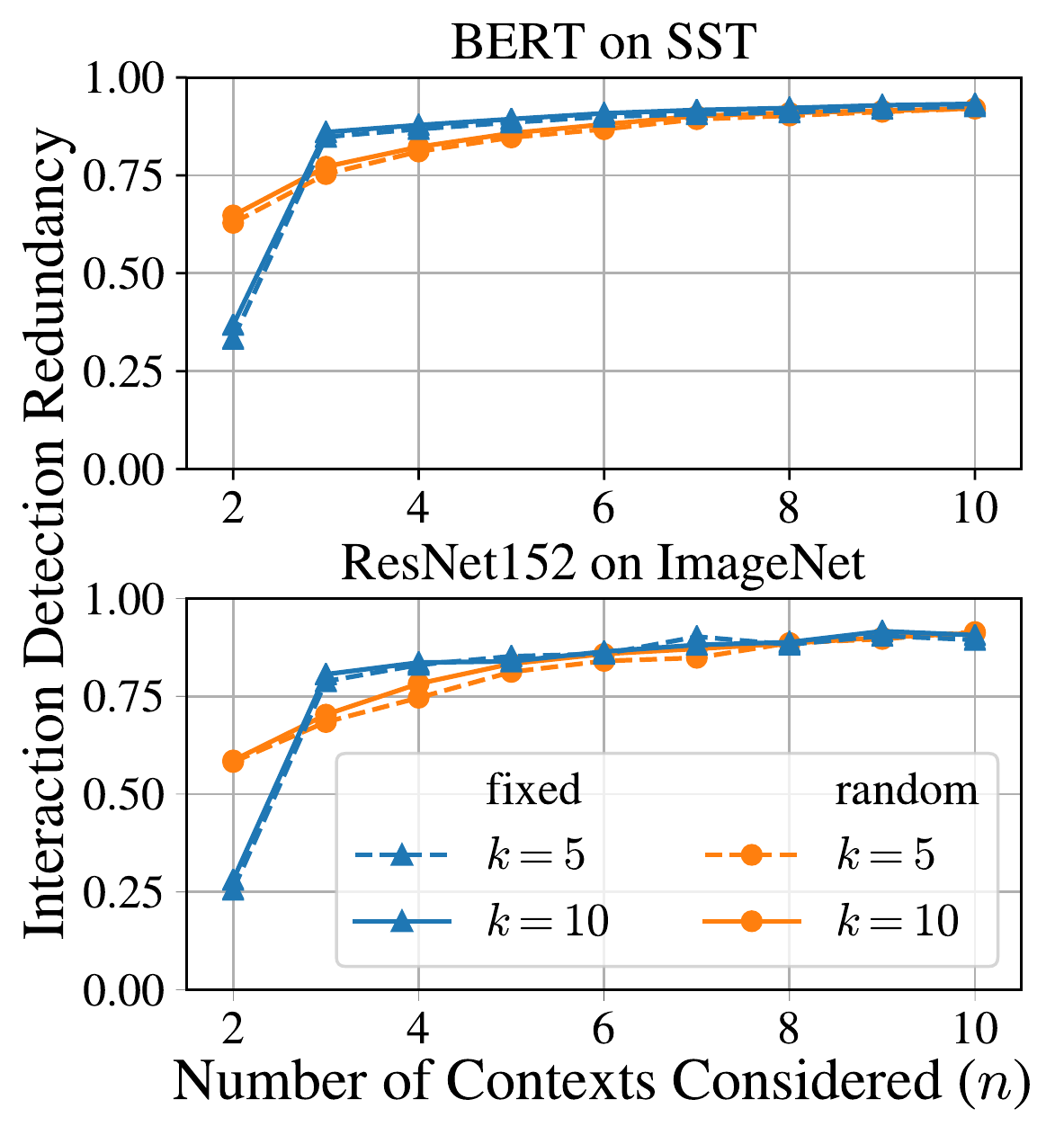}
    \end{subfigure}%
    \vspace{-0.05in}
     \caption{Interaction detection overlap (redundancy) with added contexts to~\eqref{eq:efficient}. ``fixed'' at $n=2$ (\archdetect) already shows good stability.   }
    \label{fig:redundancy}
        
    \end{minipage}
    \vspace{-0.11in}
\end{figure}

We validate {\archdetect}'s 
performance
via synthetic ground truth and redundancy experiments.

\textbf{Synthetic Validation:}
We set
$\V{x}^{\star} = [1,1,\dots,1]\in\real^{40}$ and
$\V{x}' = [-1,-1,\dots,-1]\in\real^{40}$. Let $z[\cdot]$ be a key-value pair function such that  $z[i] = x_i$ for key $i\in z.keys$ and value $x_i$, so we can define 
 \[
\bigwedge(\V{x}; z) \vcentcolon=    
    \left\{\begin{array}{lr}
        ~~1, & \text{if } x_i = z[i] ~\forall i \in z.keys\\
        -1 & \text{for all other cases}.\\
        \end{array}\right.
  \]
Table~\ref{table:synthetic} shows functions with ground truth interactions
suited for the  desired contexts in \S\ref{sec:contexts}.
Table~\ref{table:detection} shows  interaction detection AUC on these functions by {\archdetect}, IH, SI, STI, Two-way ANOVA~\cite{fisher1925statistical} and the state-of-the-art Neural Interaction Detection~\cite{tsang2017detecting}. On 
$F_2$, $F_3$, \& 
$F_4$, the baseline methods fail  because they are not designed to detect the interactions of our desired contexts (\S\ref{sec:contexts}).

\textbf{Interaction Redundancy:} The purpose of the next experiments is to see if {\archdetect} can omit 
certain 
higher-order interactions. We study the form of~\eqref{eq:efficient} by examining the redundancy of interactions as new contexts are added to~\eqref{eq:efficient}, which we now write as $   \bar{\omega}_{i,j}(C) =  \frac{1}{C} \sum_{c=1}^C \omega_{i,j}(\V{x}_c)$. Let $n$ be the number of contexts considered, and $k$ be the number of top pairwise interactions selected after running pairwise interaction detection via $\bar{\omega}_{i,j}$ for all $\{i,j\}$ pairs. Interaction redundancy is the overlap ratio of two sets of top-$k$ pairwise interactions, one generated via $\bar{\omega}_{i,j}(n)$ and the other one via $\bar{\omega}_{i,j}(n-1)$ for some integer $n\geq2$. We generally 
expect the redundancy to increase as $n$ increases, which we initially observe in Fig.~\ref{fig:redundancy}. 
Here,
``fixed'' and ``random'' correspond to different context sequences $\V{x}_1, \V{x}_2, \dots, \V{x}_N$.
The ``random'' sequence uses random samples from $\mathcal{X}$ for all $\{\V{x}_i\}_{i=1}^N$, whereas the ``fixed'' sequence is fixed in the sense that 
$\V{x}_1=\V{x}^{\star}$, $\V{x}_2=\V{x}'$, and the remaining $\{\V{x}_i\}_{i=3}^N$ are random samples.
Experiments are done on the {\sst} test set for {\bert} and $100$ random test images in {\imagenet} for {\resnet}.  Notably, the ``fixed'' setting has very low redundancy at $n=2$ ({\archdetect}) versus ``random''. As soon as $n=3$, the redundancy jumps and stabilizes quickly. These experiments support Assumption~\ref{assum:island} and~\eqref{eq:efficient}
to omit specified higher-order interactions.

\vspace{-0.05in}
\subsection{{\archattribute} \& {\framework}}
\vspace{-0.05in}
\label{sec:exp_proposed}

\begin{table}[t]
      \centering
        \caption{Comparison of attribution methods on {\bert} for sentiment analysis and {\resnet} for image classification. Performance is measured by the correlation ($\rho$) or AUC of the top and bottom $10\%$ of attributions for each method with respect to reference scores defined in~\S\ref{sec:exp_proposed}.
        }
     \resizebox{0.88\columnwidth}{!}{%

    \begin{tabular}{lccc}
    \toprule
    \multirow{2}{*}{Method}&\multicolumn{2}{c}{\makecell{\bert \\ Sentiment Analysis}}&\multicolumn{1}{c}{\makecell{\resnet\\Image Classification}}
    \\ \cmidrule(lr){2-3}\cmidrule(lr){4-4}
    
    &
    Word $\rho$ & Phrase $\rho$ $\dagger$    &
     Segment AUC $\dagger$
    \\
        
    \midrule 
    Difference  & $0.427$  & $0.639$ & $0.705$   \\
    Integrated Gradients (IG) & $0.568$  & $0.737$ & $0.786$   \\
    Integrated Hessians (IH) & N/A  & $0.128$ & N/A  \\
    Model-Agnostic Hierarchical Explanations (MAHE)   &  $0.673$ & $0.702$ & $0.712$ \\
    Shapley Interaction Index (SI) & $0.168$  & $-0.018$ &  $0.530$ \\
    Shapley Taylor Interaction Index (STI) &$0.754$  & $0.286$ & $0.626$  \\
    *Sampling Contextual Decomposition (SCD)  & $0.709$  & $0.742$ & N/A  \\
    *Sampling Occlusion (SOC)  & $0.768$  & $0.794$ & N/A  \\
    {\archattribute} (this work) & $\mathbf{0.809}$ & $\mathbf{0.836}$ &  $\mathbf{0.919}$\\
    \bottomrule
    \multicolumn{4}{r}{
      \begin{minipage}{10.3cm}
          \vspace{0.05in}

        \scriptsize $\dagger$ Methods that cannot tractably run for arbitrary feature set sizes are only run for pairwise feature sets.\\
        \scriptsize  * SCD and SOC are specifically for sequence models and contiguous words.
      \end{minipage}
    }
    \end{tabular}
    
    }
    \label{table:eval}
    \vspace{-0.13in}
\end{table}

\begin{figure}
    \centering
    \includegraphics[scale=0.323,trim={0 3.5cm 28.5cm 0.5cm},clip]{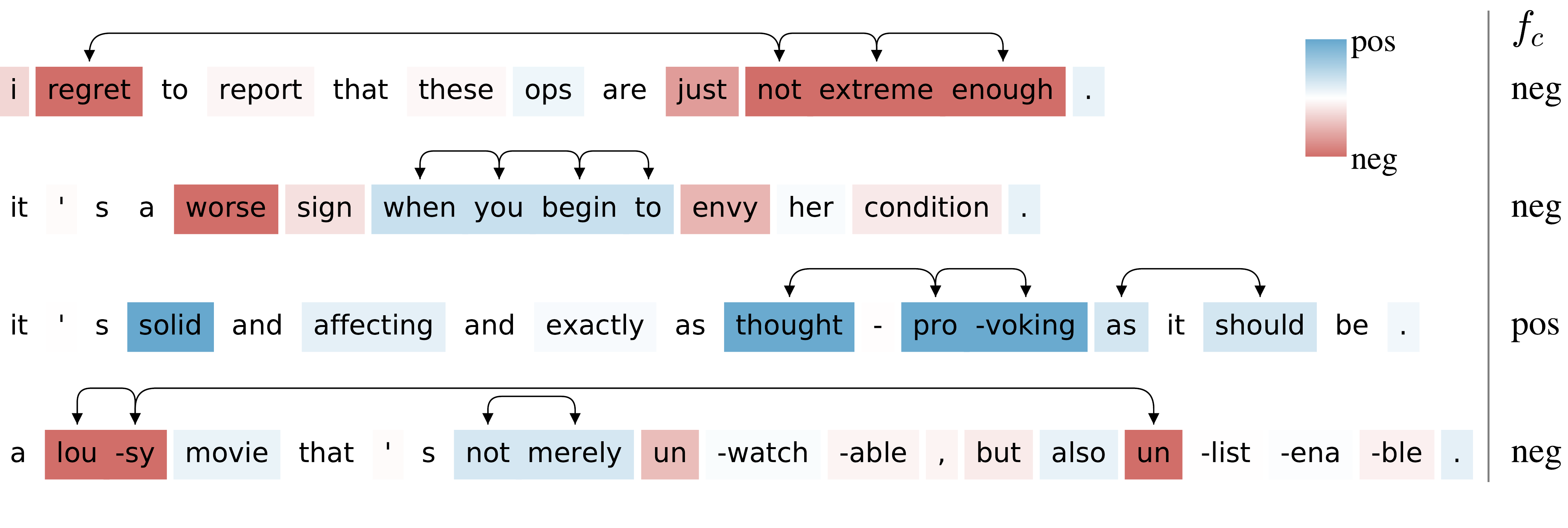}

    \vspace{-0.1in}
    \caption{Our {\bert} visualizations on random test sentences from {\sst} under {\bert} tokenization. Arrows indicate interactions, and colors indicate attribution strength. $f_c$ is the sentiment classification. The interactions point to salient and sometimes long-range sets of words, and the colors are sensible. }
    \label{fig:text_exp} %
    \vspace{-0.13in}
\end{figure}

We study the interpetability of {\archattribute} by comparing its attribution scores to ground truth annotation labels on subsets of features. For fair comparison, we look at extreme attributions (top and bottom $10\%$) for each baseline method.  We then visualize the combined {\framework} framework. Additional comparisons on attributions, runtime, and visualizations are shown in Appendices~\ref{apd:quantile},~\ref{apd:runtime},~\ref{apd:viz}.

\textbf{Sentiment Analysis:}
For this task, we compare {\archattribute} to other explanation methods on two metrics: phrase correlation  (Phrase $\rho$) and word correlation (Word $\rho$) on the {\sst} test set (metrics are from~\cite{jin2019towards}). Phrase $\rho$ is the Pearson correlation between estimated phrase attributions and {\sst} phrase labels (excluding prediction labels) on a $5$-point sentiment scale. Word $\rho$ is unlike our label-based evaluations
by computing
the Pearson correlation between estimated word attributions and the corresponding coefficients of a global bag-of-words linear model, which is also trained on the {\sst} dataset. 
In addition to the aforementioned baseline methods in~\S\ref{sec:setup}, we include the state-of-the-art SCD and SOC methods for sequence models~\cite{jin2019towards} in our evaluation. In Table~\ref{table:eval}, {\archattribute}
compares favorably to all methods where we consider the top and bottom $10\%$ of the attribution scores for each method. We obtain similar performance  across all other percentiles in Appendix~\ref{apd:quantile}.

We visualize {\framework} explanations on $\mathcal{S}$ generated by top-$3$ pairwise interactions (\S\ref{sec:disjoint})  in Fig.~\ref{fig:text_exp}. The sentence examples are randomly selected from the {\sst} test set. The visualizations show interactions and individual feature effects which all have reasonable polarity and intensity. Interestingly, some of the interactions, e.g. between ``lou-sy'' and ``un'', are long range.

\begin{figure}[t]
    \centering
    \includegraphics[scale=0.2,trim={0cm 3cm 0cm 0cm}]{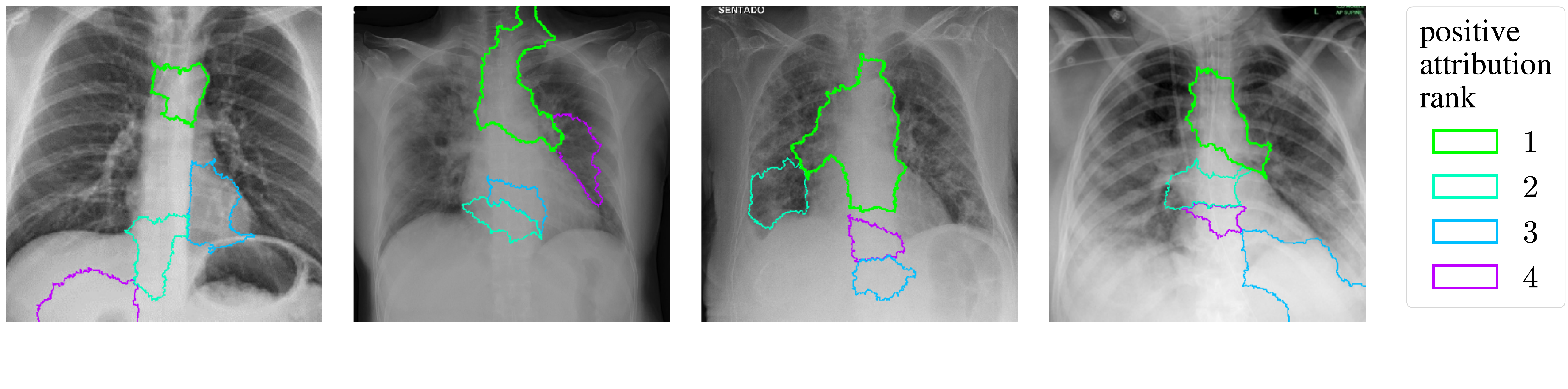}

    \vspace{-0.0475in}
    \caption{Our explanations of a COVID-19  classifier
    (COVID-Net)~\cite{wang2020covid} on randomly selected test X-rays~\cite{chowdhury2020can,cohen2020covid} classified as COVID positive. COVID-Net accurately distinguishes COVID from pneumonia and normal X-rays. Colored outlines indicate detected feature sets with positive attribution. 
     The explanations tend to detect on the ``great vessels'' outlined in green, which are mostly interactions.}
    
    \label{fig:img_exp}
    \vspace{-0.06in}
\end{figure}    

\textbf{Image Classification:}
On image classification, we compare {\archattribute} to relevant baseline methods on a ``Segment AUC'' metric, which computes the agreement between the estimated attribution of an image segment and that segment's label. We obtain segment labels from the MS COCO dataset~\cite{lin2014microsoft} and match them to the label space of {\imagenet}. All explanation attributions are computed relative to {\resnet}'s top-classification in the joint label space. The segment label thus becomes whether or not the segment belongs to the same class as the top-classification. Evaluation is conducted on all segments with valid labels in the MS COCO dev set. {\archattribute} performs especially well on extreme attributions in Table~\ref{table:eval}, as well as all attributions (in Appendix~\ref{apd:quantile}).

Fig.~\ref{fig:img_exp} visualizes
{\framework} on
an accurate COVID-$19$ classifier for chest X-rays~\cite{wang2020covid}, where  $\mathcal{S}$ is generated by top-$5$ pairwise interactions (\S\ref{sec:disjoint}).
Shown is a random selection of test X-rays~\cite{chowdhury2020can,cohen2020covid} that are classified COVID-positive. The explanations tend to detect the ``great vessels'' near the heart. 

\begin{wrapfigure}[14]{R}{0.34\textwidth}
 \center
    \vspace{-0.255in}
  \includegraphics[scale=0.5]{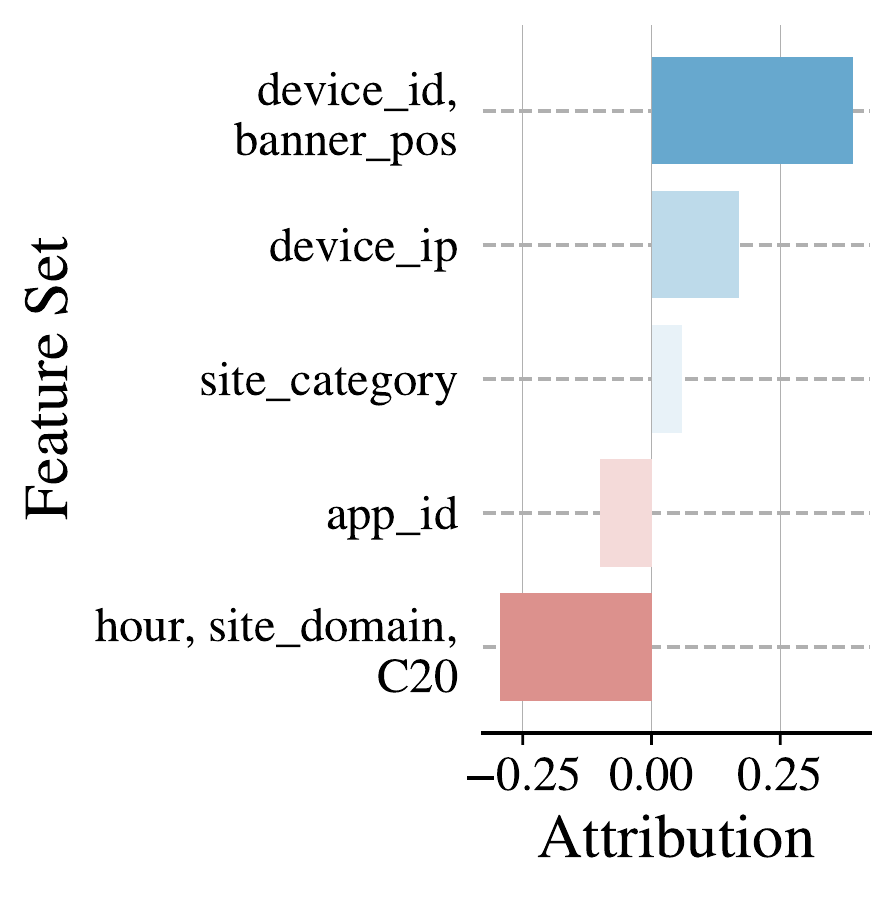}
    \vspace{-0.095in}
  \caption{Online ad-targeting: ``banner\_pos'' is used to target ads to a user per their ``device\_id''.}
  \label{fig:rec_exp}  
\end{wrapfigure}

\textbf{Recommendation Task:}
 Fig.~\ref{fig:rec_exp} shows {\framework}'s result for this task using a state-of-the-art AutoInt model~\cite{song2018autoint} for ad-recommendation.
Here, our approach finds a positive interaction between``device\_id'' and ``banner\_pos'' in the Avazu dataset~\cite{avazu}, meaning that the online advertisement model decides the banner position based on user device\_id. Note that for this task, there are no ground truth annotations.

\vspace{-0.05in}
\section{Related Works}
\vspace{-0.05in}
\label{sec:related}

\textbf{Attribution:} Individual feature attribution methods distill any  interactions of a data instance as attribution  scores for each feature. Many methods require the scores to sum to equal the output~\cite{shrikumar2017learning,binder2016layer,sundararajan2017axiomatic,lundberg2017unified,ribeiro2016should}, such as LIME and SHAP, which train surrogate linear explainer models on feature perturbations, and IG which invokes the fundamental theorem of calculus. Other methods compute attributions from an information theoretic perspective~\cite{chen2018learning} or strictly from model gradients~\cite{simonyan2013deep,ancona2018towards,selvaraju2017grad}. These methods interpret feature importance but not feature interactions.

\textbf{Feature Interaction:} Feature interaction explanation methods tend to either perform interaction detection~\cite{fisher1925statistical,sorokina2008detecting,tsang2017detecting,bien2013lasso,friedman2008predictive,gevrey2006two,ai2003interaction} or combined interaction detection and attribution~\cite{purushotham2014factorized,dhamdhere2019shapley,janizek2020explaining,lou2013accurate,tsang2018can,lundberg2018consistent}.
Relevant black-box interaction explainers are  STI~\cite{dhamdhere2019shapley} which uses random feature orderings to identify contexts for a variant of~\eqref{eq:pairwise} so that interaction scores satisfy completeness,  IH~\cite{janizek2020explaining} which extends IG with path integration for hessian computations, and 
MAHE~\cite{tsang2018can}, which trains surrogate explainer models for interaction detection and attribution.
STI and IH are axiomatic and satisfy completeness but their attributions are uninterpretable (Table~\ref{table:eval}) and inefficient. MAHE's attributions are  unidentifiable by training additive attribution models on overlapping feature sets. Several methods compute attributions on feature sequences or sets, such as SOC~\cite{jin2019towards}, SCD~\cite{jin2019towards}, and CD~\cite{murdoch2018beyond,singh2018hierarchical}, but they do not obey basic axioms. Finally, many methods are not model-agnostic, such as SCD, CD, IG, IH, GA2M~\cite{lou2013accurate}, and Tree-SHAP~\cite{lundberg2018consistent}. Additional earlier works are discussed in Appendix~\ref{apd:history}.

\vspace{-0.05in}
\section{Discussion}
\vspace{-0.05in}

Understandable and accessible explanations are  cornerstones of interpretability which informed our  isolation and disjoint designs of {\archattribute} and {\archdetect}, respectively.
  Here, we develop  an interpretable, model-agnostic, axiomatic, and efficient interaction explainer which achieves state-of-the-art results on multiple attribution tasks. In addition, we introduce a new axiom and generalize existing axioms to higher-order interaction settings. This provides guidance on how to design interaction attribution methods.
  To be able to solve the transparency issue, we need to understand feature attribution better. This work proposes interpretable and axiomatic feature interaction explanations to motivate future explorations in this area.

\section*{Broader Impact}

The purpose of this work is to provide new insights into existing and future prediction models. The explanations from {\framework} can be used by both machine learning practitioners and audiences without background expertise. The societal  risk of this work is any overdependence on {\framework}. Users of this explanation method should consider the merits of not only this method but also other explanation methods for their use cases. For example, users may want fine-grained pixel-level explanations of image classifications whereas our explanations may require superpixel segmentation. Nevertheless, we believe this work can help  reveal biases in prediction models, assist in scientific discovery, and stimulate discussions on how to debug models based on feature interactions.

\bibliography{refs}
\bibliographystyle{plain}

\newpage
\begin{appendix}
\section*{Appendix}
\section{Acronyms}

\begin{table}[h]
      \centering
        \caption{Acronym Definitions}
     \resizebox{0.75\columnwidth}{!}{%
    \begin{tabular}{lc}
    \toprule
    Acronym & Meaning
    \\
    \midrule 
    pos & positive \\ 
    neg & negative\\
    IG & Integrated Gradients~\cite{sundararajan2017axiomatic}    \\
    IH & Integrated Hessians~\cite{janizek2020explaining}    \\
    MAHE & Model-Agnostic Hierarchical Explanations~\cite{tsang2018can}  \\
    SI & Shapley Interaction Index\cite{grabisch1999axiomatic}  \\
    STI & Shapley Taylor Interaction Index~\cite{dhamdhere2019shapley}  \\
    SCD & Sampling Contextual Decomposition~\cite{jin2019towards}    \\
    SOC & Sampling Occlusion~\cite{jin2019towards} \\
    ANOVA & Analysis of Variance~\cite{fisher1925statistical} \\
    LIME & Locally Interpretable Model-Agnostic Explanations~\cite{ribeiro2016should} \\
    SHAP & Shapley Additive Explanations~\cite{lundberg2017unified}\\
    GA2M & Generalized Additive Model with Pairwise Interactions~\cite{lou2013accurate}\\
    MS COCO & Microsoft Common Objects in Context~\cite{lin2014microsoft} \\
    SST & Stanford Sentiment Treebank~\cite{socher2013recursive}\\
    BERT & Bidirectional Encoder Representations from Transformers~\cite{devlin2019bert}\\
    COVID & Coronavirus Disease\\
    \bottomrule
    \end{tabular}
    }
    \label{table:acronyms}
\end{table}

\section{Input Dimensionality Reduction}
\label{apd:reduction}

For a black-box model $f:\real^ {p'}\rightarrow \real$
which takes as input a vector  with $p'$ dimensions (e.g. an image, input embedding, etc.) and maps it to a scalar output (e.g. a class logit), we can make {\archdetect} more efficient by operating on a lower dimensional input encoding $\V{x}\in\real^p$ with $p$ dimensions. To match the dimensionality $p'$ of the input argument of $f$, we define a transformation function $\xi:\real^p \rightarrow \real^{p'}$ which takes the input encoding $\V{x}$ in the lower dimensional space $p$ and brings it back to the input space of  $f$ with dimensionality $p'$.
In other words,~\eqref{eq:pairwise} becomes 
{
\setlength{\abovedisplayskip}{-3.1pt}
\setlength{\belowdisplayskip}{-1.4pt}
\begin{dmath*}
    \omega_{i,j}(\V{x})= \left(\tfrac{1}{h_i h_j} \left(f'(\V{x}^{\star}_{\{i,j\}} +  \V{x}_{\setminus{\{i,j\}}}) - f'(\V{x}'_{\{i\}} + \V{x}^{\star}_{\{j\}} +  \V{x}_{\setminus{\{i,j\}}}) - f'(\V{x}^{\star}_{\{i\}} + \V{x}'_{\{j\}} +  \V{x}_{\setminus{\{i,j\}}}) + f'(\V{x}'_{\{i,j\}} +   \V{x}_{\setminus{\{i,j\}}} )\right)\right)^2,
\end{dmath*}
} where $f' = f \circ \xi$.
Correspondingly, {\archattribute}~\eqref{eq:att} becomes
{
\setlength{\abovedisplayskip}{8.5pt}
\setlength{\belowdisplayskip}{-7.5pt}
\begin{align*}
    \phi(\II) = f'(\V{x}^{\star}_\II+\V{x}'_{\setminus \II}) - f'(\V{x}').
\end{align*}}

Examples of input encodings are discussed for the following data types:

\begin{itemize}
\item For an image, we use a superpixel segmenter, which selects regions on the image. The selection is covered by the  vector $\V{x}\in \{0,1\}^{p}$, which encodes which image segments have been selected. Note that wherever $\V{x}$ is $0$ corresponds to a baseline feature value (e.g. zeroed image pixels).  
\item  For text, we use the natural correspondence between an input embedding  and a word token. The selection of input embedding vectors is also covered by the vector $\V{x}\in\{0,1\}^{p}$.
\item  For recommendation data, we use the same type of correspondence between an input embedding and a  feature field.
\end{itemize}

Similar notions of input encodings have also been used in~\cite{ribeiro2016should,tsang2020feature}.

\section{Completeness Axiom}
\label{apd:complete}

\complete*

\begin{proof}
Based on the definition of non-additive statistical interaction (Def.~\ref{def:interaction}), a function $f$ can be represented as a generalized additive function~\cite{tsang2017detecting, tsang2018neural, tsang2018can},
here on the domain of $\mathcal{X}$:
\begin{align}
    f(\V{x}) = \sum_{i=1}^{\eta}{q_{i}(\V{x}_{\II^u_i})} + \sum_{j=1}^{p}q'_j(x_j) + b,
    \label{eq:gami}
\end{align}

where $q_i(\V{x}_{\II^u_i})$ is a function of each interaction $\II^u_i$ on $\mathcal{X}$ $\forall i=1,\dots,\eta $ interactions,  $q'_j(x_j)$ is a function for each feature $\forall j=1,\dots,p$,  and $b$ is a bias. The $u$ in $\II^u$ stands for ``unmerged''.

The disjoint sets of $\mathcal{S} =\{\II_i\}_{i=1}^k$ are the result of merging overlapping interaction sets and main effect sets, so we can merge the subfunctions $q(\cdot)$ and $q'(\cdot)$ of~\eqref{eq:gami} whose input sets overlap to write $f(\V{x})$ as a sum of new functions $g_i(\V{x}_{\II_i})~\forall i =1,\dots, k$: 
    \begin{align}
        f(\V{x}) = \sum_{i=1}^{k} g_i(\V{x}_{\II_i}) + b. 
    \label{eq:general}
    \end{align}
    For some $\{g_i\}_{i=1}^k$
    of the form of~\eqref{eq:general}, we rewrite~\eqref{eq:att} by separating out the effect of index $i$:
    \begin{align}
        \phi(\II_i) &=  f(\V{x}^{\star}_{\II_i}+ \V{x}'_{\setminus {\II_i}}) - f(\V{x}')\quad\forall i = 1, \dots, k\nonumber\\
        &= \left( g_i(\V{x}^{\star}_{\II_i}) + \sum_{\substack{j=1 \\ j\neq i}}^{k} g_j(\V{x}'_{\II_j}) +b \right) - \left(  g_i(\V{x}'_{\II_i})   + \sum_{\substack{j=1 \\ j\neq i}}^{k} g_j(\V{x}'_{\II_j})  + b \right) \label{eq:separation}\\
        &= g_i(\V{x}^{\star}_{\II_i}) -  g_i(\V{x}'_{\II_i}).
        \label{eq:idv_diff}
    \end{align}  
    Since all $\II\in\mathcal{S}$ are disjoint,
    $g_j(\V{x}'_{\II_j})$ can be canceled in~\eqref{eq:separation} $\forall j$, leading to~\eqref{eq:idv_diff}. 
    The result at~\eqref{eq:idv_diff} can also be obtained with an alternative attribution approach, as shown in Corollary~\ref{thm:complete2}.
    
    Next, we compute the sum of attributions:
    \begin{align}
        \sum_{i=1}^{k}{\phi(\II_i)} &= \sum_{i=1}^k \left(g_i(\V{x}^{\star}_{\II_i}) -  g_i(\V{x}'_{\II_i})\right)\label{eq:sum_att}\\
        &= \sum_{i=1}^k g_i(\V{x}^{\star}_{\II_i}) - \sum_{i=1}^k g_i(\V{x}'_{\II_i})\label{eq:sum_separate}\\
        &=f(\V{x}^{\star}) - f(\V{x}')\nonumber
    \end{align}
\end{proof}

\section{Completeness of a Complementary Attribution Method}
\label{apd:test}

\begin{restatable}[Completeness of a Complement]{corollary}{complete2}\label{thm:complete2}
An attribution approach: $\phi(\II) =  f(\V{x}^{\star}) - f(\V{x}'_{\II}+ \V{x}^{\star}_{\setminus \II})$, similar to what is mentioned in ~\cite{li2016understanding,jin2019towards}, also satisfies the completeness axiom.
\end{restatable}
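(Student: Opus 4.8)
The plan is to reuse the machinery of Lemma~\ref{thm:complete} almost unchanged, since the complementary method turns out to yield precisely the same per-set attribution as~\eqref{eq:idv_diff}. First I would invoke the generalized additive decomposition~\eqref{eq:general}, writing $f(\V{x}) = \sum_{i=1}^{k} g_i(\V{x}_{\II_i}) + b$ over the disjoint sets of $\mathcal{S}$, which is justified exactly as in the proof of completeness for {\archattribute}.

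Next I would expand the complementary attribution on a single set $\II_i$. In the argument $\V{x}'_{\II_i} + \V{x}^{\star}_{\setminus \II_i}$, disjointness guarantees that every subfunction $g_j$ with $j \neq i$ has its support $\II_j$ contained in $\setminus \II_i$, so it is evaluated entirely at target coordinates, whereas $g_i$ is evaluated at baseline coordinates. Writing this out,
\begin{align*}
\phi(\II_i) &= f(\V{x}^{\star}) - f(\V{x}'_{\II_i} + \V{x}^{\star}_{\setminus \II_i}) \\
&= \left( g_i(\V{x}^{\star}_{\II_i}) + \sum_{\substack{j=1 \\ j \neq i}}^{k} g_j(\V{x}^{\star}_{\II_j}) + b \right) - \left( g_i(\V{x}'_{\II_i}) + \sum_{\substack{j=1 \\ j \neq i}}^{k} g_j(\V{x}^{\star}_{\II_j}) + b \right),
\end{align*}
and the shared $g_j$ terms for $j \neq i$ together with the bias $b$ cancel, leaving $\phi(\II_i) = g_i(\V{x}^{\star}_{\II_i}) - g_i(\V{x}'_{\II_i})$, which is exactly~\eqref{eq:idv_diff}. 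Summing over $i$ and splitting the sum as in~\eqref{eq:sum_att}--\eqref{eq:sum_separate} then gives $\sum_{i=1}^{k} \phi(\II_i) = f(\V{x}^{\star}) - f(\V{x}')$, establishing completeness.

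The only substantive step is the cancellation above, which rests entirely on the disjointness of the sets in $\mathcal{S}$. Because the argument is structurally symmetric to the original {\archattribute} proof (there the target and baseline roles are simply swapped, with the baseline features supplying the cancelling context instead of the target features), I expect no genuine obstacle. The one subtlety worth confirming is that the decomposition~\eqref{eq:general} is valid on all of $\mathcal{X}$, and in particular at the mixed point $\V{x}'_{\II_i} + \V{x}^{\star}_{\setminus \II_i}$; but this holds by construction, since $\mathcal{S}$ is taken to be the merged interaction and main-effect sets of $f$ on $\mathcal{X}$, and every coordinate of the mixed point lies in $\{x^{\star}_i, x'_i\}$ and hence in $\mathcal{X}$.
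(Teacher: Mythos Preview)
Your proposal is correct and follows essentially the same approach as the paper: invoke the decomposition~\eqref{eq:general}, expand $f(\V{x}^{\star}) - f(\V{x}'_{\II_i}+ \V{x}^{\star}_{\setminus \II_i})$ so that the $g_j$ terms for $j\neq i$ cancel by disjointness, arrive at~\eqref{eq:idv_diff}, and then resume at~\eqref{eq:sum_att}. Your additional remarks on why disjointness guarantees the cancellation and why the mixed point lies in $\mathcal{X}$ are sound elaborations, but the core argument matches the paper's proof line for line.
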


\begin{proof}
    Based on Eqs.~\ref{eq:general} - \ref{eq:idv_diff} of Lemma~\ref{thm:complete}:
    \begin{align*}
        \phi(\II_i) &=  f(\V{x}^{\star}) - f(\V{x}'_{\II_i}+ \V{x}^{\star}_{\setminus \II_i})\\
        &= \left( g_i(\V{x}^{\star}_{\II_i}) + \sum_{\substack{j=1 \\ j\neq i}}^{k} g_j(\V{x}^{\star}_{\II_j}) +b \right) - \left( g_i(\V{x}'_{\II_i}) + \sum_{\substack{j=1 \\ j\neq i}}^{k} g_j(\V{x}^{\star}_{\II_j}) +b \right) \\
        &= g_i(\V{x}^{\star}_{\II_i}) -  g_i(\V{x}'_{\II_i})
    \end{align*}    
    We can then resume with~\eqref{eq:sum_att} of Lemma~\ref{thm:complete}.
\end{proof}

\section{Set Attribution Axiom}
\label{apd:set_attribution}
\attribution*
\set*
\begin{proof}
From~\eqref{eq:idv_diff} in Lemma~\ref{thm:complete}, {\archattribute} can be written as 
\begin{align*}
    \phi(\II_i)
= g_i(\V{x}^{\star}_{\II_i}) -  g_i(\V{x}'_{\II_i})\quad \forall i =1, \dots, k,
\end{align*}
where $f(\V{x}) = \sum_{i=1}^{k} g_i(\V{x}_{\II_i}) + b$. Since $\mathcal{S} = \{\II_i\}_{i=1}^k$ are disjoint feature sets for the same function $f$ in Axiom~\ref{thm:attribution}, $g_i(\cdot)$ and $\varphi_i(\cdot)$ are related by a constant bias $b_i$:
\begin{align*}
    \varphi_i(\V{x}) = g_i(\V{x}) + b_i
\end{align*}
Each $\varphi_i(\cdot)$ has roots, so $g_i(\V{x}) + b_i$ has roots.  $\V{x}'$ is set such that $\varphi_i(\V{x}'_{\II_i}) = g_i(\V{x}'_{\II_i}) + b_i = 0$. Rearranging,
\begin{align*}
-g_i(\V{x}'_{\II_i})=b_i.
\end{align*}
Adding $g_i(\V{x}_{\II_i}^{\star})$ to both sides,
\begin{align*}
g_i(\V{x}_{\II_i}^{\star})-g_i(\V{x}'_{\II_i})=g_i(\V{x}_{\II_i}^{\star})+b_i,
\end{align*}
which becomes
\begin{align*}
\phi(\II_i)=\varphi_i(\V{x}_{\II_i}^{\star})\quad\forall i =1, \dots, k.
\end{align*}

\end{proof}

\subsection{Set Attribution Counterexamples}

\label{apd:counter}
We now provide counterexamples to identify situations in which the related methods do not satisfy the Set Attribution axiom.

Let 
\[
f(\V{x}) = \text{ReLU}(x_1+x_3+1) + \text{ReLU}(x_2) + 1.
\]
$f(\V{x})$ can be written as $f(\V{x}) = \varphi_1(\V{x}_{\{1,3\}}
) + \varphi_2(\V{x}_{\{2\}})$ where
$\varphi_1(\V{x}) = \text{ReLU}(x_1+ x_3+1)$, and
$\varphi_2(\V{x}) = \text{ReLU}(x_2)+1$.
According to the Set Attribution axiom, an interaction attribution method admits attributions as 
\begin{itemize}
    \item $\text{ReLU}(x_1+x_3+1)$ for features $\II_1=\{1, 3\}$ 
    \item $\text{ReLU}{(x_2)}+1$ for feature $\II_2=\{2\}$.
\end{itemize}

The above setting serves as counterexamples to the related methods as follows:

\begin{itemize}
    \item CD always assigns $\alpha + \frac{\alpha}{\alpha + \beta}$ to $\II_1$ and $\beta + \frac{\beta}{\alpha + \beta}$ to $\II_2$, where $\alpha = \text{ReLU}(x_1 + x_3+1)$ and $\beta=\text{ReLU}(x_2)$.
    \item SCD uses an expectation over an activation decomposition, which does not guarantee  admission of $\text{ReLU}(x_1+x_3+1)$ for $\II_1$ and $\text{ReLU}(x_2)$ for $\II_2$ through their respective decompositions. In the ideal case SCD becomes CD, which still does not satisfy Set Attribution from above.
    \item IH always assigns a zero attribution to $\II_2$ from hessian computations. IH also does not assign attributions to general sets of features.
    \item SOC does not assign attributions to general feature sets, only contiguous feature sequences.
    \item Both SI and STI assign the following attribution score to $\II_1$:
    \begin{align}
    \text{ReLU}(x_1+x_3+1) - \text{ReLU}(x_1+x_3'+1) - \text{ReLU}(x_1'+x_3+1) + \text{ReLU}(x_1'+x_3'+1).
    \label{eq:relu_terms}
        \end{align}
There do not exist a selection of $x_1'$ and $x_3'$  such that this attribution becomes $\text{ReLU}(x_1+x_3+1)$ for all values of $x_1$ and $x_3$.
    \begin{proof}
    We prove via case-by-case contradiction.
    Only the $\text{ReLU}(x_1+x_3+1)$ term can create  an interaction between $x_1$ and $x_3$, and this term is also the target result, so any nonzero deviation from this term via independent $x_1$ or $x_3$ effects in~\eqref{eq:relu_terms} must be countered. These independent effects manifest as the $\text{ReLU}(x_1+x_3'+1)$  or $\text{ReLU}(x_1'+x_3+1)$ terms respectively. Since ReLU is always non-negative, the only way either of these terms is nonzero is if it is positive, which implies that $\text{ReLU}(x_1+x_3'+1) = x_1+x_3'+1$ or $\text{ReLU}(x_1'+x_3+1) = x_1'+x_3+1$.  If both terms are positive, their substitution into ~\eqref{eq:relu_terms} yields $\text{ReLU}(x_1+x_3+1) - x_1-x_3'-1 - x_1' - x_3 -1 + \text{ReLU}(x_1'+x_3'+1)$. Even if $\text{ReLU}(x_1'+x_3'+1)$ is positive, we obtain $\text{ReLU}(x_1+x_3+1) - x_1-x_3'-1 - x_1' - x_3 -1 + x_1'+x_3'+1 =\text{ReLU}(x_1+x_3+1)  -x_1-x_3-1$. Asserting $-x_1-x_3-1=0$ is a contradiction.
    If only one of the  independent effects was positive, we also cannot assert $0$ through similar simplifications.
    
    Now consider the remaining case where $\text{ReLU}(x_1+x_3'+1) = \text{ReLU}(x_1'+x_3+1) = \text{ReLU}(x_1'+x_3'+1) = 0$. For any real-valued $x_1'$ or $x_3'$ , there can also be a negative real-valued $x_3$ or $x_1$ respectively. From either terms $\text{ReLU}(x_1+x_3'+1)$ or  $\text{ReLU}(x_1'+x_3+1)$, we obtain $\text{ReLU}(1) = 0$, which is a contradiction.
    \end{proof}
\end{itemize}

\section{Other Axioms}

\label{apd:other}

\subsection{Sensitivity Axiom}

\begin{restatable}[Sensitivity (a)]{lemma}{sensitivitya}\label{thm:sensitivit_a}
If $\V{x}^{\star}$ and $\V{x}'$ only differ at features indexed in $\II$ and $f(\V{x}^{\star})\neq f(\V{x}')$, then $\phi(\II)$~\eqref{eq:att} yields a nonzero attribution.
\end{restatable}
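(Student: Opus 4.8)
The plan is to show that under the stated hypothesis the ``island'' input $\V{x}^{\star}_\II+\V{x}'_{\setminus\II}$ appearing in~\eqref{eq:att} actually coincides with $\V{x}^{\star}$, at which point the conclusion is immediate. First I would unpack the hypothesis that $\V{x}^{\star}$ and $\V{x}'$ differ only at coordinates in $\II$: this means $x^{\star}_i = x'_i$ for every $i \notin \II$. Then I would examine the composite vector $\V{x}^{\star}_\II+\V{x}'_{\setminus\II}$ coordinate by coordinate using the piecewise definition~\eqref{eq:xi}. For $i \in \II$ its entry is $x^{\star}_i$, and for $i \notin \II$ its entry is $x'_i$, which by the hypothesis equals $x^{\star}_i$. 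Hence every coordinate matches that of $\V{x}^{\star}$, so $\V{x}^{\star}_\II+\V{x}'_{\setminus\II} = \V{x}^{\star}$.

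Substituting this identity directly into the attribution formula~\eqref{eq:att} gives
\[
\phi(\II) = f(\V{x}^{\star}_\II+\V{x}'_{\setminus\II}) - f(\V{x}') = f(\V{x}^{\star}) - f(\V{x}').
\]
The remaining assumption $f(\V{x}^{\star}) \neq f(\V{x}')$ then forces $\phi(\II) \neq 0$, which is exactly the Sensitivity (a) conclusion.

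This argument is a pure substitution, so there is no genuine analytic obstacle; the only point requiring care is to verify the coordinatewise coincidence rigorously rather than asserting it. In particular, I would make explicit that the ``differ only at $\II$'' assumption is what collapses the $\setminus\II$ block of the composite input onto the corresponding $\V{x}^{\star}$ entries, since without it the masked baseline context would genuinely alter the function value. I expect the write-up to be only a few lines, essentially the chain of equalities above preceded by the coordinatewise check.
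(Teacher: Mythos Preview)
Your proposal is correct and follows essentially the same route as the paper's own proof: both observe that the hypothesis forces $\V{x}'_{\setminus\II}=\V{x}^{\star}_{\setminus\II}$ (the paper states this at the vector level, you verify it coordinatewise), hence $\V{x}^{\star}_\II+\V{x}'_{\setminus\II}=\V{x}^{\star}$, and then substitute into~\eqref{eq:att} to obtain $\phi(\II)=f(\V{x}^{\star})-f(\V{x}')\neq 0$.
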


\begin{proof}
Since $\V{x}^{\star}$ and $\V{x}'$ only differ at $\mathcal{I}$, the following is true: $\V{x}^{\star}_{\setminus \II} = \V{x}'_{\setminus \II}  $. We can therefore write $\V{x}^{\star}$ as
\begin{align*}
    \V{x}^{\star} & = \V{x}^{\star}_\II  + \V{x}^{\star}_{\setminus\II}\\
    &= \V{x}^{\star}_\II  + \V{x}'_{\setminus\II}
\end{align*}

Substituting this equivalence in~\eqref{eq:att}, we have 
   \begin{align*}
        \phi(\II) &=  f(\V{x}^{\star}_{\II}+ \V{x}'_{\setminus {\II}}) - f(\V{x}')\\
        &=  f(\V{x}^{\star}) - f(\V{x}').
    \end{align*}

Since $f(\V{x}^{\star}) - f(\V{x}') \neq 0 $, we directly obtain $\phi(\II) \neq 0$.
    
\end{proof}

\begin{restatable}[Sensitivity (b)]{lemma}{sensitivityb}\label{thm:sensitivit_b}
If $f$ does not functionally depend on $\II$, then $\phi(\II)$ is always zero.
\end{restatable}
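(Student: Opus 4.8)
The plan is to argue directly from the definition of {\archattribute} in~\eqref{eq:att}, exploiting the fact that the two function evaluations appearing in $\phi(\II)$ differ only at coordinates inside $\II$. First I would rewrite the baseline as $\V{x}' = \V{x}'_{\II} + \V{x}'_{\setminus \II}$, using the element-wise decomposition from~\eqref{eq:xi}. Then the two arguments of $f$ in $\phi(\II) = f(\V{x}^{\star}_{\II} + \V{x}'_{\setminus \II}) - f(\V{x}')$ are $\V{x}^{\star}_{\II} + \V{x}'_{\setminus \II}$ and $\V{x}'_{\II} + \V{x}'_{\setminus \II}$, which agree on every coordinate in $\setminus \II$ and may disagree only on coordinates in $\II$.

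The key step is to make precise what ``$f$ does not functionally depend on $\II$'' means: for any two inputs $\V{u}, \V{v} \in \real^p$ with $\V{u}_{\setminus \II} = \V{v}_{\setminus \II}$, we have $f(\V{u}) = f(\V{v})$. Applying this with $\V{u} = \V{x}^{\star}_{\II} + \V{x}'_{\setminus \II}$ and $\V{v} = \V{x}'_{\II} + \V{x}'_{\setminus \II} = \V{x}'$, whose restrictions to $\setminus \II$ coincide by construction, gives $f(\V{x}^{\star}_{\II} + \V{x}'_{\setminus \II}) = f(\V{x}')$, so the two terms of $\phi(\II)$ cancel and $\phi(\II) = 0$.

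I do not expect a genuine obstacle here; the only subtlety is stating the independence hypothesis correctly as invariance of $f$ under changes confined to $\II$, after which the cancellation is immediate and requires no appeal to the additive decomposition~\eqref{eq:general} used for Completeness. The argument is a single substitution, so the proof should be at most a few lines.
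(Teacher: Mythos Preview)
Your proposal is correct and matches the paper's own proof essentially line for line: the paper also writes $\V{x}' = \V{x}'_{\II} + \V{x}'_{\setminus\II}$, invokes the hypothesis to get $f(\V{x}^{\star}_{\II} + \V{x}'_{\setminus\II}) = f(\V{x}'_{\II} + \V{x}'_{\setminus\II}) = f(\V{x}')$, and concludes $\phi(\II)=0$. Your added sentence making the independence hypothesis explicit is a slight clarification but not a different approach.
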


\begin{proof}
Since $f$ does not functionally depend on $\II$, 
\begin{align*}
 f(\V{x}^{\star}_\II + \V{x}'_{\setminus\II} )&=f(\V{x}'_\II + \V{x}'_{\setminus\II} ) \\
&=f(\V{x}')
\end{align*}
Therefore,
\begin{align*}
 \phi(\II) &=  f(\V{x}^{\star}_{\II}+ \V{x}'_{\setminus {\II}}) - f(\V{x}') =0.
\end{align*}
\end{proof}

\subsection{Implementation Invariance}

\begin{restatable}[Implementation Invariance]{lemma}{invariance}\label{thm:invariance}
For functionally equivalent models (with the same input-output mapping), $\phi(\cdot)$ are the same.
\end{restatable}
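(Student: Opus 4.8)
The plan is to exploit the fact that {\archattribute} in~\eqref{eq:att} is defined purely through forward evaluations of the model $f$, with no reference to gradients, parameters, or any other internal computation. First I would fix an interaction set $\II$ and write out $\phi(\II) = f(\V{x}^{\star}_\II+\V{x}'_{\setminus \II}) - f(\V{x}')$, observing that the only two inputs ever queried are the composite point $\V{x}^{\star}_\II+\V{x}'_{\setminus \II}$ and the baseline $\V{x}'$. Both of these points live in $\real^p$ and are determined solely by $\V{x}^{\star}$, $\V{x}'$, and $\II$ via the element-wise construction in~\eqref{eq:xi} --- none of which depends in any way on the model being explained.

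Next I would invoke the definition of functional equivalence: if two models $f$ and $\tilde{f}$ realize the same input-output mapping, then $f(\V{z}) = \tilde{f}(\V{z})$ for every $\V{z}\in\real^p$. Applying this equality at the two query points above yields $f(\V{x}^{\star}_\II+\V{x}'_{\setminus \II}) = \tilde{f}(\V{x}^{\star}_\II+\V{x}'_{\setminus \II})$ and $f(\V{x}') = \tilde{f}(\V{x}')$, so the two differences, and hence the attributions $\phi(\II)$ computed under either model, coincide. Since $\II$ was arbitrary, the attributions agree on every feature set, which is precisely the claim.

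The argument has essentially no hard step; it follows by substitution, and there is no genuine obstacle to overcome. The only point worth emphasizing is the contrast with gradient- or Hessian-based attributions such as IG and IH, whose intermediate quantities can in principle differ across functionally equivalent implementations; {\archattribute} sidesteps this issue because it is model-agnostic and accesses $f$ only as a black box. I would close by noting that the same reasoning transfers verbatim to the dimensionality-reduced variant $f' = f\circ\xi$ of Appendix~\ref{apd:reduction}, since a shared encoder $\xi$ preserves the functional equivalence of $f'$ and therefore the equality of the resulting attributions.
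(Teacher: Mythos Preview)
Your proof is correct and matches the paper's one-line argument: $\phi$ in~\eqref{eq:att} depends only on forward evaluations of $f$ at inputs determined solely by $\V{x}^{\star}$, $\V{x}'$, and $\II$, so functional equivalence immediately gives equal attributions. One caveat on your closing commentary: IG (and by extension IH) were in fact designed to satisfy Implementation Invariance---gradients of functionally equivalent smooth models coincide as functions---so that contrast is misplaced, though it does not affect the validity of your actual proof.
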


The definition of~\eqref{eq:att} only relies on function calls to $f$, which implies Implementation Invariance.

\subsection{Linearity}
\begin{restatable}[Linearity on $\mathcal{S}$]{lemma}{linearity}\label{thm:linearity}
If two models $f_1$, $f_2$ have the same disjoint feature sets $\mathcal{S}$ and $f=c_1 f_1 + c_2 f_2$ where $c_1,c_2$ are constants, then $\phi(\II) = c_1\phi_1(\II) + c_2 \phi_2(\II)~\forall \II\in\mathcal{S}$.
\end{restatable}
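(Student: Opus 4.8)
The plan is to prove Linearity directly from the definition of {\archattribute} in~\eqref{eq:att}, since the attribution is nothing more than a difference of two function evaluations and function evaluation is linear in $f$. First I would fix an arbitrary feature set $\II\in\mathcal{S}$ and write out the definition of $\phi(\II)$ for the combined model $f = c_1 f_1 + c_2 f_2$:
\begin{align*}
    \phi(\II) = f(\V{x}^{\star}_\II+\V{x}'_{\setminus \II}) - f(\V{x}').
\end{align*}

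Next I would substitute $f = c_1 f_1 + c_2 f_2$ into both evaluation points. Because the same input arguments $\V{x}^{\star}_\II+\V{x}'_{\setminus \II}$ and $\V{x}'$ are fed to the combined model, I can expand each term using $f(\V{z}) = c_1 f_1(\V{z}) + c_2 f_2(\V{z})$ and regroup by model index:
\begin{align*}
    \phi(\II) &= \left(c_1 f_1(\V{x}^{\star}_\II+\V{x}'_{\setminus \II}) + c_2 f_2(\V{x}^{\star}_\II+\V{x}'_{\setminus \II})\right) - \left(c_1 f_1(\V{x}') + c_2 f_2(\V{x}')\right)\\
    &= c_1\left(f_1(\V{x}^{\star}_\II+\V{x}'_{\setminus \II}) - f_1(\V{x}')\right) + c_2\left(f_2(\V{x}^{\star}_\II+\V{x}'_{\setminus \II}) - f_2(\V{x}')\right).
\end{align*}
Recognizing each parenthesized difference as $\phi_1(\II)$ and $\phi_2(\II)$ respectively gives $\phi(\II) = c_1\phi_1(\II) + c_2\phi_2(\II)$.

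The only subtlety I would flag is the role of the shared-$\mathcal{S}$ hypothesis. The algebraic identity above holds for \emph{any} $\II$ regardless of whether $f_1$ and $f_2$ share the same disjoint sets, because~\eqref{eq:att} is purely a black-box evaluation and does not reference the decomposition of $\mathcal{S}$. The assumption that $f_1$ and $f_2$ induce the same $\mathcal{S}$ is what makes the conclusion \emph{meaningful}: it guarantees that $\phi_1(\II)$, $\phi_2(\II)$, and $\phi(\II)$ are all attributions indexed by the \emph{same} collection of feature sets, so that the per-set identity $\phi(\II) = c_1\phi_1(\II) + c_2\phi_2(\II)$ can be asserted for all $\II\in\mathcal{S}$ simultaneously. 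I therefore do not expect any genuine obstacle; the proof is a one-line consequence of linearity of evaluation, and the main care is simply in stating the shared-$\mathcal{S}$ premise so the comparison of attributions across the two models is well-posed.
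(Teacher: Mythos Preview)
Your argument is correct and is in fact more direct than the paper's. The paper proves linearity by first invoking the generalized additive decomposition~\eqref{eq:general} for each of $f_1$ and $f_2$ separately, summing the decompositions to obtain one for $f$, and then applying the intermediate identity~\eqref{eq:idv_diff} from Lemma~\ref{thm:complete}, namely $\phi(\II_i) = g_i(\V{x}^{\star}_{\II_i}) - g_i(\V{x}'_{\II_i})$, to each model before regrouping. You instead work straight from the black-box definition~\eqref{eq:att} and use only that evaluation at a fixed input is linear in $f$; no decomposition or appeal to Lemma~\ref{thm:complete} is needed. Your observation that the shared-$\mathcal{S}$ hypothesis is not used in the algebra, but only to make the per-set comparison well-posed, is also accurate and is a point the paper's longer route obscures. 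The paper's approach buys consistency with the machinery used for the other axioms (Completeness, Set Attribution), but your route is the cleaner proof of this particular lemma.
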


\begin{proof}

Since $f_1$ and $f_2$ have the same $\mathcal{S} =\{\II_i\}_{i=1}^k$, we can write $f_1$ and $f_2$ as follows via~\eqref{eq:general} in Lemma~\ref{thm:complete}:
\begin{align*}
        f_1(\V{x}) &= \sum_{i=1}^{k} g^{(1)}_i(\V{x}_{\II_i}) + b^{(1)}, \\
        f_2(\V{x}) &= \sum_{i=1}^{k} g^{(2)}_i(\V{x}_{\II_i}) + b^{(2)}.
\end{align*}
Since $f=c_1f_1 + c_2f_2$,
\begin{align}
        f(\V{x}) &= c_1 f_1(\V{x}) + c_2 f_2(\V{x}) \nonumber\\
        &=  \left(\sum_{i=1}^{k}c_1\times g^{(1)}_i(\V{x}_{\II_i}) + c_1\times b^{(1)}\right) +  \left(\sum_{i=1}^{k}c_2\times g^{(2)}_i(\V{x}_{\II_i}) + c_2\times b^{(2)}\right) \nonumber\\
        &=  \sum_{i=1}^{k} \left(c_1\times g^{(1)}_i(\V{x}_{\II_i}) + c_2\times g^{(2)}_i(\V{x}_{\II_i}) \right) + c_1 b^{(1)} + c_2 b^{(2)}.\label{eq:linear_comb}
\end{align}

By grouping terms as $g_i(\V{x}_{\II_i}) = c_1\times g^{(1)}_i(\V{x}_{\II_i}) + c_2\times g^{(2)}_i(\V{x}_{\II_i})$ and $b =  c_1 b^{(1)} + c_2 b^{(2)}$, we write~\eqref{eq:linear_comb} as 
\begin{align}
        f(\V{x})= \sum_{i=1}^{k} g_i(\V{x}_{\II_i}) + b. 
        \label{eq:invariance_gam}
\end{align}
From the form of~\eqref{eq:invariance_gam}, we can invoke~\eqref{eq:idv_diff}:
 $\phi(\II_i)= g_i(\V{x}^{\star}_{\II_i}) -  g_i(\V{x}'_{\II_i})$ via Lemma~\ref{thm:complete}. This equation is rewritten as
\begin{align*}
    \phi(\II_i)&= g_i(\V{x}^{\star}_{\II_i}) -  g_i(\V{x}'_{\II_i})\\
    &= \left(c_1\times g^{(1)}_i(\V{x}^{\star}_{\II_i}) + c_2\times g^{(2)}_i(\V{x}^{\star}_{\II_i})\right) - \left(c_1\times g^{(1)}_i(\V{x}'_{\II_i}) + c_2\times g^{(2)}_i(\V{x}'_{\II_i})\right)\\
    &= c_1\left(g^{(1)}_i(\V{x}^{\star}_{\II_i}) -  g^{(1)}_i(\V{x}'_{\II_i})\right) + c_2\left(g^{(2)}_i(\V{x}^{\star}_{\II_i}) -  g^{(2)}_i(\V{x}'_{\II_i})\right)\\
    &=  c_1\phi_1(\II_i) +  c_2\phi_2(\II_i).
\end{align*}
By noting that $\mathcal{S}=\{\II_i\}_{i=1}^k$, this concludes the proof.

\end{proof}

\subsection{Symmetry-Preserving}

We first define \emph{symmetric feature sets} as a generalization of ``symmetric variables'' from~\cite{sundararajan2017axiomatic}.
Feature index sets $\II_1$ and $\II_2$ are symmetric with respect to  function $f$ if swapping features in $\II_1$ with the features in  $\II_2$ does not change the function,  This implies that for symmetric $\II_1$ and $\II_2$,  their cardinalities are the same $\abs{\II_1} = \abs{\II_2}$, and they are disjoint sets in order to swap the features to any valid set index.

\begin{restatable}[Symmetry-Preserving]{lemma}{symmetry}\label{thm:symmetry}
 For $\V{x}^{\star}$ and $\V{x}'$ that each have identical feature values between symmetric feature sets with respect to $f$, the symmetric feature sets receive identical attributions $\phi(\cdot)$.
\end{restatable}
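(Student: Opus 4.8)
The plan is to work directly from the definition $\phi(\II) = f(\V{x}^{\star}_{\II} + \V{x}'_{\setminus\II}) - f(\V{x}')$ in~\eqref{eq:att}. Since the subtracted term $f(\V{x}')$ is common to both $\phi(\II_1)$ and $\phi(\II_2)$, the claim reduces to the single identity $f(\V{x}^{\star}_{\II_1} + \V{x}'_{\setminus\II_1}) = f(\V{x}^{\star}_{\II_2} + \V{x}'_{\setminus\II_2})$. To exploit symmetry, I would introduce the coordinate permutation $\pi$ on $\{1,\dots,p\}$ that realizes the swap between $\II_1$ and $\II_2$: it exchanges each index of $\II_1$ with its partner in $\II_2$ (these partners exist and $\pi$ is an involution because $\abs{\II_1}=\abs{\II_2}$ and the two sets are disjoint) and fixes every other coordinate. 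Writing $\pi\cdot\V{z}$ for the vector with $(\pi\cdot\V{z})_i = z_{\pi(i)}$, the definition of symmetric feature sets becomes the statement $f(\pi\cdot\V{z}) = f(\V{z})$ for all $\V{z}$, and the hypothesis that $\V{x}^{\star}$ and $\V{x}'$ have identical values across the symmetric sets becomes $\pi\cdot\V{x}^{\star}=\V{x}^{\star}$ and $\pi\cdot\V{x}'=\V{x}'$.

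The main step is then to apply $\pi$ to the island vector $\V{u} \vcentcolon= \V{x}^{\star}_{\II_1}+\V{x}'_{\setminus\II_1}$ and track it block by block. On $\II_1$ the entries of $\V{u}$ are $x^{\star}_i$, on $\II_2$ they are $x'_i$ (using disjointness, so $\II_2\subseteq\setminus\II_1$), and elsewhere they are $x'_i$. Computing $(\pi\cdot\V{u})_i = u_{\pi(i)}$ on each block and then invoking the invariance hypotheses to rewrite $x'_{\pi(i)}=x'_i$ (for $i\in\II_1$) and $x^{\star}_{\pi(i)}=x^{\star}_i$ (for $i\in\II_2$) shows that $\pi\cdot\V{u}$ equals $\V{x}^{\star}_{\II_2}+\V{x}'_{\setminus\II_2}$ coordinate-for-coordinate. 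Applying $f(\pi\cdot\V{u})=f(\V{u})$ then yields the desired identity and hence $\phi(\II_1)=\phi(\II_2)$.

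I expect the only real obstacle to be the coordinate bookkeeping: one must check that symmetry alone moves the $\V{x}^{\star}$-block out of $\II_1$ and into $\II_2$ but leaves it carrying the \emph{values indexed at $\II_1$}, so the two invariance hypotheses are exactly what is needed to convert those into the values at $\II_2$ and turn the swapped vector into the genuine island vector for $\II_2$. Everything else — the existence and involutivity of $\pi$, and the correct placement of the $\V{x}'$-block — follows from the structural consequences of the definition ($\abs{\II_1}=\abs{\II_2}$ and disjointness) already recorded just before the statement.
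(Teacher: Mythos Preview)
Your proposal is correct and follows essentially the same route as the paper's proof: both reduce $\phi(\II_1)=\phi(\II_2)$ to the single identity $f(\V{x}^{\star}_{\II_1}+\V{x}'_{\setminus\II_1})=f(\V{x}^{\star}_{\II_2}+\V{x}'_{\setminus\II_2})$ and obtain it by swapping the $\II_1$- and $\II_2$-blocks, using the symmetry of $f$ together with the invariance hypotheses on $\V{x}^{\star}$ and $\V{x}'$. The only cosmetic difference is that you formalize the swap via an explicit involution $\pi$ and work directly on the island vector, whereas the paper first records the slightly more general identity $f(\V{x}^{\star}_{\II_1}+\V{x}'_{\II_2}+\V{x}_{\setminus(\II_1\cup\II_2)})=f(\V{x}'_{\II_1}+\V{x}^{\star}_{\II_2}+\V{x}_{\setminus(\II_1\cup\II_2)})$ and then specializes to $\V{x}=\V{x}'$; your explicit $\pi$ makes the coordinate bookkeeping that the paper leaves implicit a bit more transparent.
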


\begin{proof}

Since $\V{x}^{\star}$ and $\V{x}'$ each have identical feature values between the symmetric feature sets, 
\begin{align*}
    &\{x_i^{\star}\}_{i\in\II_1} = \{x_j^{\star}\}_{j\in\II_2},\\
    &\{x_i'\}_{i\in\II_1} = \{x_j'\}_{j\in\II_2}.
\end{align*}
Therefore, the symmetry implies the following for any $\V{x}$ in the domain of $f$.
\begin{align}
    f\left(\V{x}^{\star}_{\II_1} + \V{x}'_{\II_2} + \V{x}_{\setminus{(\II_1\cup \II_2)}}\right)
    =  f\left(\V{x}'_{\II_1} + \V{x}^{\star}_{\II_2} + \V{x}_{\setminus{(\II_1\cup \II_2)}}\right) 
    \label{eq:symmetry}
\end{align}
Setting $\V{x}=\V{x}'$, we rewrite~\eqref{eq:symmetry} as
\begin{align*}
    f&\left(\V{x}^{\star}_{\II_1} + \V{x}'_{\II_2} + \V{x}'_{\setminus{(\II_1\cup \II_2)}}\right) - f\left(\V{x}'_{\II_1} + \V{x}^{\star}_{\II_2} + \V{x}'_{\setminus{(\II_1\cup \II_2)}}\right) = 0\\
    &= f(\V{x}^{\star}_{\II_1}  + \V{x}'_{\setminus{\II_1}}) - f(\V{x}^{\star}_{\II_2}  + \V{x}'_{\setminus{\II_2}}) \\
    &=\left(f(\V{x}^{\star}_{\II_1}  + \V{x}'_{\setminus{\II_1}})- f(\V{x}')\right) - \left(f(\V{x}^{\star}_{\II_2}  + \V{x}'_{\setminus{\II_2}})- f(\V{x}')\right) \\
    &= \phi(\II_1) - \phi(\II_2)
\end{align*}

Therefore, $\phi(\II_1) = \phi(\II_2)$.

\end{proof}

\section{Discrete Mixed Partial Derivatives Detect Non-Additive Statistical Interactions}
\label{apd:mixed}

 A generalized additive model $f_g$  is given by 
 \begin{align}
     f_g(\V{x}) = \sum_{i=1}^p g_i(x_i) + b, \label{eq:gam}
 \end{align}
 where $g_i(\cdot)$ can be any function of individual features $x_i$ and $b$ is a bias.
 Since each $x_i$ of $\V{x}\in\mathcal{X}$ only takes on two values, a line can connect all valid points in each feature. Therefore,~\eqref{eq:gam} is equivalent to 
  \begin{align}
     f_\ell(\V{x}) = \sum_{i=1}^p w_ix_i + b, \label{eq:linear}
 \end{align} for weights $w_i\in\real$ and the function domain being $\mathcal{X}$.

For the case where $p=2$, the discrete mixed partial  derivative is given by~\eqref{eq:mixed} or
\begin{align*}
      \frac{\partial^2 f }{\partial x_1 \partial x_2} = \frac{1}{h_1 h_2}\left(f([x_1^{\star}, x_2^{\star} ])  -   f([x_1^{\star}, x_2' ])  -  f([x_1', x_2^{\star} ])  +     f([x_1', x_2' ])\right),  
      \label{eq:mixed2}
\end{align*}

where $h_1 = \abs{x_1^{\star} - x_1'}$ and $h_2= \abs{x_2^{\star} - x_2'}$. 
Since any three points (not on the same line) define a plane of the form~\eqref{eq:linear} ($p=2$), we can write the fourth point as having a function value with deviation $\delta$ from the plane. 
\begin{dmath}
      \frac{\partial^2 f }{\partial x_1 \partial x_2} = \frac{1}{h_1 h_2}\left(f([x_1^{\star}, x_2^{\star} ])  -   f([x_1^{\star}, x_2' ])  -  f([x_1', x_2^{\star} ])  +     f([x_1', x_2' ])\right) \\
      =\frac{1}{h_1 h_2}\left( \left(w_1x_1^{\star} + w_2x_2^{\star} +b+ \delta\right) -  \left(w_1x_1^{\star} + w_2x_2' +b \right) -  \left(w_1x_1' + w_2x_2^{\star}+b \right) +  \left(w_1x_1' + w_2x_2' +b  \right)\right)\\
      = \frac{\delta}{h_1 h_2}.
      \label{eq:mixed3}
\end{dmath}
If~\eqref{eq:mixed3} is $0$, then $\delta=0$, which implies that $f$ can be written as~\eqref{eq:linear}. $\delta\neq0$ implies the opposite, that $f$ cannot be written in linear form (by definition). Since~\eqref{eq:linear} is equivalent to~\eqref{eq:gam} in the domain of $\mathcal{X}$, this implies that $\delta \neq 0$ if and only if $f(\V{x}) \neq g_1(x_1) + g_2(x_2) + b$.

Based on Def.~\ref{def:interaction}, we can conclude that a nonzero discrete mixed partial derivative w.r.t. $x_1$ and $x_2$ in the space $\mathcal{X}$ at $p=2$ detects a non-additive statistical interaction between the two features.

For the case where $p>2$, Def.~\ref{def:interaction} states that a pairwise interaction $\{i, j\}$ exists in $f$ if and only if $f(\V{x}) \neq f_i(\V{x}_{\setminus \{i\}}) + f_j(\V{x}_{\setminus \{j\}}) $ for functions $f_i(\cdot)$ and $f_j(\cdot)$. This means that $\{i, j\}$ is declared to be an interaction if  a local  $\{i, j\}$ interaction occurs  at any $\V{x}_{\setminus{\{i,j\}}}$, $\V{x}\in\mathcal{X}$. 

Therefore, we can detect non-additive statistical interactions $\{i,j\}$ for general $p\geq2$ via
\begin{align*}
\mathbb{E}_{\V{x}}\left[\frac{\partial^2 f }{\partial x_i \partial x_j}\right]^2 > 0,
 \end{align*}
 which mirrors the definition of pairwise interaction for real-valued $\V{x}$ in~\cite{friedman2008predictive}.

 \section{Early Works on Feature Interaction Interpretation}
 \label{apd:history}
 
 We discuss early works on feature interaction interpretation and provide a timeline for this research history in Table~\ref{table:timeline}. We also discuss  mixed partial derivatives on dichotomous variables in~\ref{apd:discrete_history}.
\subsection{Origins}
The notion of a feature interaction has been studied at least since the $19$th century when John Lawes and Joseph Gilbert used factorial designs in agricultural research at the Rothamsted Experimental Station~\cite{dean2015handbook}. A factorial design is an experiment that includes observations at all combinations of categories of each factor or feature.
However, the
``advantages [of factorial design] had never been clearly
 recognised, and many research workers believed that the best course was
 the conceptually simple one of investigating one question at a time''~\cite{yates1964sir}. In the early $20$th century, Fisher et al. ($1926$)~\cite{fisher1926048} emphasized the importance of factorial designs as being the only way to obtain information about feature interactions. Near the same time, Fisher ($1921$)~\cite{fisher1921probable} also developed one of the foundations of statistical analysis called Analysis of Variance (ANOVA) including two-way ANOVA~\cite{fisher1925statistical}, which is a factorial method to detect pairwise feature interactions based  on differences among group means in a dataset. Tukey ($1949$)~\cite{tukey1949one} extended two-way ANOVA to test if two categorical features are non-additively related to the expected value of a outcome variable. This work set a precedent for later research on detecting feature interactions based on their non-additive definition. Soon after,  experimental designs were generalized  to study feature interactions, in particular the generalized randomized block design~\cite{wilk1955randomization}, which assigns test subjects to different categories (or blocks) between features in a way where cross-categories between features serve as interaction terms in  linear regression.
 
 There was a surge of interest in improving the analysis of feature interactions after the mid $20$th century.  Belsion ($1959$)~\cite{belson1959matching} and Morgan \& Sonquist ($1963$)~\cite{morgan1963problems} proposed Automatic Interaction Detection (AID) originally under a different name. AID detects interactions by subdividing data into disjoint exhaustive subsets to model an outcome  based on categorical features. Based on AID, Kass ($1980$)~\cite{kass1980exploratory} developed Chi-square Automatic Interaction Detection (CHAID), which determines how categorical features best combine in decision trees via a chi-square test. AID and CHAID were precursors to modern decision tree prediction models. Concurrently, Nelder ($1977$)~\cite{nelder1977reformulation} introduced the ``Principle of Marginality'' arguing that a feature interaction and its marginal variables should not be considered separately, for example in linear regression. Hamada \& Wu ($1992$)~\cite{hamada1992analysis} provided a contrasting view  that an interaction is only important if one or both of its marginal variables are important. Around the same time, an influential book on interpreting feature interactions was published on how to test, plot, and understand interactions of two or three continuous or categorical features~\cite{aiken1991multiple}. 
 
 \subsection{Early \nth{21} Century Works}
 At the start of the $21$st century, efforts began to focus on interpreting interactions in accurate prediction models. Ai \& Norton ($2003$)~\cite{ai2003interaction} proposed extracting interactions from logit and probit models via mixed partial derivatives. Gevrey ($2006$) \cite{gevrey2006two} followed up by proposing mixed partial derivatives to extract interactions from multilayer perceptrons with sigmoid activations when at the time, only shallow neural networks were studied. Friedman \& Popescu ($2008$)~\cite{friedman2008predictive} proposed using hybrid models to capture interactions with decision trees and univariate effects with linear regression. Sorokina et al. ($2008$)~\cite{sorokina2008detecting} proposed to use high-performance additive trees to detect feature interactions based on their non-additive definition. At the turn of the decade, we saw Bien et al.~\cite{bien2013lasso} capture interactions with different heredity conditions using a hierarchical lasso on linear regression models. Then, Hao \& Zhang ($2014$)~\cite{hao2014interaction} drew attention towards interaction screening in high dimensional data. This summarizes feature interaction research before $2015$. 
 
 \subsection{Note on Mixed Partial Derivatives on  Dichotomous Variables}
 \label{apd:discrete_history}
To our knowledge, the usage of  mixed partial derivatives for interaction detection on  dichotomous variables (features that only take two possible values)  originated  at the turn of the \nth{21} century~\cite{grabisch1999axiomatic, ai2003interaction}, but existing methods rely on single contexts~\cite{ai2003interaction} or random contexts~\cite{grabisch1999axiomatic,dhamdhere2019shapley}. Furthermore, these methods do not consider the union of overlapping pairwise interactions for disjoint higher-order interaction detection. Our choice of contexts and our disjoint interaction detection are both important to the {\framework} framework, as we discussed in \S\ref{sec:proposeddetection} and showed through axiomatic analysis (\S\ref{sec:axioms}) and experiments (\S\ref{sec:exp_detector}).

{
\renewcommand\arraystretch{1.6}\arrayrulecolor{LightSteelBlue3}

\begin{table}[ht]
\centering
\captionsetup{singlelinecheck=false, labelfont=sc, labelsep=quad}
\center{
\caption{Timeline of research on feature interaction interpretation (Pre-$2015)$\label{table:timeline}}
}\vskip 0ex
\begin{tabular}{p{6cm} @{\hskip 9pt}  @{\,}r <{\hskip 2pt} !{\foo} >{\raggedright\arraybackslash}p{6cm}}
\toprule
\addlinespace[1.5ex]
\emph{Lawes \& Gilbert} - factorial design in agricultural research at the Rothamsted Experimental Station &1843 & \\
\emph{Fisher} -  two-way Analysis of Variance (ANOVA) & 1925 & \\
&1949 & \emph{Tukey} - Tukey’s test of additivity\\
&1955 & \emph{Wilk} - generalized random block design\\
\emph{Belson} - Automatic Interaction Detection by subdividing data &1959 & \\
\emph{Nelder} - Principle of Marginality&1977 & \\
&1980 & \emph{Kass} - Chi-square Automatic Interaction Detection by combining features in decision trees via chi-square tests\\

&1991 & \emph{Aiken \& West} - book on interpreting interaction effects\\
\emph{Hamada \& Wu} - heredity conditions &1992 & \\
 \emph{Ai \& Norton} -  interactions in logit and probit models &2003 &\\
  &2006 & \emph{Gevry et al.} - interactions in sigmoid neural networks\\
\emph{Friedman \& Popescu} - RuleFit to detect interactions by mixing linear regression and trees &2008 & \emph{Sorokina et al.} - Additive Groves to detect non-additive interactions \\
 \emph{Bien et al.} - Hierarchical Lasso&2013 & \\
\emph{Hao \& Zhang} - interaction screening in high dimensional data &2014 & \\
\end{tabular}
\end{table}
}

\renewcommand\arraystretch{1}\arrayrulecolor{black}

\newpage
\section{Attributions Compared to Annotation Labels}
\label{apd:quantile}

\begin{figure*}[ht]
    \centering
    \begin{subfigure}[t]{0.42\textwidth}
    	\vskip 0pt
        \centering
           \includegraphics[scale=0.4]{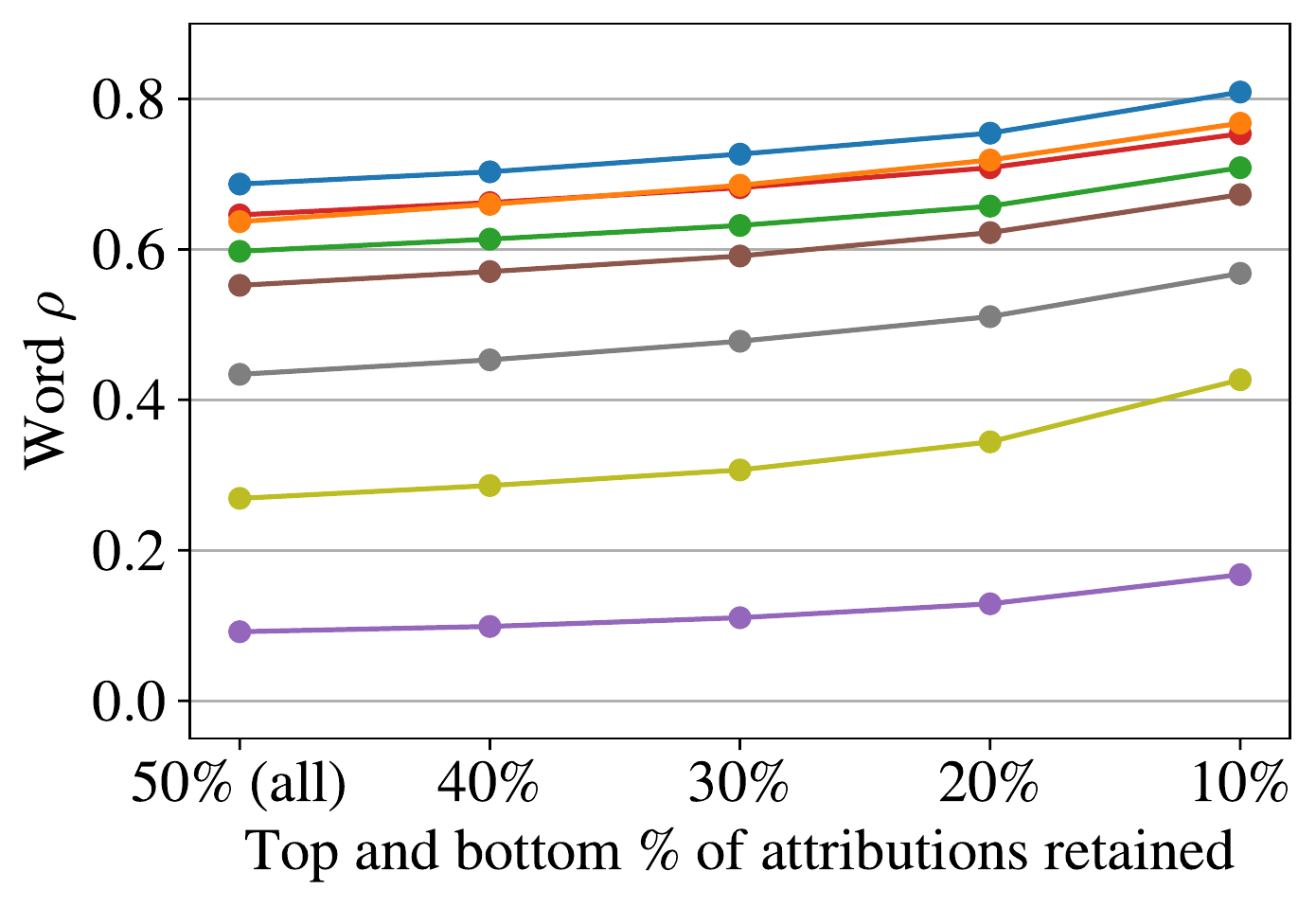}
        \caption{Word $\rho$\label{fig:sweep_bow}}
    \end{subfigure}%
    \begin{subfigure}[t]{0.58\textwidth}
     	\vskip 0pt
        \centering
        \includegraphics[scale=0.4]{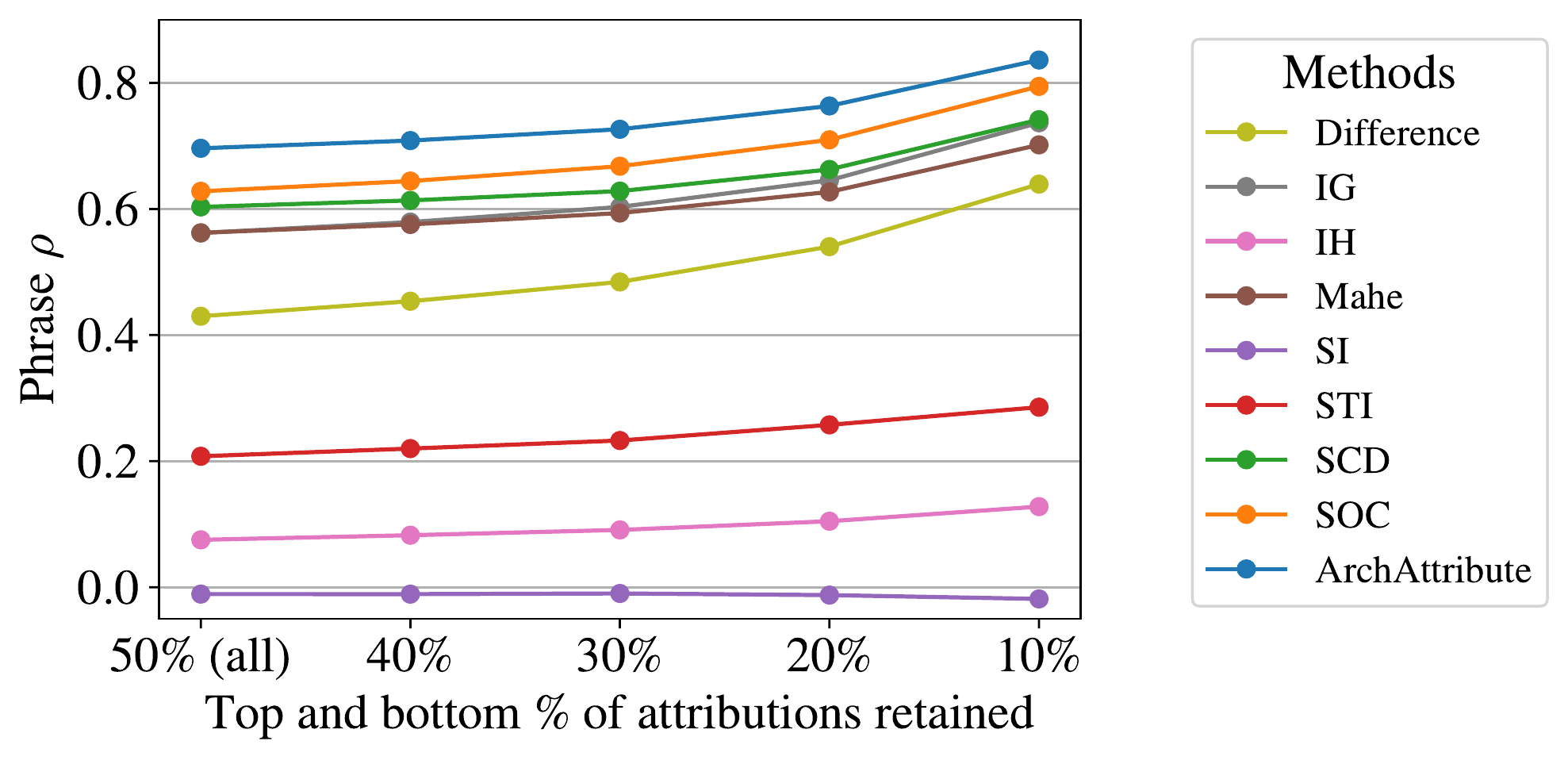}
        \caption{Phrase $\rho$\label{fig:sweep_sst}}
    \end{subfigure}
    \caption{Text explanation metrics ((a) Word $\rho$ and (b) Phrase $\rho$) versus top and bottom $\%$ of attributions retained for different attribution methods on {\bert} over the {\sst} test set. These plots expand the analysis of Table~\ref{table:eval}.
    \label{fig:text_sweep}
    }
\end{figure*}

\begin{figure}[ht]
    \centering
    \includegraphics[scale=0.4]{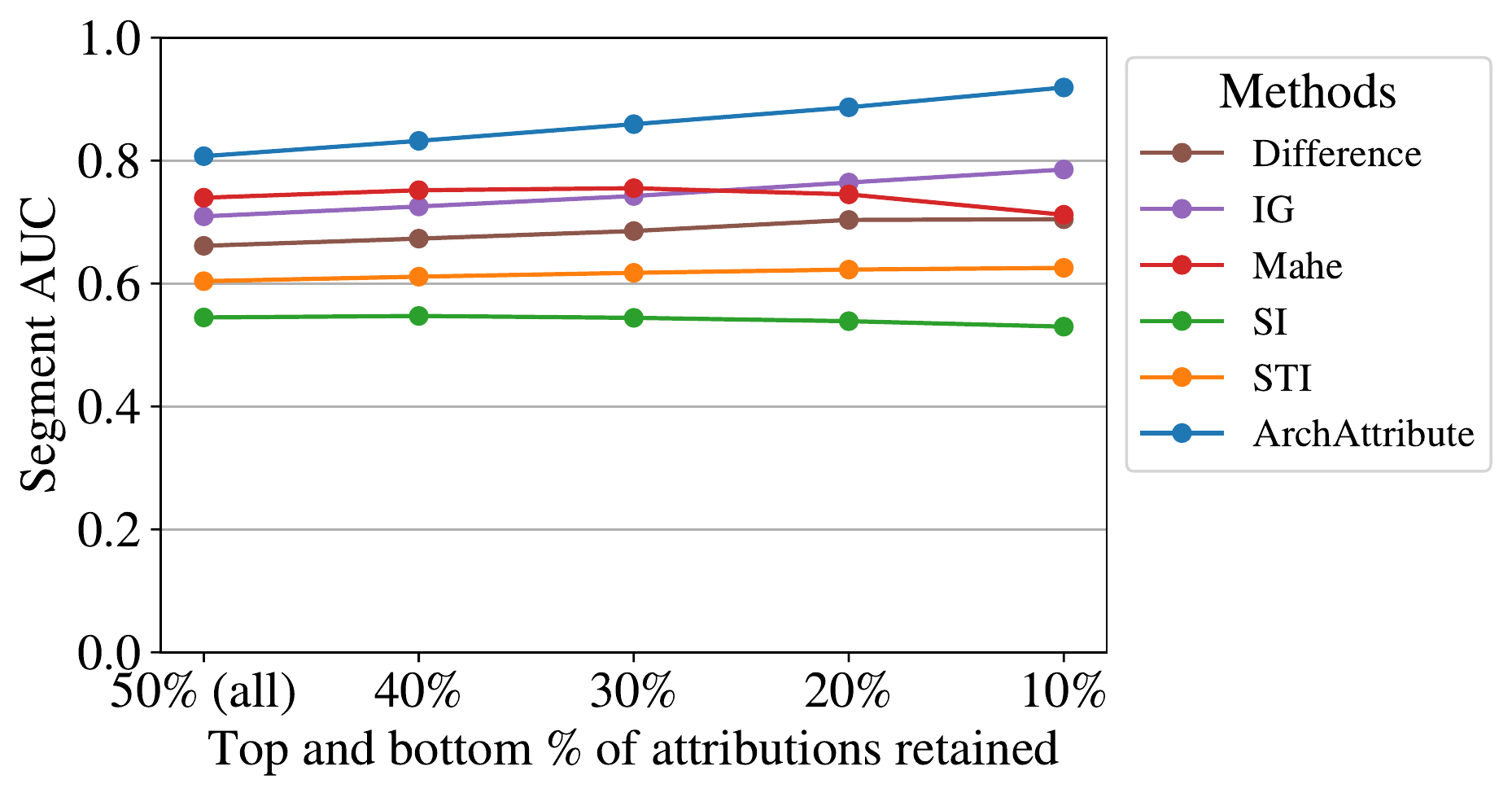}

    \caption{Image explanation metric (segment AUC) versus top and bottom $\%$ of attributions retained for different attribution methods on {\resnet} over the MS COCO test set. These plots expand the analysis of Table~\ref{table:eval}.
    \label{fig:image_sweep}
    }
\end{figure}

\section{Runtime}

\label{apd:runtime}
\begin{figure}[H]
    \centering
    \begin{subfigure}[t]{0.5\textwidth}
    	\vskip 0pt
        \centering
           \includegraphics[scale=0.36]{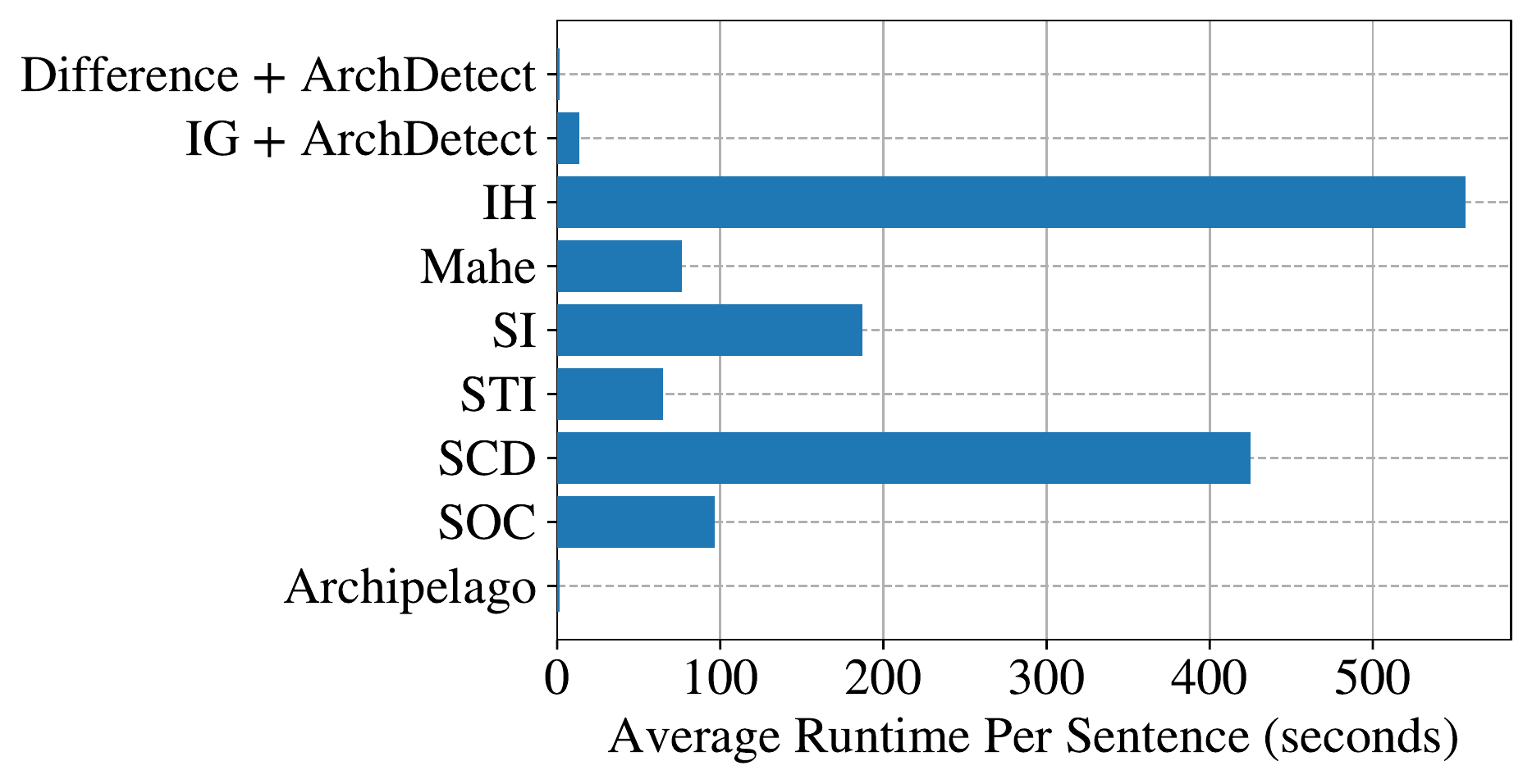}
        \caption{Sentiment Analysis on {\sst}\label{fig:runtime_text}}
    \end{subfigure}%
    \begin{subfigure}[t]{0.5\textwidth}
     	\vskip 0pt
        \centering
        \includegraphics[scale=0.36]{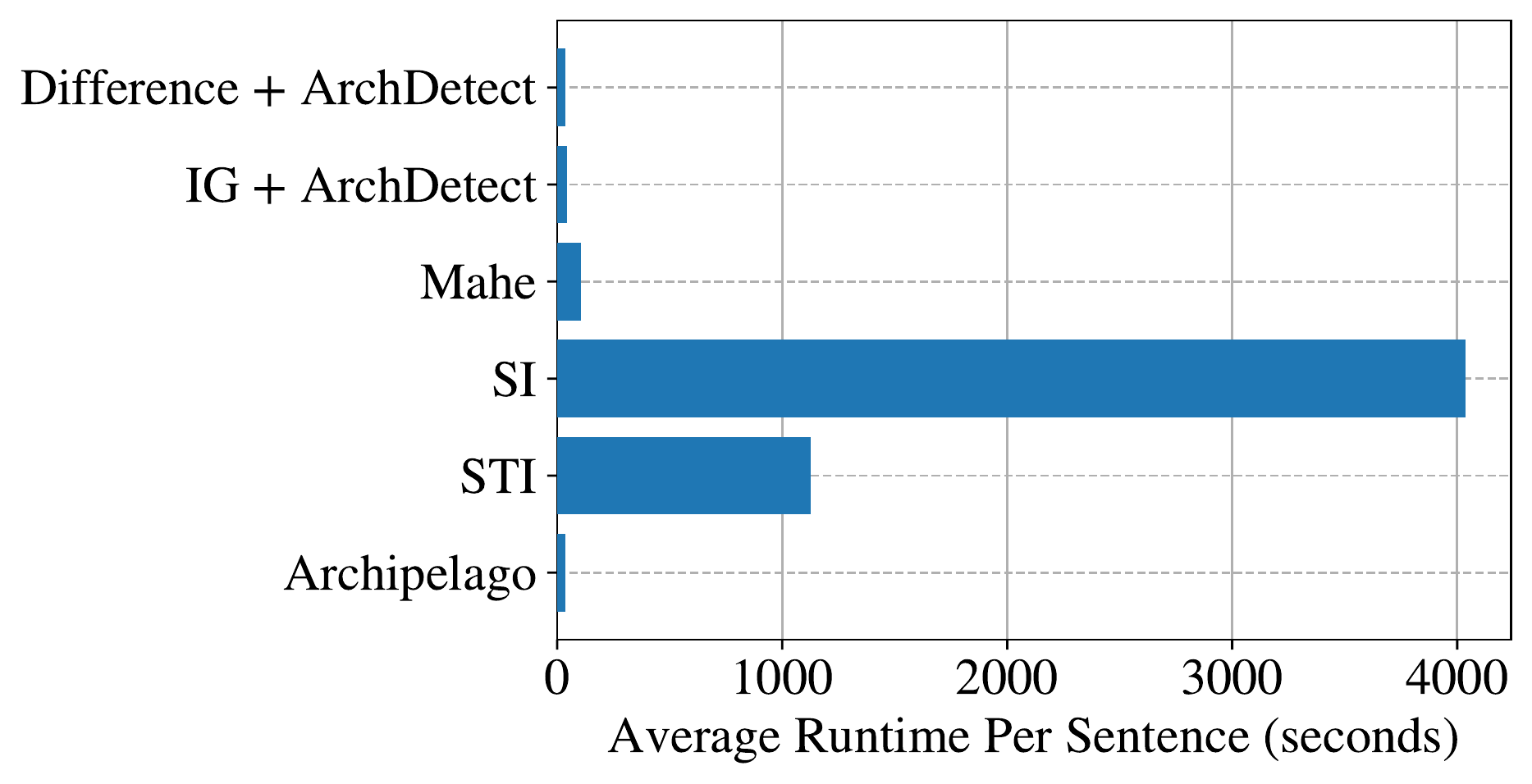}
        \caption{Image Classification on {\imagenet}\label{fig:runtime_image}}
    \end{subfigure}
    \caption{Serial runtime comparison of relevant explainer methods for (a) {\bert} sentiment analysis on {\sst} and (b) {\resnet} image classification on {\imagenet}. Runtimes are averaged across $100$ random data samples from respective test sets. These experiments were done on a server with $32$ Intel Xeon E$5$-$2640$ v2 CPUs @ $2.00$GHz and $2$ Nvidia $1080$ Ti GPUs.
    \label{fig:test}
    }
\end{figure}

\section{Visualization Comparisons}
\label{apd:viz}

\subsection{Sentiment Analysis}

Visualization comparisons  of different attribution methods on {\bert} are shown in Figs.~\ref{fig:text_viz_a}-\ref{fig:text_viz_e} for random test sentences from {\sst}. The visualization format is the same as Fig.~\ref{fig:text_exp}. Note that all \emph{individual} feature attributions that correspond to stop words (from~\cite{manning2008information}) are omitted in these comparisons and Figs.~\ref{fig:motiv},~\ref{fig:text_exp}.

\subsection{Image Classification}

\begin{figure}[ht]
    \centering
    \includegraphics[scale=0.175]{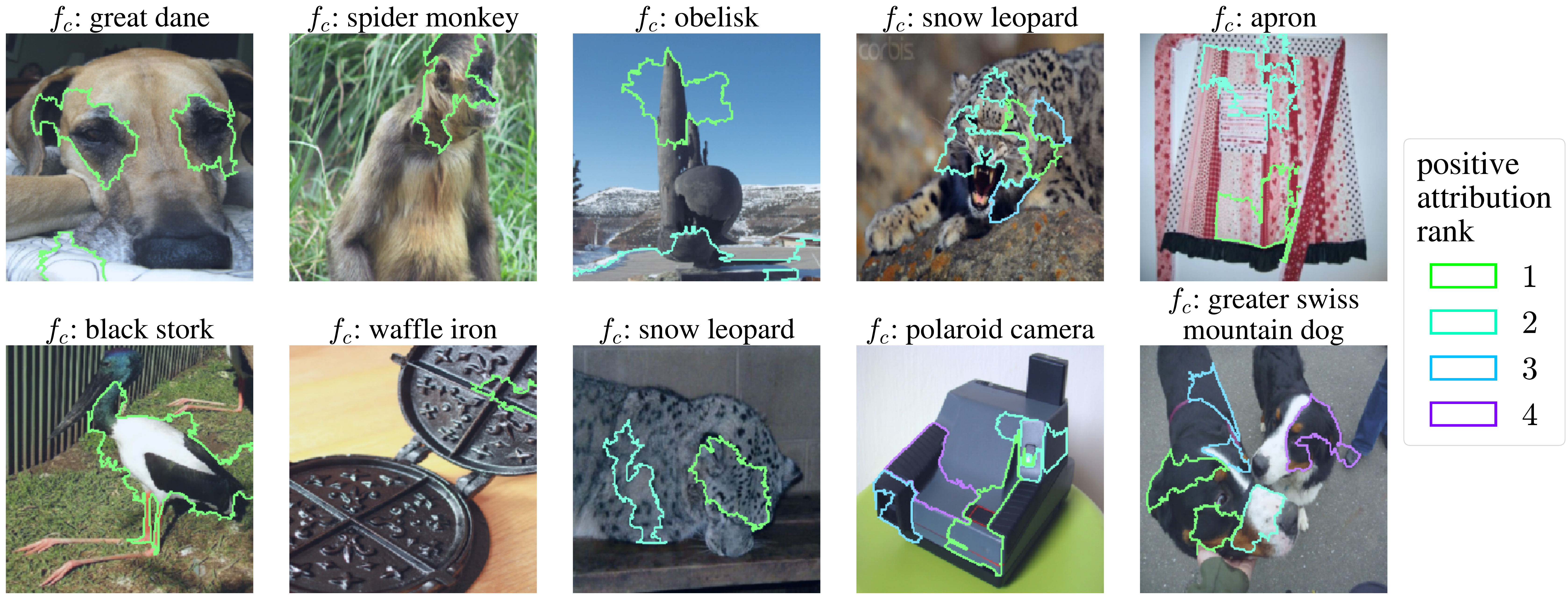}

    \caption{Our {\resnet} visualizations on random test images from {\imagenet}. Colored outlines indicate interactions with positive attribution. $f_c$ is the image classification result. To our knowledge, only this work shows interactions that support the image classification via interaction attribution.}
    \label{fig:apd_img_exp}
\end{figure}    

In Fig.~\ref{fig:apd_img_exp}, we  visualize {\framework} explanations on  $\mathcal{S}$ via top-$5$ pairwise interactions (\S\ref{sec:disjoint}), where  positive attribution interactions are shown for clarity. The images are randomly selected from the {\imagenet} test set. It is interesting to see which image parts interact, such as the eyes of the ``great dane'' image.

Visualization comparisons  of different attribution methods on {\resnet} are shown in Figs.~\ref{fig:image_viz_a}-\ref{fig:image_viz_e} for the same random test images from {\imagenet}.

\begin{figure}[ht]
    \includegraphics[scale=0.214,trim={0 0 20cm 4cm},clip]{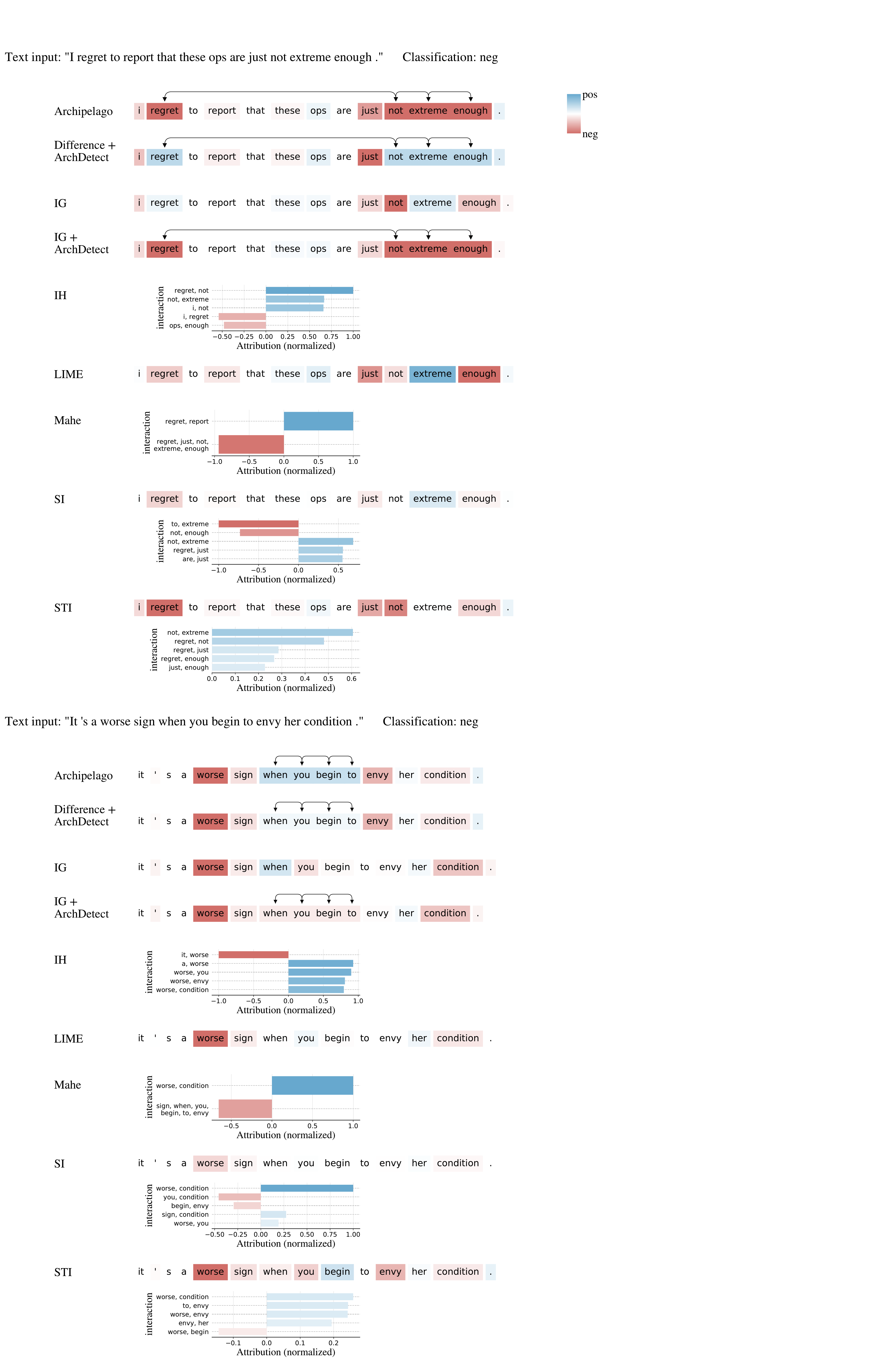}
    
    \caption{Text Viz. Comparison A. In the first text example, ``regret, not extreme enough'' is a meaningful and strongly negative interaction. In the second example, ``when you begin to'' interacts to diminish its overall attribution magnitude.
    \label{fig:text_viz_a}
    }
    \vspace{-0.1in}
\end{figure}

\begin{figure}[ht]
    \includegraphics[scale=0.214,trim={0 0 16cm 4cm},clip]{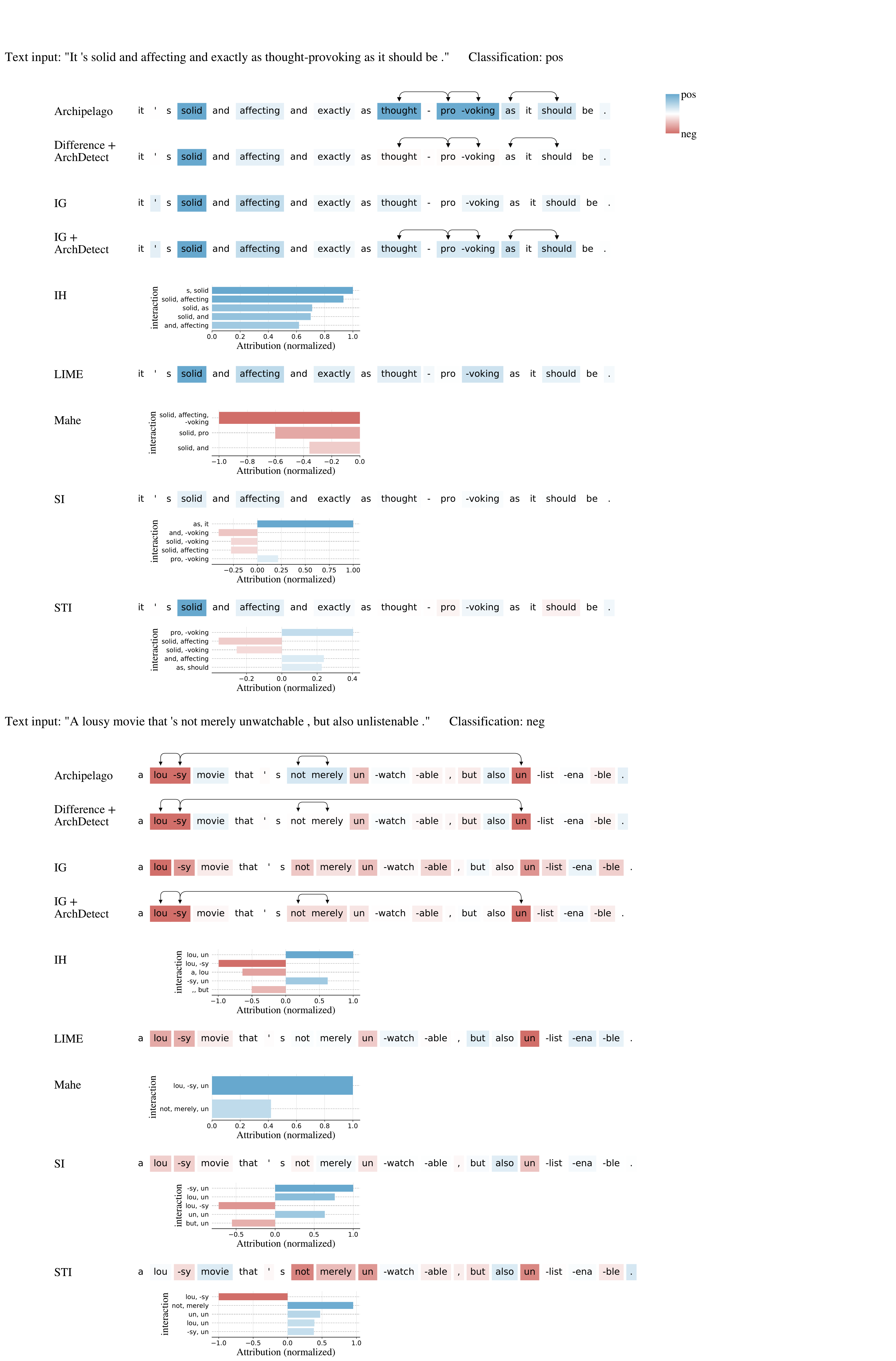}
    \caption{Text Viz. Comparison B. In the first text example, ``thought provoking'' is a meaningful and strongly positive interaction. In the second example, the ``lousy, un'' interaction factors in a large context to make a negative text classification.
    \label{fig:text_viz_b}
    }
    \vspace{-0.1in}
\end{figure}

\begin{figure}[ht]
    \includegraphics[scale=0.214,trim={0 0 12cm 4cm},clip]{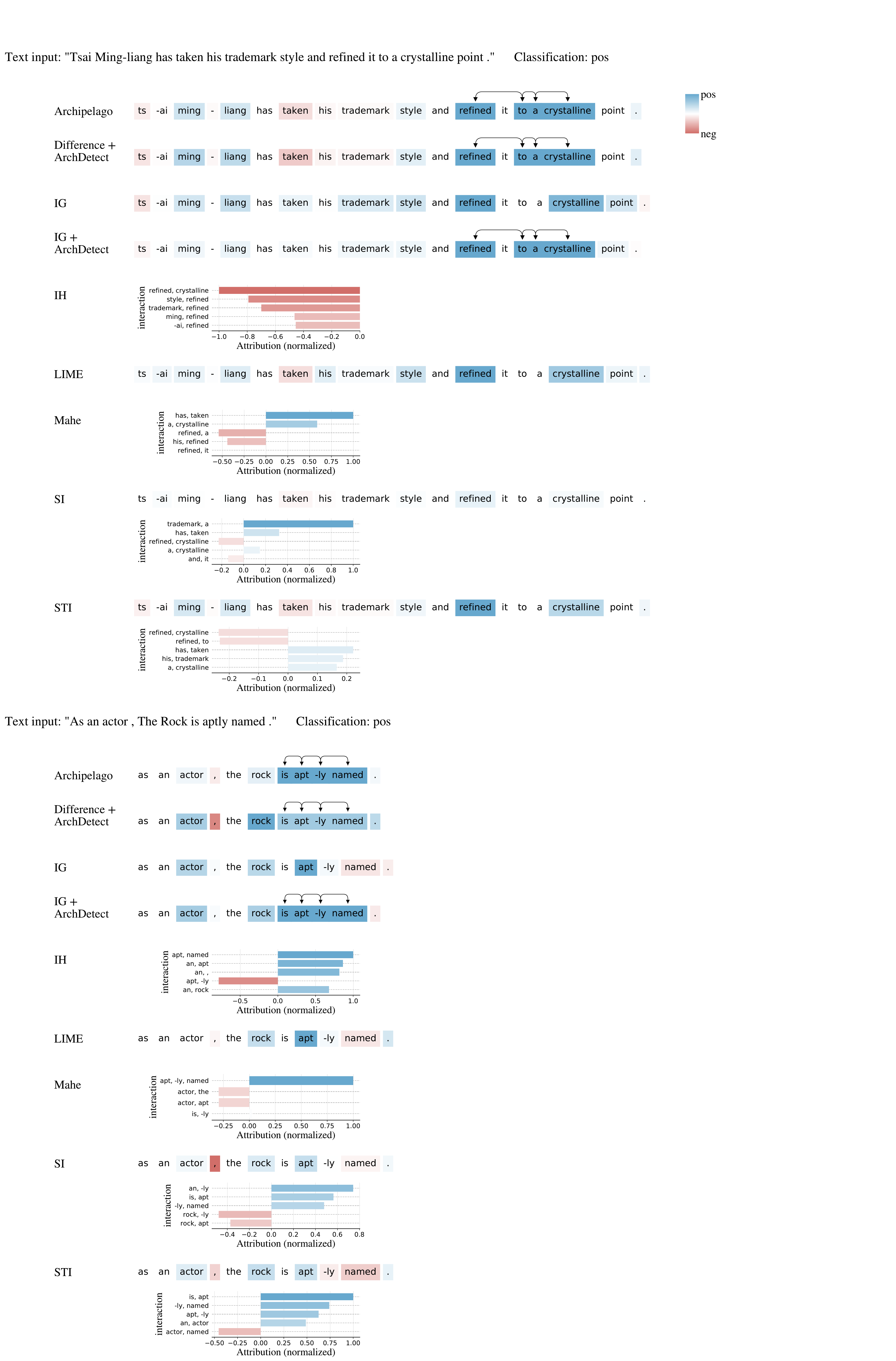}
    \caption{Text Viz. Comparison C. In the first text example, ``refined, to a crystalline'' is a meaningful and strongly positive interaction. In the second example, ``is aptly named'' is also a meaningful and strongly positive interaction.
    \label{fig:text_viz_c}
    }
    \vspace{-0.1in}
\end{figure}

\begin{figure}[ht]
    \includegraphics[scale=0.214,trim={0 0 14cm 4cm},clip]{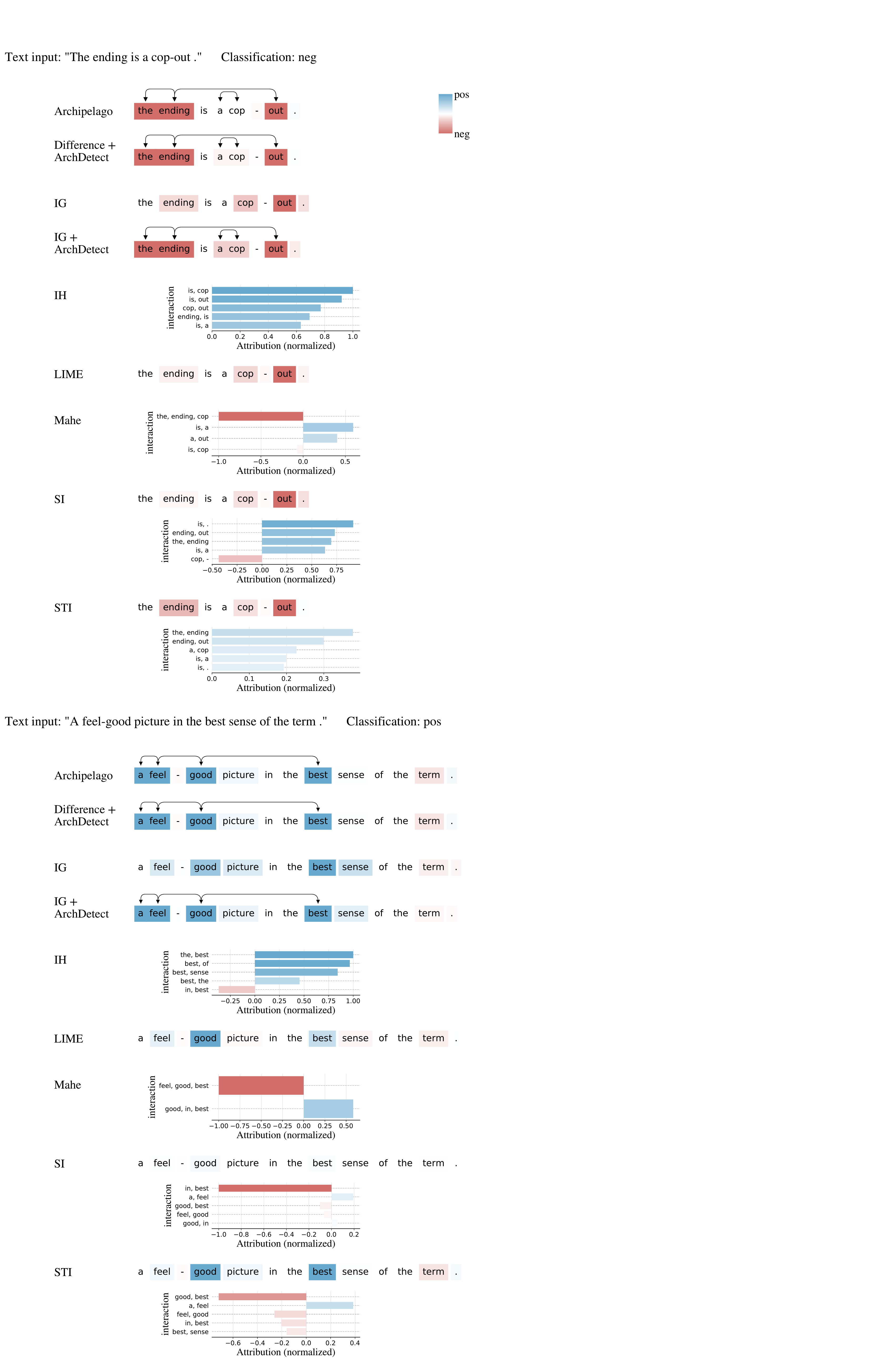}
    
    \caption{Text Viz. Comparison D. In the first text example, ``the ending, out'' is a meaningful and negative interaction. In the second example, ``a feel good, best'' is a meaningful and strongly positive interaction.
    \label{fig:text_viz_d}
    }
    \vspace{-0.1in}
\end{figure}

\begin{figure}[ht]
    \includegraphics[scale=0.214,trim={0 0 14cm 4cm},clip]{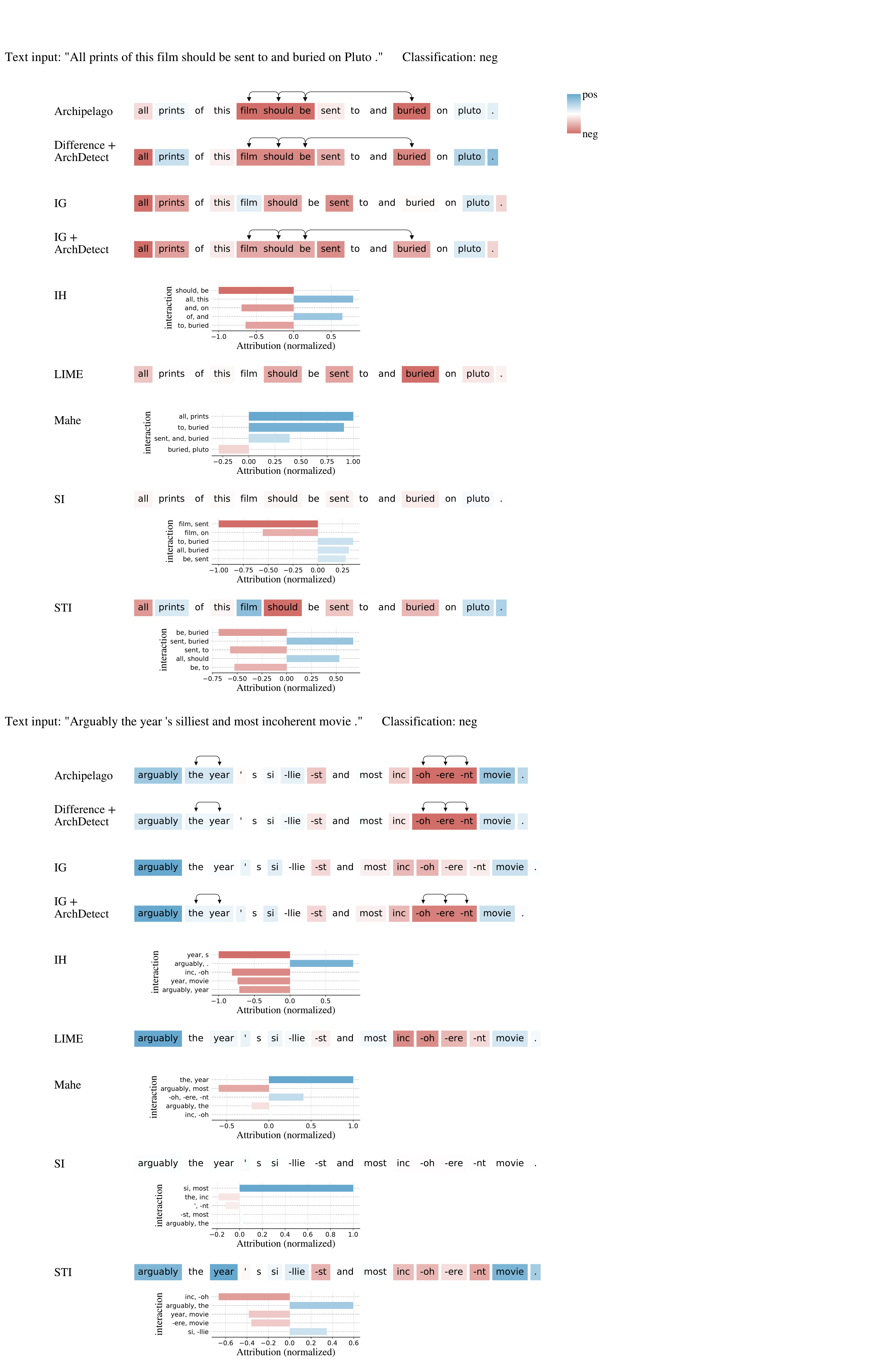}
    
    \caption{Text Viz. Comparison E. In the first text example, ``film should be, buried'' is a meaningful and strongly negative interaction. In the second example, ``-oherent'' belongs to a negative word ``incohorent''.
    \label{fig:text_viz_e}
    }
    \vspace{-0.1in}
\end{figure}

\begin{figure}[ht]
    \includegraphics[scale=0.145,trim={0 0 0cm 0},clip]{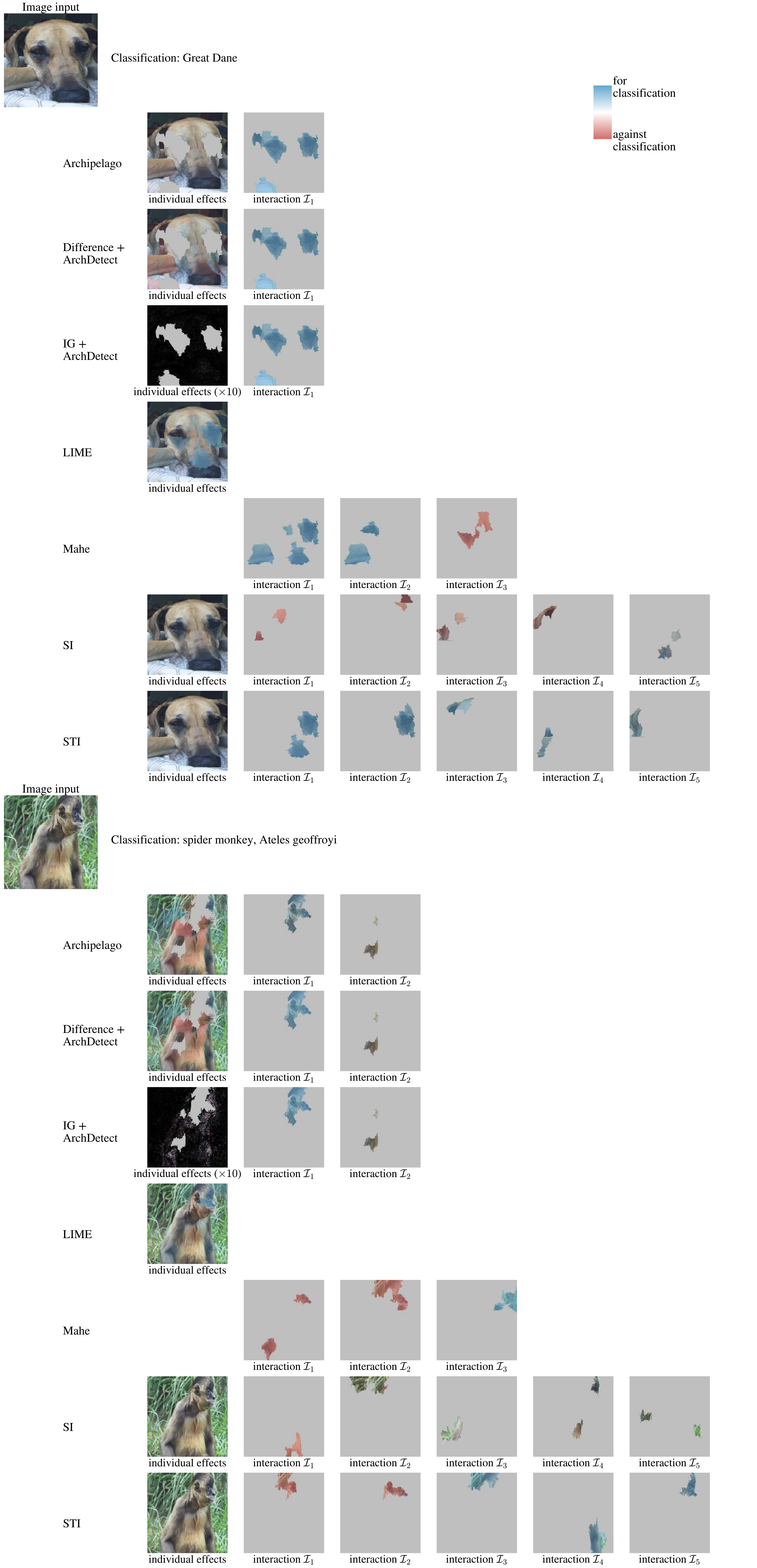}
    
    \caption{Image Viz. Comparison A. In the first image example, the dog's eyes are a meaningful  interaction supporting the classification. In the second example, the monkey's head is also a positive interaction.
    \label{fig:image_viz_a}
    }
    \vspace{-0.1in}
\end{figure}

\begin{figure}[ht]
    \includegraphics[scale=0.145,trim={0 0 0cm 0},clip]{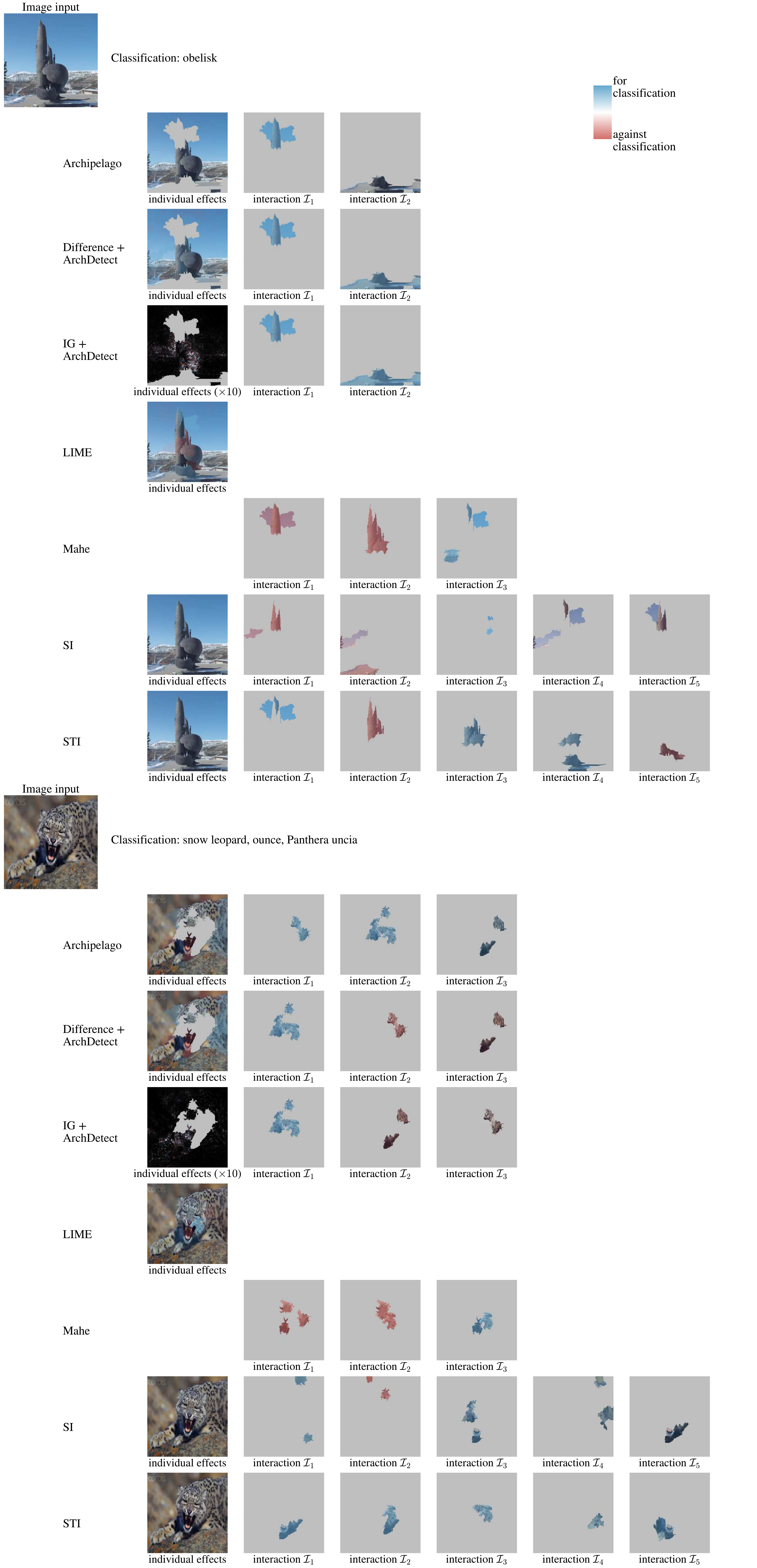}
    
    \caption{Image Viz. Comparison B. In the first image example, the obelisk tip is a meaningful  interaction supporting the classification. In the second example, the leopard's face is also a positive interaction.    
    \label{fig:image_viz_b}
    }
    \vspace{-0.1in}
\end{figure}

\begin{figure}[ht]
    \includegraphics[scale=0.145,trim={0 0 0cm 0},clip]{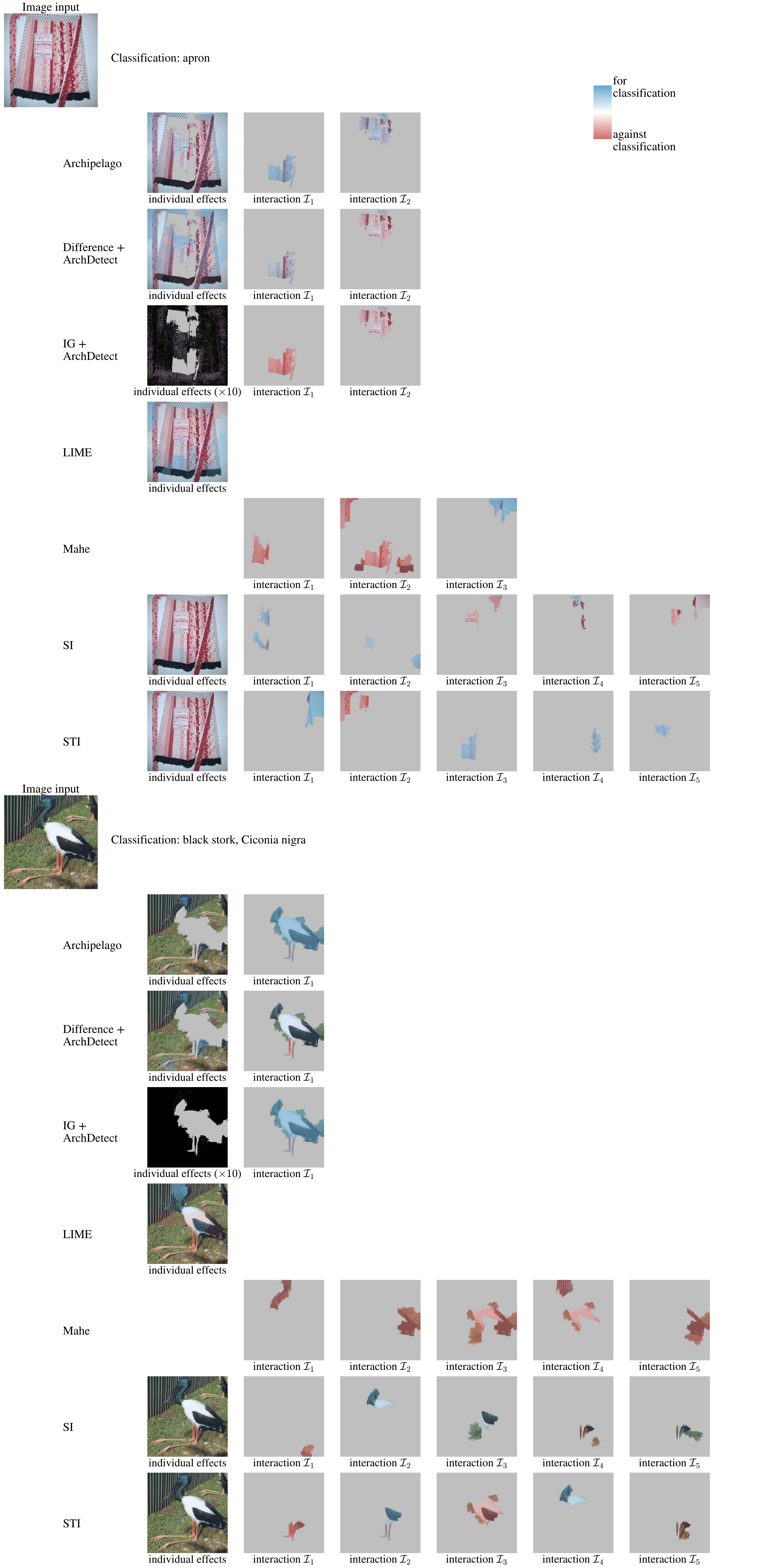}
    
    \caption{Image Viz. Comparison C. In the first image example, different patches of the apron are  interactions supporting the classification. In the second example, the stork's body is an interaction that strongly supports the classification.  
    \label{fig:image_viz_c}
    }
    \vspace{-0.1in}
\end{figure}

\begin{figure}[ht]
    \includegraphics[scale=0.145,trim={0 0 0cm 0},clip]{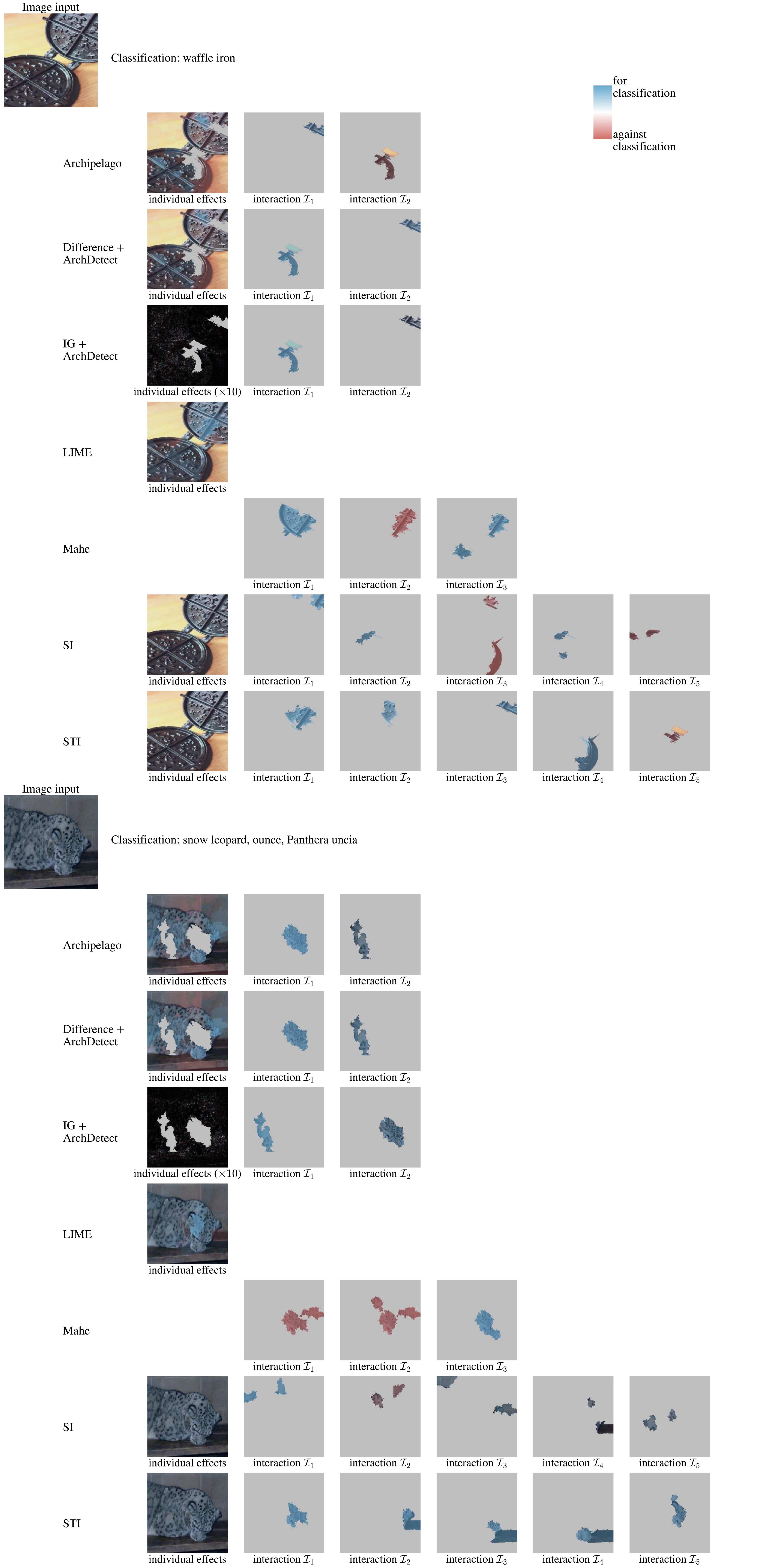}
    
    \caption{
    Image Viz. Comparison D. In the first image example, certain small patches of the waffle iron  interact, one of which supports the classification. In the second example, the leopard's face is the primary positive interaction.
    \label{fig:image_viz_d}
    }
    \vspace{-0.1in}
\end{figure}

\begin{figure}[ht]
    \includegraphics[scale=0.145,trim={0 0 0cm 0},clip]{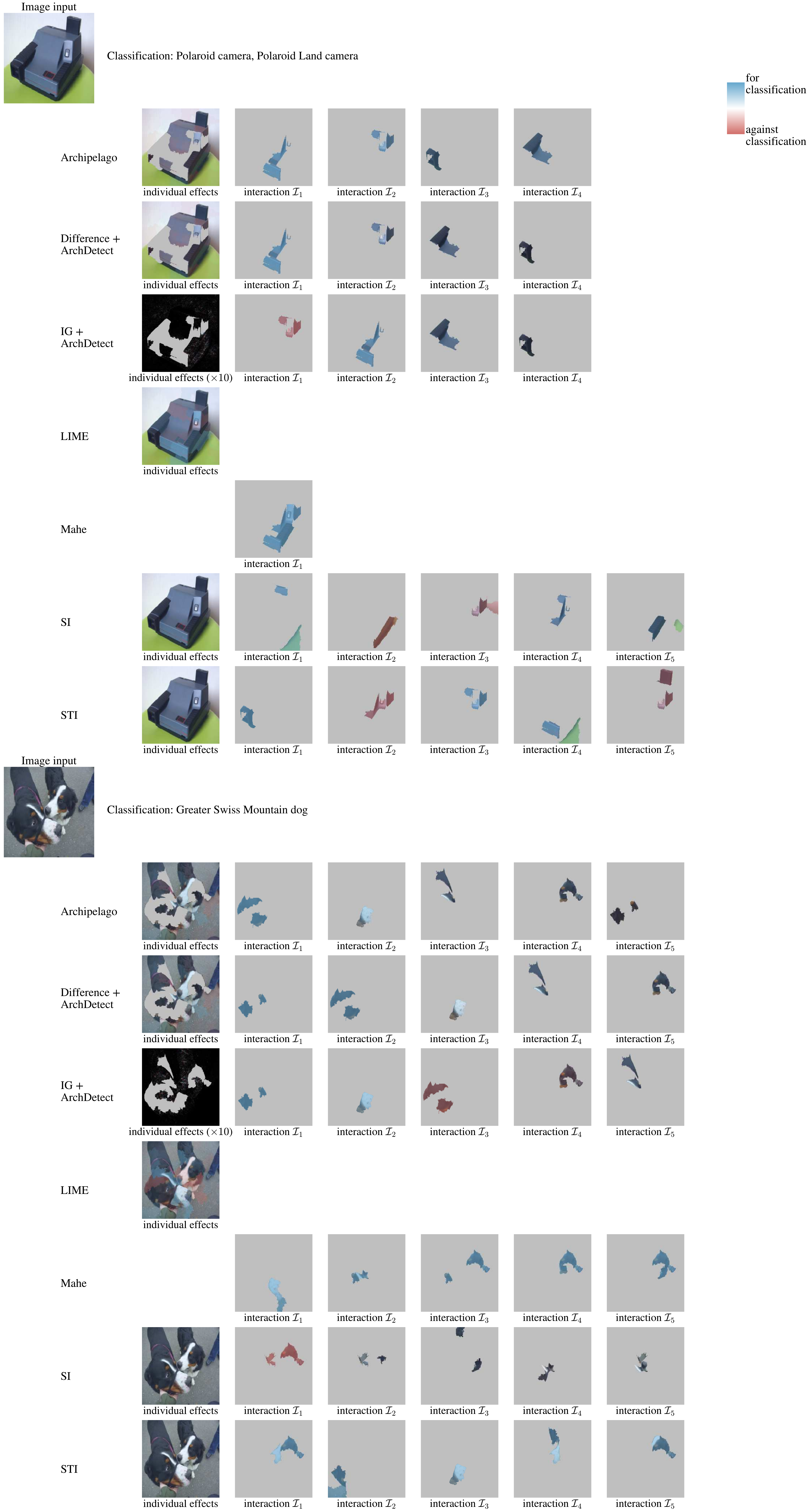}
    
    \caption{
   Image Viz. Comparison E. In the first image example, different parts of the polaroid camera are interactions that positively support the classification. In the second example, the dogs' heads and body are also positive interactions.
    \label{fig:image_viz_e}
   }
    \vspace{-0.1in}
\end{figure}

\section{{\archdetect} Ablation Visualizations}
\label{apd:ablation}

We run an ablation study removing the $\V{x}'_{\setminus\{i,j\}}$ baseline context from~\eqref{eq:efficient} for disjoint interaction detection and examine its effect on visualizations. The visualizations are shown in Fig.~\ref{fig:text_wo_viz_a} for sentiment analysis and Figs.~\ref{fig:image_wo_viz_a} and~\ref{fig:image_wo_viz_b} for image classification. Top-$3$ and top-$5$ pairwise interactions are used in sentiment analysis and image classification respectively before merging the interactions.

\begin{figure}[ht]
    \includegraphics[scale=0.22,trim={0 12cm 16cm 4cm},clip]{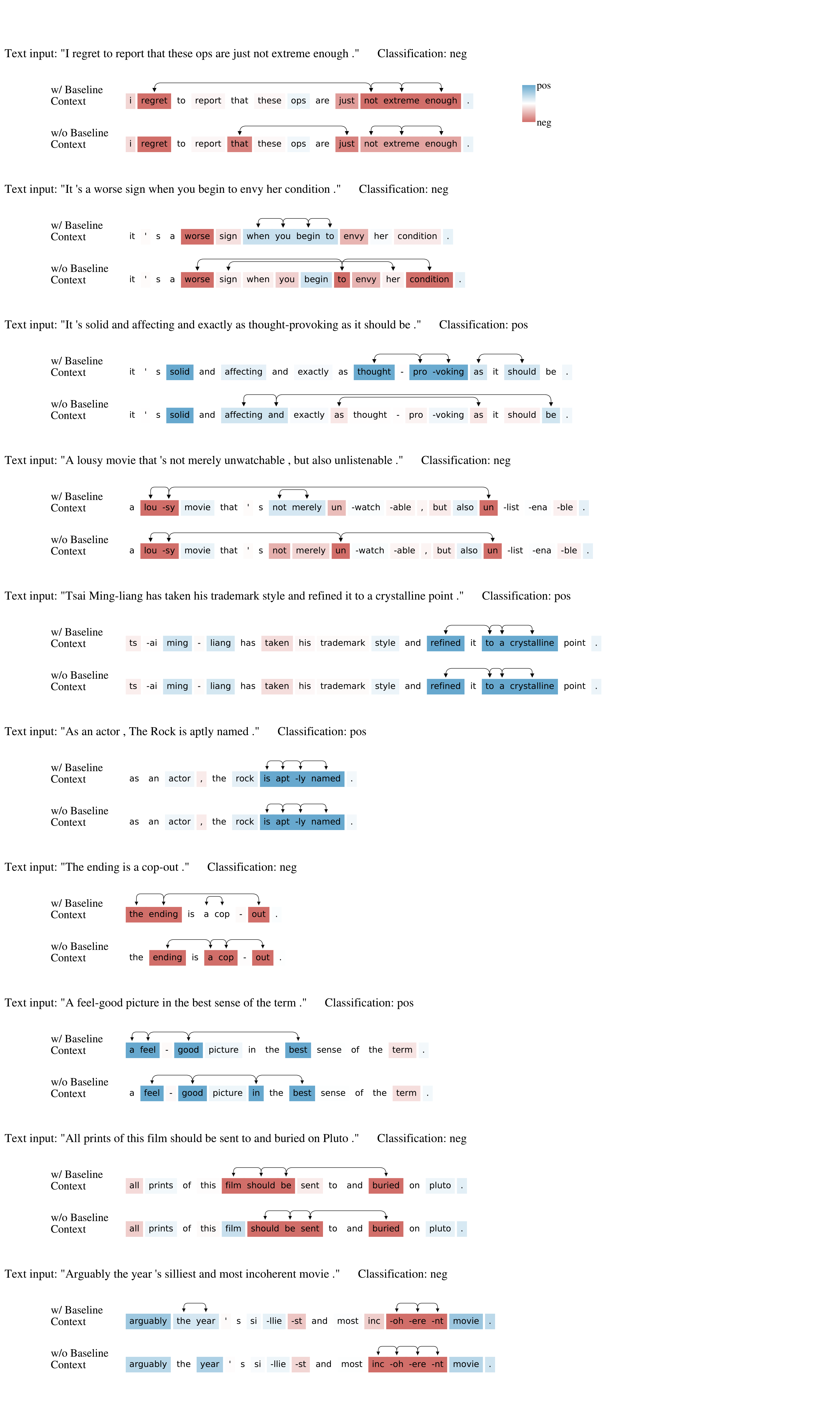}
    
    \caption{Text Viz. with {\archdetect} Ablation. The interactions tend to use more salient words when including the baseline context, which is proposed in {\archdetect}.
    \label{fig:text_wo_viz_a}
    }
    \vspace{-0.1in}
\end{figure}

\begin{figure}[ht]
    \includegraphics[scale=0.15,trim={0 0cm 5cm 0cm},clip]{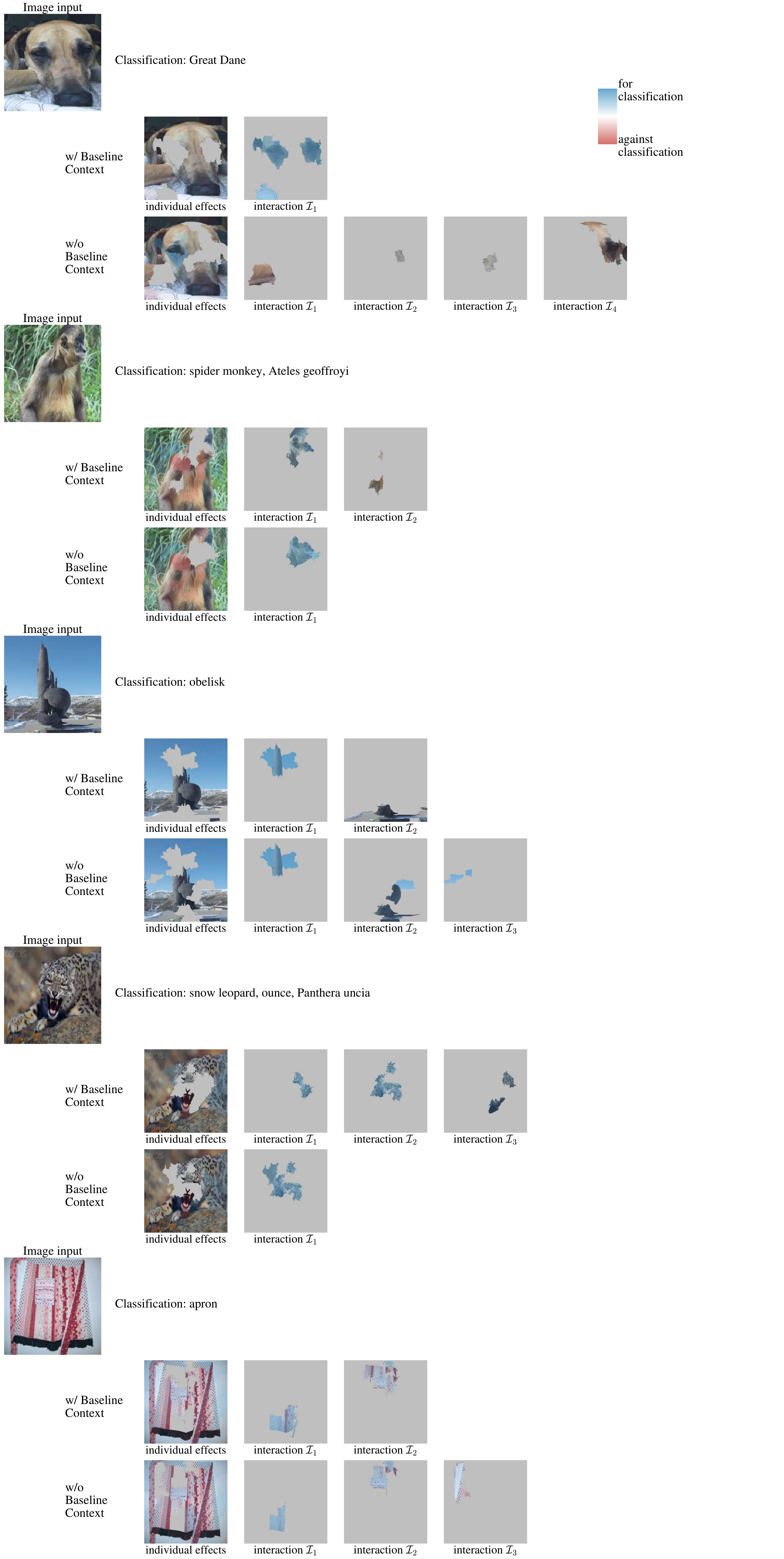}
    
    \caption{Image Viz. with {\archdetect} Ablation A. The interactions tend to focus more on salient patches of the images when including the baseline context, which is proposed in {\archdetect}.
    \label{fig:image_wo_viz_a}
    }
    \vspace{-0.1in}
\end{figure}

\begin{figure}[ht]
    \includegraphics[scale=0.15,trim={0 0cm 5cm 0cm},clip]{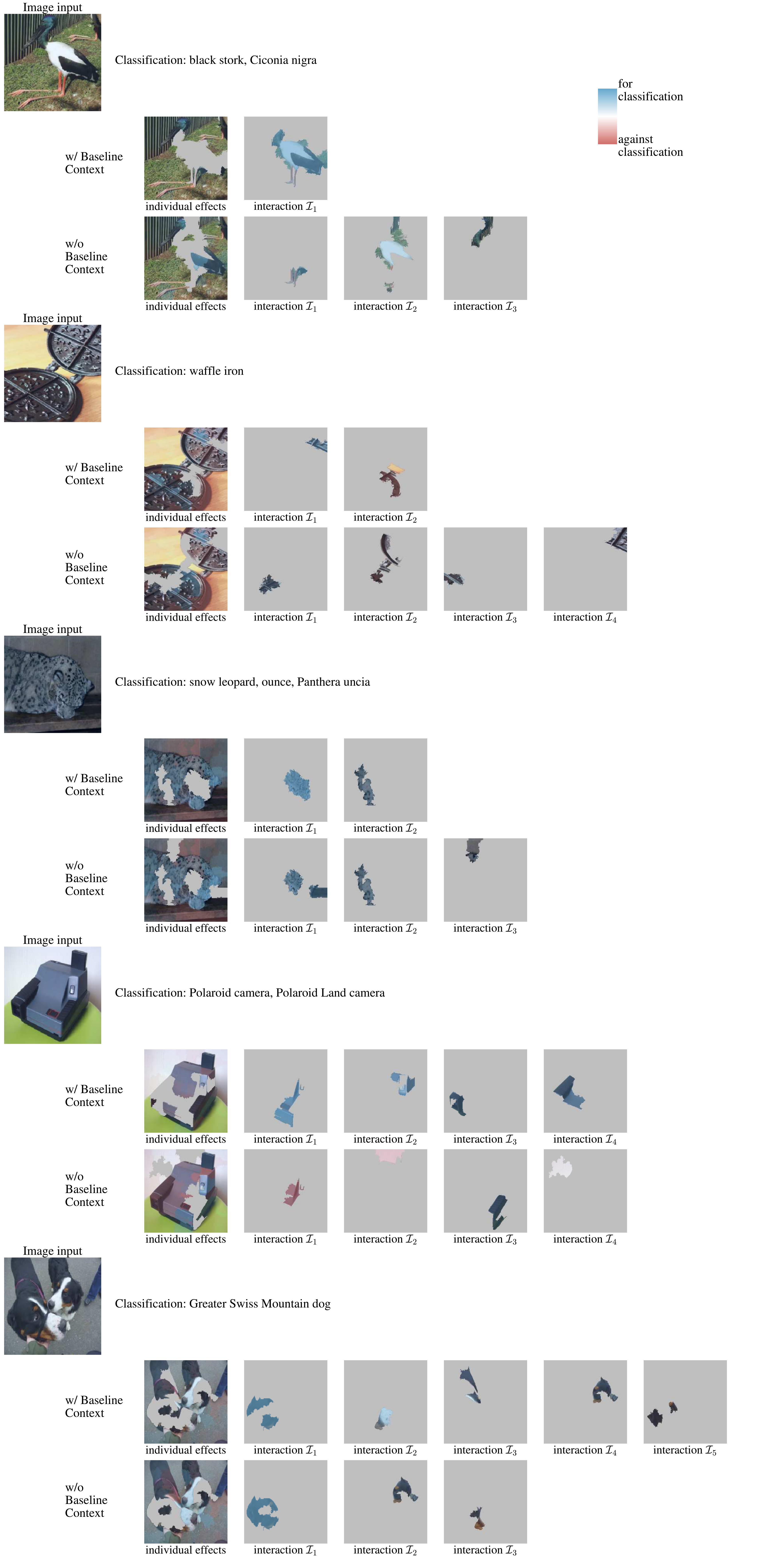}
    
    \caption{Image Viz. with {\archdetect} Ablation B. The interactions tend to focus on salient patches of the images when including the baseline context.
    \label{fig:image_wo_viz_b}
    }
    \vspace{-0.1in}
\end{figure}
\end{appendix}

\end{document}